\definecolor{darkgreen}{rgb}{0.0, 0.5, 0.0}
\definecolor{darkblue}{rgb}{0.0, 0.0, 0.55}
\crefname{assumption}{Assumption}{Assumptions}
\crefname{theorem}{Theorem}{Theorems}
\crefname{corollary}{Corollary}{Corollaries}
\crefname{lemma}{Lemma}{Lemmas}
\crefname{definition}{Definition}{Definitions}
\crefname{proposition}{Proposition}{Proposition}
\crefname{remark}{Remark}{Remarks}
\crefname{examples}{Examples}{Examples}
\crefname{appendix}{Appendix}{Appendices}
\crefname{section}{Section}{Sections}
\crefname{algorithm}{Algorithm}{Algorithms}
\crefname{subassumption}{Assumption}{Assumptions}
\crefname{example}{Example}{Examples}
\crefname{figure}{Figure}{Figures}
\crefname{fact}{Fact}{Facts}
\newenvironment{proofof}[1]{
  \par\noindent\textit{Proof of #1.}\quad\ignorespaces
}{
  \hfill$\qed$\par
}
\newcounter{subassumption}
\renewcommand{\thesubassumption}{\arabic{assumption}\alph{subassumption}}
\newcommand{\algname}[1]{{\sf#1}}
\title{Natural Gradient VI: Guarantees \\ 
for Non-Conjugate Models}
\author{
  Fangyuan Sun$^{\mathrm{1,*}}$\hspace{3em}
  Ilyas Fatkhullin$^{\mathrm{1,2,3}}$\thanks{F.S. is supported by ETH AI Center, I.F. is funded by ETH AI Center Doctoral Fellowship.}\hspace{3em}
  Niao He$^{\mathrm{1}}$\\
  $^{\mathrm{1}}$Department of Computer Science, ETH Zurich\\
  $^{\mathrm{2}}$ETH AI Center\\
  $^{\mathrm{3}}$Industrial and Systems Engineering, Georgia Institute of Technology\\
  \texttt{fansun@student.ethz.ch},\quad \texttt{ilyas.fatkhullin@ai.ethz.ch},\quad \texttt{niao.he@inf.ethz.ch}
}
\begin{document}
\newcommand{\R}{\mathbb{R}}
\newcommand{\N}{\mathbb{N}}
\newcommand{\Z}{\mathbb{Z}}
\newcommand{\1}{\mathbf{1}}
\newcommand{\X}{\mathcal{X}}
\newcommand{\Q}{\mathcal{Q}}
\newcommand{\D}{\mathcal{D}}
\newcommand{\U}{\mathcal{U}}
\newcommand{\B}{\mathcal{B}}
\newcommand{\cO}{{\mathcal O}}
\renewcommand{\Pr}{\mathbb{P}}
\newcommand{\E}{\mathbb{E}}
\newcommand{\sgn}{\mathrm{sgn}}
\newcommand{\KL}[2]{D_{\mathrm{KL}}(#1\,\|\,#2)}
\newcommand{\<}{\langle}
\renewcommand{\>}{\rangle}
\newcommand{\argmin}{\mathop{\mathrm{argmin}}}
\newcommand{\clip}{\mathrm{clip}}
\newcommand{\epig}{\mathrm{epi}_\geq}
\newcommand{\diag}{\mathrm{diag}}
\renewcommand{\todo}{{\color{red}TODO}}
\newtheorem{theorem}{Theorem}[section]
\newtheorem{corollary}{Corollary}[theorem]
\newtheorem{lemma}[theorem]{Lemma}
\newtheorem{definition}{Definition}[section]
\newtheorem{proposition}{Proposition}
\newtheorem{assumption}{Assumption}
\newtheorem{fact}{Fact}
\theoremstyle{definition}
\newtheorem*{remark}{Remark}
\newtheorem{example}{Example}
\newtheorem*{lemma*}{Lemma}

\maketitle
\begin{abstract}
Stochastic Natural Gradient Variational Inference (NGVI) is a widely used method for approximating posterior distribution in probabilistic models. Despite its empirical success and foundational role in variational inference, its theoretical underpinnings remain limited, particularly in the case of non-conjugate likelihoods. While NGVI has been shown to be a special instance of Stochastic Mirror Descent, and recent work has provided convergence guarantees using relative smoothness and strong convexity for conjugate models, these results do not extend to the non-conjugate setting, where the variational loss becomes non-convex and harder to analyze. In this work, we focus on mean-field parameterization and advance the theoretical understanding of NGVI in three key directions. First, we derive sufficient conditions under which the variational loss satisfies relative smoothness with respect to a suitable mirror map. Second, leveraging this structure, we propose a modified NGVI algorithm incorporating non-Euclidean projections and prove its global non-asymptotic convergence to a stationary point. Finally, under additional structural assumptions about the likelihood, we uncover hidden convexity properties of the variational loss and establish fast global convergence of NGVI to a global optimum. These results provide new insights into the geometry and convergence behavior of NGVI in challenging inference settings.
\end{abstract}

\section{Introduction}\label{sec:introduction}
Variational Inference (VI) is a powerful framework for approximating Bayesian posteriors by casting inference as an optimization problem \citep{jordan1999introduction, blei2017variational}. Unlike sampling-based approaches such as Markov chain Monte Carlo (MCMC; \citep{metropolis1953equation}) VI enables scalable posterior approximation through stochastic optimization, often with significant computational advantages. In practice, however, the exact gradient of the variational objective—such as the evidence lower bound (ELBO)—is rarely available in closed form, especially for complex or non-conjugate models. Consequently, gradients must be estimated from data using Monte Carlo sampling. Since the advent of Black-Box VI \citep{wingate2013automated,titsias2014doubly,ranganath2014black}, this form of gradient-based stochastic optimization in Euclidean parameter spaces has become the de facto standard in practical applications. \par
While gradient descent in the Euclidean space of parameters has become the standard tool in modern VI, it ignores the intrinsic geometry of the space of probability distributions. This has motivated the development of \textit{Natural Gradient Variational Inference (NGVI)} \citep{honkela2007natural}, which replaces standard gradients with natural gradients \citep{amari1998natural} that account for the underlying information geometry of the variational family. NGVI follows the steepest descent direction (on average) with respect to the Kullback-Leibler (KL) divergence, rather than the Euclidean norm. Empirical evidence suggests that this geometry-aware approach leads to faster or more stable convergence in various applications, including Bayesian neural networks \citep{zhang2018noisy,osawa2019practical} and probabilistic filtering \citep{salimbeni2018natural, lan2025noise}. However, theoretical understanding of NGVI remains limited—particularly in the case of \textit{non-conjugate models}, where the posterior is not contained in the variational family and the optimization landscape becomes non-convex. \par
Recent work \citep{wu2024understanding} established the first non-asymptotic convergence guarantees for stochastic natural gradient descent (\algname{SNGD}), a basic variant of NGVI, under conjugate likelihoods, by framing \algname{SNGD} as a special case of stochastic mirror descent (\algname{SMD}; \citep{nemirovskij1983problem}). Their analysis hinges on the strong convexity and smoothness of the variational objective in the Bregman geometry induced by KL divergence. Unfortunately, these guarantees fail to extend to non-conjugate models such as logistic or Poisson regression, where the objective becomes non-convex and the geometry is not well-behaved. For instance, \cref{fig:poisson 0} indicates that 1-smoothness of the objective as in conjugate models does not hold globally (see \cref{app:descent lemma} for more detailed explanation). This observation serves as a motivation for our subsequent analysis of relative smoothness in non-conjugate models.
\begin{figure}
        \centering
    \subfloat{
        \includegraphics[width=0.36\textwidth]{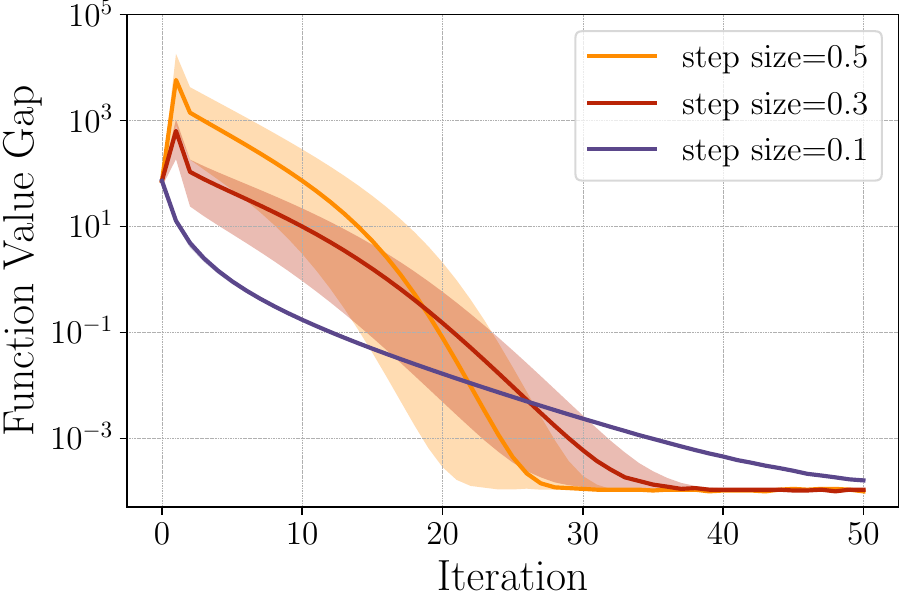}
    }\hspace{2em}
    \subfloat{
        \includegraphics[width=0.36\textwidth]{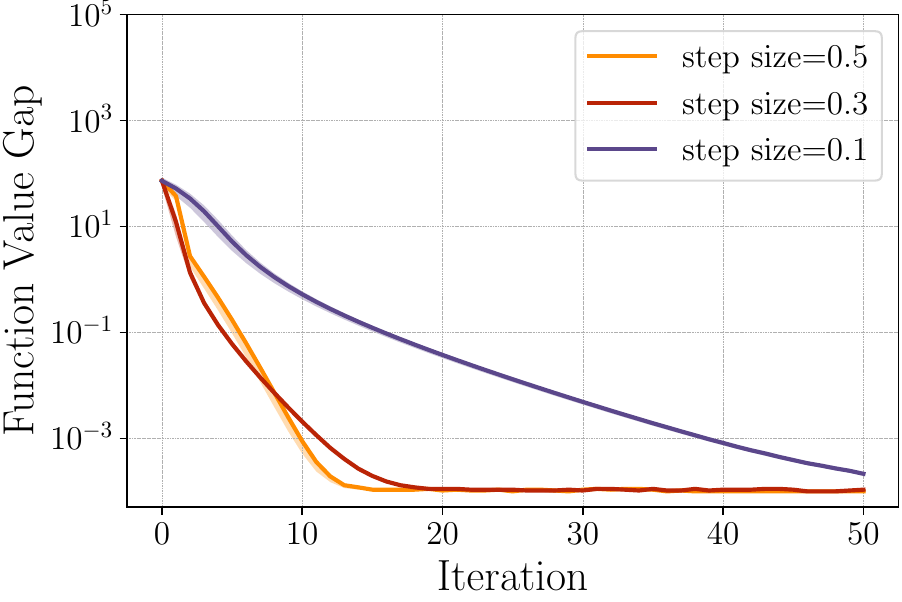}
    }
    \caption{Convergence of \algname{SNGD} on Poisson regression with different initialization. The function value gap refers to the difference between the objective value at iteration \(t\) and the optimal value; performance is averaged over 10 runs for each algorithm. In the \textit{left panel}, \algname{SNGD} is initialized with \(\sigma_0^2=2\). We observe instability at the initial iteration and slow convergence with step sizes 0.5 and 0.3. In contrast, with a different initialization \(\sigma_0^2=0.4\) shown in the \textit{right panel}, we observe stable behavior and fast convergence for larger step sizes. }
    \label{fig:poisson 0}
    \end{figure}
\paragraph{Contributions.} This work develops a theoretical foundation for stochastic natural gradient variational inference (NGVI) in the \textit{non-conjugate} setting. Our main contributions are as follows:\par
    $\bullet$  \textbf{Relative Smoothness of the Variational Objective.} We derive sufficient conditions under which the negative ELBO is smooth relative to the Bregman geometry induced by the KL divergence on a compact set of parameters. Our results apply to a broad class of non-conjugate models in mean-field parameterization, including logistic regression, and provide explicit smoothness constants (polynomial in the problem dimension) over arbitrary compact subsets of the parameter space.\par
    $\bullet$ \textbf{Projected Stochastic NGVI Algorithm.} Based on the relative smoothness analysis, we introduce a projected variant of NGVI called \algname{Proj-SNGD} that enforces updates within a compact parameter domain using non-Euclidean projections. This approach leverages the dual structure of exponential families and admits efficient implementation in the mean-field Gaussian case. The resulting algorithm improves numerical stability while preserving the geometric fidelity of the natural gradient updates.\par
    $\bullet$ \textbf{Convergence Guarantees via Mirror Descent.} We analyze \algname{Proj-SNGD} through the lens of stochastic mirror descent (\algname{SMD}), extending recent non-convex convergence theory \citep{fatkhullin2024taming}. We establish non-asymptotic convergence to a stationary point at rate \( \cO(1/\sqrt{T}) \), and further show that under concavity of the log-likelihood, a hidden convexity structure \citep{fatkhullin2023stochastic} allows us to prove global convergence at rate \( \cO(1/T) \).\par

By extending convergence guarantees to the non-conjugate regime, this work addresses a fundamental challenge in the theory of NGVI and provides a principled framework for variational inference in complex, non-linear Bayesian models.
\subsection{Related Work}

In the Euclidean setup, A large body of prior work provides convergence guarantees of standard VI algorithms. Building on smoothness and convexity properties established in \citep{titsias2014doubly,domke2020provable}, authors of \citep{domke2023provable} propose two gradient descent-based algorithms that achieve convergence rates of $\cO(1/\sqrt{T})$ using convexity, and $\cO(1/T)$ using strong convexity. Concurrently, Kim et al.~\citep{kim2023convergence} analyze the convergence behavior of gradient-based VI under various parameterization and prove similar convergence bounds. Furthermore, a linear rate in the case of conjugate likelihoods is established in \citep{kim2024linear}. In the non-Euclidean setup, Wu and Gardner~\citep{wu2024understanding} establish the first $\cO(1/T)$ convergence rate of NGVI assuming the model is conjugate. For general overview on natural gradient descent, we refer to the survey by Martens~ \citep{martens2020new}.

Since its introduction in \citep{amari1998natural}, natural gradient descent has been extensively studied in the context of variational inference \citep{hensman2012fast, hoffman2013stochastic, lyu2021black}, particularly for non-conjugate models \citep{khan2015faster, khan2017conjugate, salimbeni2018natural, tang2019variational}. Notably, \citep{lin2019fast} and \citep{arenz2023a} extend natural gradient methods to mixtures of exponential-family variational distributions, while \citep{cherief2019generalization} and \citep{jones2024bayesian} focus on their application in online learning scenarios. In reinforcement learning, a series of works has focused on natural policy gradient \citep{kakade2001natural,agarwal2021theory,xiao2022convergence,sun2023provably}. Natural gradient variational inference (NGVI) has also been applied across various domains, including the training of variational Bayesian neural networks \citep{zhang2018noisy, osawa2019practical}, Kalman filtering \citep{lan2025noise}, and multi-modal optimization \citep{minh2025natural}. 

There exists an extensive body of work on the convergence of stochastic mirror descent (\algname{SMD}) under convexity assumptions \citep{ nemirovski2009robust}. Later, \citep{lan2012optimal,allen2014linear} show that mirror descent can be accelerated, similar to Nesterov's method. \textit{Relative smoothness}, which was introduced concurrently in \citep{lu2018relatively,bauschke2017descent}, has become an important tool in analysis of \algname{SMD} \citep{hanzely2021fastest, vural2022mirror}. In the non-convex setting, \citep{ghadimi2016mini} derive an $\cO(1/\sqrt{T})$ convergence rate to a stationary point, albeit under very strong assumptions and with large mini-batches. This requirement is relaxed by \citep{zhang2018convergence} and \citep{davis2018stochastic}, who establish similar convergence guarantees without relying on mini-batching. More recently, Fatkhullin and He~\citep{fatkhullin2024taming} further relax the smoothness assumptions on the distance generating function using a distinct proof technique and improved convergence criterion. Non-convex \algname{SMD} has also been studied in the context of reinforcement learning, where the value function is typically highly non-convex but admits certain structures analogous to our problem, see, e.g., \citep{lan2023policy}. Additionally, alternative variance assumptions for \algname{SMD} are explored by \citep{hendrikx2024investigating}.

\section{Background}\label{sec:background}

\paragraph{Notation.} We write $\|\cdot\|$ for the Euclidean norm of a vector or the operator norm of a matrix, and $\<\cdot,\cdot\>$ to denote the standard inner product in Euclidean space. For vectors $a, b \in \mathbb{R}^d$, let $a \odot b$ denote their entrywise (Hadamard) product. Let $\mathcal{S}_+^d$ denote the cone of $d \times d$ symmetric positive definite matrices. For symmetric matrices $A, B \in \mathbb{R}^{d \times d}$, we write $A \succeq B$ if $A - B \in \mathcal{S}_+^d$. For distributions $p$ and $q$, we write $\KL{q}{p}=\E_{q(z)} \log(q(z)/p(z))$ for the Kullback–Leibler divergence from $p$ to $q$. When needed, we write $q(z;\theta)$ to emphasize the parameterization of the distribution.

\subsection{Variational Inference and Exponential Families}\label{sec:background_vi}
Let $z \in \mathbb{R}^d$ be a latent variable and $\mathcal{D} = \{(x_i, y_i)\}_{i=1}^n$ be the observed dataset. Given a prior $p(z)$ and likelihood model $p(\mathcal{D} \,|\, z)$, the goal of Bayesian inference is to compute the posterior $p(z \,|\, \mathcal{D}) \propto p(z)p(\mathcal{D} \,|\, z)$. Since this posterior is generally intractable, variational inference (VI) approximates it by minimizing the KL divergence from $p(z \,|\, \mathcal{D})$ to a tractable distribution $q(z)$ within a family $\mathcal{Q}$:
\[
\min_{q \in \mathcal{Q}} \KL{q(z)}{p(z \,|\, \mathcal{D})}.
\]
This is equivalent to minimizing the negative evidence lower bound (negative ELBO) \citep{bishop2006pattern}:
\begin{equation}\label{negative elbo}
    \ell(q) = -\mathbb{E}_{q(z)}[\log p(\mathcal{D} \,|\, z)] + \KL{q(z)}{p(z)}.
\end{equation}

In this work, we assume $\mathcal{Q}$ is a minimal exponential family \citep{wainwright2008graphical}, i.e., each $q \in \mathcal{Q}$ has the form
\[
q(z;\eta) = h(z) \exp\left( \<\phi(z), \eta\> - A(\eta) \right),
\]
where $\eta$ is the natural parameter, $\phi(z)$ is the sufficient statistic, and $A(\eta)$ is the log-partition function. The dual expectation parameter is defined as $\omega := \mathbb{E}_{q(z;\eta)}[\phi(z)]$. 

Let \(\mathcal{D}_A \coloneqq \{\eta \in \mathbb{R}^d : A(\eta) < \infty\}\) be the natural parameter domain, and let \(\Omega\) be the set of corresponding expectation parameters. The gradient map \(\nabla A: \mathcal{D}_A \to \Omega\) is bijective, with inverse \(\nabla A^*\), where \(A^*(\omega) := \sup_{\eta \in \mathcal{D}_A} \left( \<\eta, \omega\> - A(\eta) \right) \) is the convex conjugate of the log-partition function. This duality ensures that for any distribution \( q \in \mathcal{Q} \), parameterized by either \(\eta\) or \(\omega\), we have \[\nabla A(\eta) = \omega, \qquad \nabla A^*(\omega) = \eta.\]
For a comprehensive overview of exponential families and duality, we refer readers to \citep{wainwright2008graphical}.
\begin{remark}
    With a slight abuse of notation, we denote the negative ELBO as $\ell(\eta) := \ell(q(z;\eta))$ or $\ell(\omega) := \ell(q(z;\eta))$ when working in either parameterization.
\end{remark}
\paragraph{Example: Gaussian Family.}
For the multivariate Gaussian family $q = \mathcal{N}(\mu, \Sigma)$, the natural parameter is $\eta = (\lambda, \Lambda)$ with $\lambda = \Sigma^{-1} \mu$ and $\Lambda = -\tfrac{1}{2} \Sigma^{-1}$, and the expectation parameter is $\omega = (\xi, \Xi)$ with $\xi = \mu$ and $\Xi = \Sigma + \mu\mu^\top$. The standard parameterization $(\mu, \Sigma)$ and Cholesky form $(\mu, C)$ ($C$ is the Cholesky factor of $\Sigma$) are often used in practice.

\subsection{NGVI as Stochastic Mirror Descent}\label{sec:background_ngvi}

Stochastic mirror descent (\algname{SMD}) generalizes standard SGD to non-Euclidean geometries by replacing the squared Euclidean distance with a Bregman divergence.

\begin{definition}[Bregman Divergence]
Given a strictly convex and differentiable function $\Phi : \U \to \mathbb{R}$, the Bregman divergence between $u, v \in \U$ is
\(
D_\Phi(u, v) := \Phi(u) - \Phi(v) - \<\nabla \Phi(v), u - v\>.
\)
\end{definition}

The update rule for \algname{SMD} on a differentiable objective $\ell : \mathcal{U} \to \mathbb{R}$ is in
\begin{align}
    \text{primal form:}& \qquad u_{t+1} = \argmin_{u \in \mathcal{U}} \gamma_t \< \hat\nabla \ell(u_t), u \> +  D_\Phi(u, u_t), \qquad \text{or equivalently in}\label{smd primal}\\
    \text{dual form:}& \qquad \nabla \Phi(v_{t+1}) = \nabla \Phi(u_t) - \gamma_t \hat\nabla \ell(u_t), \quad u_{t+1} = \argmin_{u \in \mathcal{U}} D_\Phi(u, v_{t+1}), \label{smd dual}
\end{align}
where $\hat\nabla \ell(u_t)$ is a stochastic gradient and $\gamma_t$ is a step size.

In variational inference with exponential families, the natural parameter \(\eta\) admits a geometry governed by the Fisher information matrix \(\mathcal{I}(\eta) = \nabla^2 A(\eta)\), where \(A(\eta)\) is the log-partition function. Stochastic NGVI in this space preconditions the gradient by the inverse Fisher matrix \citep{raskutti2015information}. Applied to the variational objective \(\ell\), the update becomes:
\begin{equation}\label{pre ngvi update}
\eta_{t+1} = \eta_t - \gamma_t \mathcal{I}(\eta_t)^{-1} \nabla \hat \ell(\eta_t) = \eta_t - \gamma_t \hat \nabla \ell(\omega_t),
\end{equation}

where \(\omega_t = \nabla A(\eta_t)\) is the corresponding expectation parameter, see \Cref{app:derivation ngvi} for details. Using the duality between \(\eta\) and \(\omega\), and the identity \(\eta = \nabla A^*(\omega)\), we can re-write the update in the expectation parameter space as
\[
\text{\algname{SNGD}:} \qquad \nabla A^*(\omega_{t+1}) = \nabla A^*(\omega_t) - \gamma_t \hat\nabla \ell(\omega_t).
\]
This matches the update rule of stochastic mirror descent with mirror map \(A^*\). Hence, NGVI can be viewed as mirror descent over the expectation parameter \(\omega\), with geometry induced by KL divergence. Indeed, for exponential families, the Bregman divergence associated with \(A^*\) coincides with KL divergence: 
\(
D_{A^*}(\omega, \omega') = \KL{q(z; \omega)}{q(z; \omega')} 
\), see e.g., \citep{wu2024understanding}.

\subsection{Mean-field Parameterization}
In the following, we assume that the prior is standard Gaussian, \( p(z) = \mathcal{N}(0, I) \), and the variational family \( \mathcal{Q} \) is the mean-field Gaussian family, with mean \( \mu = (\mu_i)_{1 \leq i \leq d} \) and diagonal covariance matrix \( \Sigma = \mathrm{diag}((\sigma_i)_{1 \leq i \leq d}) \). The corresponding expectation parameter \( \omega = (\xi, \Xi) \) is given by \( \xi = \mu \) and \( \Xi = \Sigma + \mathrm{diag}(\mu \odot \mu) \), defined over the domain
\[
\Omega = \left\{ (\xi, \Xi) : \xi \in \mathbb{R}^d,\ \Xi - \mathrm{diag}(\xi \odot \xi) \in \mathcal{S}_+^d\ \text{and is diagonal} \right\}.
\]
This parameterization is widely adopted due to its tractability and interpretability, allowing efficient coordinate-wise updates while still capturing key aspects of the posterior distribution. Moreover, it often provides a favorable trade-off between computational complexity and approximation quality in high-dimensional settings. It has been extensively studied in recent theoretical works, including applications to Bayesian deep neural networks \citep{castillo2024posterior}, high-dimensional Bayesian linear models \citep{celentano2023mean}, and particle-based variational inference \citep{du2024particle}.

\section{Landscape Properties of Non-Convex NGVI}\label{sec:landscape}
In this section, we investigate the landscape properties of variational objective $\ell(\omega)$ with a particular focus on the properties useful for establishing non-asymptotic convergence of NGVI. First, we note that $\ell(\omega)$ is coercive in the expectation parameter, i.e., it grows to infinity $\ell(\omega) \rightarrow \infty$ when the parameters approach the boundary of the set $\omega \rightarrow \partial \Omega$, see \Cref{app:coerciveness} for more details. This is a useful property which guarantees the existence of the minima of $\ell(\omega)$. Second, it is known that $\ell(\omega)$ is non-convex w.r.t.\,$\omega$ in the general non-conjugate likelihood setting, i.e., when likelihood does not belong to the variational family $\Q$, see \Cref{app:nonconvex examples} for details. This non-convexity is the main challenge for establishing non-asymptotic convergence of NGVI. In what follows we aim to use modern tools from non-convex optimization, which will allow us to provide a better understanding of NGVI landscape and characterize its convergence. The rest of the section is organized as follows. In \Cref{sec:relative_smoothness}, we will prove that, despite non-convexity, both the lower and upper curvature of $\ell(\omega)$ are bounded in the non-Euclidean geometry following the formalism of relative weak convexity and relative smoothness \citep{lu2018relatively}. In \Cref{sec:hidden_convexity}, we uncover the hidden convexity properties of the objective and connect them with Polyak-Lojasiewicz (PL) condition \citep{polyak1963gradient,lojasiewicz1963propriete}, which allows us to show fast convergence of NGVI despite non-convexity. 

\subsection{Relative Smoothness of Variational Objective}\label{sec:relative_smoothness}
We start with the definition of $\alpha$-$\beta$ relative smoothness \citep{bauschke2017descent,lu2018relatively}, which is a generalization of smoothness and weak convexity to Bregman geometry,
\begin{definition}\label{def:relative smoothness}
    Let $\Phi:\U\to\R$ be a differentiable and strictly convex function on a convex set $\U$. A differentiable function $\ell$ is said to be $\alpha$-$\beta$ smooth relative to $\Phi$ for some $\alpha\in\R,\beta>0$ if \[\alpha D_\Phi(v, u)\leq \ell(v)-\ell(u)-\<\nabla \ell(u), v-u\>\leq \beta D_\Phi(v,u),\quad \forall\, u, v\in\U.\]
    If $\alpha\geq 0$, $\ell$ is also called $\alpha$-strongly convex relative to $\Phi$. 
\end{definition}
In our problem we will mostly deal with the case of negative curvature $\alpha < 0$. Then if $L=\beta=-\alpha$, we refer to $\ell(\cdot)$ as being $L$-smooth relative to $\Phi$, or relative smooth with parameter $L$.

According to Proposition 1.1 of \citep{lu2018relatively}, $\alpha$-$\beta$ relative smoothness is equivalent to \begin{equation}\label{eq:smoothness condition in matrix}
    \alpha \nabla^2 \Phi(u)\preceq \nabla^2 \ell(u)\preceq \beta\nabla^2\Phi(u), \qquad \text{for any } u \in \mathcal U .
\end{equation}
In NGVI setting, we hope to find conditions under which negative ELBO objective \eqref{negative elbo} 
\begin{equation}\label{objective}
    \ell(\omega)=\quad \underbrace{-\E_{q}[\log p(\D\,|\,z)]}_{\text{log-likelihood term}}\quad +\quad \underbrace{\KL{q(z)}{p(z)}}_{\text{KL divergence term}}
\end{equation} is $L$-relative smooth on $\Omega$ w.r.t. $A^*$.\par

\paragraph{Relative Smoothness of KL Divergence Term}
Since $A^*(\xi, \Xi)=-\frac{1}{2}\log\det( \Xi - \diag(\xi\odot \xi))$ (see \cref{app:facts gaussian}), and the KL divergence between Gaussian distributions admits a closed-form \[\KL{q(z;\xi, \Xi)}{p(z)}=-\frac{1}{2}\log\det (\Xi - \diag(\xi\odot\xi))+\frac{1}{2}\mathrm{Tr}(\Xi),\] the Hessians of the two functions coincide. Thus KL divergence term is $1$-$1$ smooth relative to $A^*$.
\paragraph{Relative Smoothness of Log-Likelihood Term}
   Now we consider the log-likelihood term in \eqref{objective}. We explain the intuition for the derivation in the univariate case and state the general result in high dimensional case, deferring the formal proof to \Cref{thm:sufficient smoothness}.\footnote{For simplicity, we assume that $\D=\{(x, y)\}$ only contains a single data point. If there are $n$ data points and each $-\E_q[\log p(x_i, y_i|z)]$ is $\alpha_i$-$\beta_i$-relative smooth, then $-\E_q[\log p(\D|z)]$ is $\sum_{i=1}^n \alpha_i$-$\sum_{i=1}^n \beta_i$-relative smooth, since \(-\E_q[\log p(\D|z)]=-\sum_{i=1}^n\E_q[\log p(x_i, y_i|z)].\)}\par
    In order to compute the Hessian matrix, $\nabla^2 \E_{q(z;\omega)}[f(z)]$, for some four times differentiable function $f$, we need the following useful lemma \citep{price1958useful, bonnet1964transformations, opper2009variational}.
    \begin{lemma}[Bonnet's and Price's Gradients]\label{lemma:gradients}
    Let $q(z)$ be the probability density function (PDF) of the multivariate Gaussian, $\mathcal{N}(\mu, \Sigma)$, and assume $f$ is twice continuously differentiable, then \begin{align*}
        \nabla_\mu \E_{q(z;\mu,\Sigma)}[f(z)]=\E_{q}[\nabla f(z)],\qquad \nabla_\Sigma \E_{q(z;\mu,\Sigma)}[f(z)]=&\frac{1}{2}\E_{q}[\nabla^2 f(z)].
    \end{align*}
\end{lemma}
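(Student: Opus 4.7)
The plan is to prove the two identities separately. Bonnet's gradient in $\mu$ will follow from a translation change-of-variables that removes the parameter dependence from the expectation measure, while Price's gradient in $\Sigma$ will follow from a density-level identity linking $\nabla_\Sigma q$ to the spatial Hessian $\nabla_z^2 q$, combined with integration by parts against $f$. Throughout, I will implicitly assume sufficient regularity on $f$ (polynomial growth of $f$ and its first two derivatives is enough) so that both differentiation under the integral sign and the boundary terms in integration by parts can be controlled via the rapid decay of the Gaussian density.

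For the $\mu$-derivative, I would substitute $z = \mu + w$ with $w \sim \mathcal{N}(0,\Sigma)$ to obtain
\[\E_{q(z;\mu,\Sigma)}[f(z)] \;=\; \E_{w \sim \mathcal{N}(0,\Sigma)}[f(\mu+w)].\]
Since the measure on the right is independent of $\mu$, dominated convergence allows differentiating inside the expectation, giving $\nabla_\mu \E_q[f(z)] = \E_w[\nabla f(\mu+w)] = \E_q[\nabla f(z)]$.

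For the $\Sigma$-derivative, I would first establish the key density identity $\nabla_\Sigma q(z;\mu,\Sigma) = \tfrac{1}{2}\nabla_z^2 q(z;\mu,\Sigma)$ as a pointwise equality of symmetric matrix-valued functions of $z$. Starting from $\log q = -\tfrac{1}{2}\log\det\Sigma - \tfrac{1}{2}(z-\mu)^\top \Sigma^{-1}(z-\mu) + \text{const}$, direct differentiation in $\Sigma$ gives
\[\nabla_\Sigma q(z) \;=\; \tfrac{q(z)}{2}\bigl(-\Sigma^{-1} + \Sigma^{-1}(z-\mu)(z-\mu)^\top \Sigma^{-1}\bigr),\]
while the identities $\nabla_z \log q = -\Sigma^{-1}(z-\mu)$ and $\nabla_z^2 \log q = -\Sigma^{-1}$, combined with $\nabla_z^2 q = q\,[\nabla_z^2 \log q + (\nabla_z \log q)(\nabla_z \log q)^\top]$, yield the same expression for $\tfrac{1}{2}\nabla_z^2 q$. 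Substituting into $\nabla_\Sigma \E_q[f(z)] = \int f(z)\,\nabla_\Sigma q(z)\,dz$ and integrating by parts twice in $z$, with boundary terms vanishing by Gaussian decay, gives $\tfrac{1}{2}\int q(z)\,\nabla^2 f(z)\,dz = \tfrac{1}{2}\E_q[\nabla^2 f(z)]$.

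The main obstacle I expect is the rigorous justification of the exchange of differentiation and integration, and in particular the vanishing of the boundary terms in the double integration by parts, which requires growth conditions on $f$ matched against the Gaussian tail; this is routine but needs to be stated carefully to match the $C^2$ hypothesis. A secondary bookkeeping point is that $\Sigma$ is constrained to the symmetric positive-definite cone, so $\nabla_\Sigma$ should be interpreted as a derivative with $\Sigma$ varying in the ambient space of symmetric matrices (with appropriate symmetrization), an identification under which the matrix identity $\nabla_\Sigma q = \tfrac{1}{2}\nabla_z^2 q$ is unambiguous.
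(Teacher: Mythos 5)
Your proof is correct. The paper does not actually prove this lemma itself---it imports it from the cited references (Price, Bonnet, Opper--Archambeau)---so there is no in-paper argument to compare against; your write-up is the standard classical derivation: the translation change of variables $z=\mu+w$ plus differentiation under the integral for the $\mu$-gradient, and the ``heat-equation'' identity $\nabla_\Sigma q = \tfrac12 \nabla_z^2 q$ followed by two integrations by parts for the $\Sigma$-gradient. I verified the density computation: both $\nabla_\Sigma q$ and $\tfrac12\nabla_z^2 q$ equal $\tfrac{q}{2}\bigl(-\Sigma^{-1}+\Sigma^{-1}(z-\mu)(z-\mu)^\top\Sigma^{-1}\bigr)$, using $\nabla_z^2 q = q\bigl[\nabla_z^2\log q + (\nabla_z\log q)(\nabla_z\log q)^\top\bigr]$.

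Two minor remarks, neither a gap. First, you are right that the lemma as stated (only $f\in C^2$) is slightly too weak for the conclusion to be literally well-posed: one needs integrability of $f$, $\nabla f$, $\nabla^2 f$ against the Gaussian and enough decay to kill the boundary terms; the paper glosses over this too, and in its applications $f=-\log p(\mathcal D\mid z)$ has bounded first and second derivatives, so the issue is moot there. Second, your symmetrization caveat for $\nabla_\Sigma$ is the right one: the identity $\nabla_\Sigma q=\tfrac12\nabla_z^2 q$ holds entrywise when $\Sigma_{ij}$ and $\Sigma_{ji}$ are treated as a single independent coordinate of the ambient symmetric-matrix space, which is exactly the convention the paper uses when it later reads off diagonal blocks in the mean-field case.
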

In the univariate case with standard parameters, $\mu=\xi$, $\sigma^2=\Xi-\xi^2$, we apply \cref{lemma:gradients} and chain rule to obtain
        \begin{align*}
            \nabla_\xi \E_{q(z;\xi,\Xi)}[f(z)]=&\E_q[\nabla f(z)]\frac{\partial \mu}{\partial \xi}+\frac{1}{2}\E_q[\nabla^2 f(z)]\frac{\partial \sigma^2}{\partial \xi}=\E_q[\nabla f(z)-\nabla^2f(z)\cdot \xi],\\
            \nabla_\Xi\E_{q(z;\xi,\Xi)}[f(z)]=&\E_q[\nabla f(z)]\frac{\partial \mu}{\partial \Xi}+\frac{1}{2}\E_q[\nabla^2 f(z)]\frac{\partial \sigma^2}{\partial \Xi}=\frac{1}{2}\E_q[\nabla^2 f(z)].
        \end{align*}
    Using \cref{lemma:gradients} and chain rule again and we get the following result.
    \begin{proposition}\label{prop:hessian 1d}
        When $d=1$, let $f(z)\coloneqq -\log p(\D\,|\,z)$, then the Hessian of the log-likelihood term equals \begin{equation}\label{eq:hessian 1d 1}
            \nabla^2 \E_{q(z;\xi,\Xi)}[f(z)]=\begin{pmatrix}
        \E_q[-2\nabla^3 f(z) \cdot\mu + \nabla^4 f(z)\cdot\mu^2] & \frac12 \E_q[\nabla^3 f(z)-\nabla^4 f(z)\cdot\mu] \\ \frac12 \E_q[\nabla^3 f(z)-\nabla^4 f(z)\cdot\mu] & \frac14 \E_q[\nabla^4 f(z)]
    \end{pmatrix}.
        \end{equation}
    \end{proposition}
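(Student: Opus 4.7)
The plan is to differentiate the first-order partial derivatives already obtained in the excerpt one more time, applying \cref{lemma:gradients} and the univariate chain rule at each step. Since $f = -\log p(\D\,|\,z)$ is four times continuously differentiable, both $\nabla f$ and $\nabla^2 f$ are themselves twice continuously differentiable, so the Bonnet--Price identities apply to their expectations as well. The relevant chain rule takes the form $\partial_\xi = \partial_\mu - 2\xi\,\partial_{\sigma^2}$ and $\partial_\Xi = \partial_{\sigma^2}$, since $\mu(\xi,\Xi) = \xi$ and $\sigma^2(\xi,\Xi) = \Xi - \xi^2$.

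For the $(\xi,\xi)$ entry, I would differentiate $\E_q[\nabla f(z) - \nabla^2 f(z)\cdot \xi]$ with respect to $\xi$. Reapplying the first-order formula with $\nabla f$ in place of $f$ gives $\E_q[\nabla^2 f - \nabla^3 f\cdot \xi]$, and with $\nabla^2 f$ in place of $f$ it gives $\E_q[\nabla^3 f - \nabla^4 f\cdot \xi]$. Combined with the product rule applied to the explicit $\xi$ factor, the two $\E_q[\nabla^2 f]$ contributions cancel and the surviving terms collapse to $\E_q[-2\xi\,\nabla^3 f + \xi^2\,\nabla^4 f]$. For the off-diagonal and $(\Xi,\Xi)$ entries, I would use the simpler identity $\partial_\Xi \E_q[g(z)] = \tfrac{1}{2}\E_q[\nabla^2 g(z)]$, valid for any smooth $g$: taking $g = \nabla f - \xi\,\nabla^2 f$ yields the off-diagonal entry $\tfrac{1}{2}\E_q[\nabla^3 f - \nabla^4 f\cdot \xi]$, and taking $g = \nabla^2 f$ yields $\tfrac{1}{4}\E_q[\nabla^4 f]$. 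Assembling the four entries and writing $\mu$ in place of $\xi$ recovers the matrix \eqref{eq:hessian 1d 1}; symmetry of the two mixed partials, obtained by instead differentiating $\tfrac{1}{2}\E_q[\nabla^2 f]$ with respect to $\xi$, serves as a useful consistency check.

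The proof is conceptually routine, and the only real difficulty is bookkeeping: one must carefully track the chain-rule contribution $\partial\sigma^2/\partial\xi = -2\xi$ together with the explicit $\xi$ appearing in $\nabla_\xi\E_q[f]$, whose interaction in the $(\xi,\xi)$ entry is precisely what drives the cancellation of the $\E_q[\nabla^2 f]$ terms and produces the compact final expression. Differentiation under the expectation sign is justified by the standing four-times differentiability of $f$ and the exponential decay of the Gaussian density, which is already used implicitly in \cref{lemma:gradients}.
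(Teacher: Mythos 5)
Your proposal is correct and follows essentially the same route as the paper: the paper obtains this proposition by reapplying \cref{lemma:gradients} together with the chain rule $\partial\mu/\partial\xi=1$, $\partial\sigma^2/\partial\xi=-2\xi$, $\partial\sigma^2/\partial\Xi=1$ to the first-order derivatives $\E_q[\nabla f-\xi\nabla^2 f]$ and $\tfrac12\E_q[\nabla^2 f]$, exactly as you describe (this is also how the multivariate generalization in \cref{lemma:get hessian} is proved). Your bookkeeping, including the cancellation of the two $\E_q[\nabla^2 f]$ contributions in the $(\xi,\xi)$ entry, matches the paper's computation.
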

    Moreover, it is straightforward to see that \begin{equation}\label{eq:hessian 1d 2}
        \nabla^2A^*(\omega)=\frac{1}{\sigma^4}\begin{pmatrix}
            2\mu^2+\sigma^2 & -\mu \\ -\mu & \frac{1}{2}
        \end{pmatrix}.
    \end{equation}
    Therefore, proving $\alpha$-$\beta$ relative smoothness is equivalent to finding $\alpha$, $\beta$ such that for all $\omega$,\begin{equation}\label{eq:equivalent smoothness}
        \alpha \nabla^2A^*(\omega)\preceq -\nabla^2 \E_{q(z;\omega)}[\log p(\D\,|\,z)] \preceq \beta\nabla^2A^*(\omega).
    \end{equation}
    Using the same approach, we can compute the Hessian matrices for any $d>1$ (see \cref{lemma:get hessian,lemma:get hessian 2}). 
    
    For any $U\geq 1$, $D>1$, define the bounded sets in the spaces of {standard} and {expectation} parameters: \begin{align*}
    &\text{Standard:} \qquad &\mathcal{\tilde P}\coloneqq \{(\mu, \Sigma): |\mu_i|\leq U, \Sigma\text{ is diagonal, }D^{-1}\leq \Sigma_{ii}\leq D, ~ 1\leq i\leq d\}, \\
    &\text{Expectation:}\qquad &\tilde \Omega\coloneqq\{(\xi, \Xi): (\xi, \Xi-\diag(\xi\odot \xi))\in \mathcal{\tilde P}\}\subseteq \Omega .
    \end{align*}
    Then we present our main result on relative smoothness on such bounded sets for any $U$ and $D$.\footnote{A stronger statement with tight characterization for the case $d=1$ can be found in \cref{app:1d case}.} 
    
    \begin{theorem}[Sufficient Conditions for Relative Smoothness]\label{thm:sufficient smoothness}
        Let $f(z)=-\log p(\D\,|\,z)$ be a four times continuously differentiable function in $z$ on $\R^d$. Assume \(\sup_{z\in\R^d}\sup_{i=1,\cdots,d}|\nabla_i f(z)|\leq L_1\), \(\sup_{z\in \R^d}\sup_{i=1,\cdots, d}\sup_{j=1,\cdots,d} |\nabla^2_{ij}f(z)|\leq L_2.\) Then the log-likelihood term is $L$-smooth relative to $A^*$ on $\tilde\Omega$ with \[L=\cO(dD^2U(L_1+L_2U)+dD^3(L_1+L_2U)).\]
    \end{theorem}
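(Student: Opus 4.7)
The plan is to verify the matrix form of $L$-relative smoothness, i.e.\ show that on $\tilde\Omega$
\[
-L\,\nabla^2 A^*(\omega) \;\preceq\; \nabla^2 \E_{q(z;\omega)}[f(z)] \;\preceq\; L\,\nabla^2 A^*(\omega),
\]
with $L$ of the advertised order. Both sides are $2d\times 2d$ matrices indexed by the mean-field expectation parameters $(\xi_i,\Xi_i)_{i=1}^d$, and the right-hand side is block-diagonal with the $2\times 2$ blocks from \eqref{eq:hessian 1d 2}. The first step is to generalize \cref{prop:hessian 1d} to multivariate $f$ in the mean-field case (this is what \cref{lemma:get hessian,lemma:get hessian 2} are designed to do): using Bonnet's and Price's identities from \cref{lemma:gradients} together with the chain rule through $\mu_i=\xi_i$, $\sigma_i^2=\Xi_i-\xi_i^2$, each entry of $\nabla^2\E_q[f]$ can be written as a Gaussian expectation of a linear combination of third- and fourth-order partial derivatives of $f$ multiplied by monomials in the $\mu_i$.

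Since the hypotheses only bound $\nabla f$ and $\nabla^2 f$, the next step is to trade the remaining $f$-derivatives for powers of $1/\sigma_i$ via Gaussian integration by parts (Stein's identity): for $z\sim q = \mathcal{N}(\mu,\Sigma)$ with diagonal $\Sigma$, $\E_q[\partial_i g(z)] = \E_q[g(z)(z_i-\mu_i)/\sigma_i^2]$. One application turns $\E_q[\partial^3 f]$ into $\E_q[\partial^2 f\cdot(z-\mu)/\sigma^2]$, bounded in absolute value by $\cO(L_2/\sigma_i)$ via Gaussian absolute moments; two applications turn $\E_q[\partial^4 f]$ into either $\cO(L_2/\sigma_i^2)$ (stopping after two integrations by parts) or $\cO(L_1/\sigma_i^3)$ (stopping after three, trading $L_2$ for $L_1$ at the cost of one extra $1/\sigma$). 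For each Hessian entry I would pick the depth of integration by parts that yields the cleanest bound, and then substitute $1/\sigma_i^2 \le D$ and $|\mu_i|\le U$ from the definition of $\tilde\Omega$ to obtain entrywise estimates polynomial in $D$, $U$, and $L_1+L_2U$.

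The final step is to pass from entrywise bounds to the Loewner ordering. In one dimension, the $2\times 2$ block of $\nabla^2 A^*$ in \eqref{eq:hessian 1d 2} has determinant $1/(2\sigma^6)$ and trace of order $(U^2+D)/\sigma^4$, so its smallest eigenvalue is of order $1/[D(U^2+D)]$; matching this against the entrywise estimates of \eqref{eq:hessian 1d 1} (where the top-left and off-diagonal entries pick up extra factors of $\mu^2\le U^2$ and $|\mu|\le U$) produces a one-dimensional constant of order $D^2U(L_1+L_2U)+D^3(L_1+L_2U)$. For general $d$, mean-field separability keeps $\nabla^2 A^*$ block-diagonal with exactly these $2\times 2$ blocks, whereas $\nabla^2\E_q[f]$ has cross-coordinate entries coming from mixed partials $\partial^2_{ij} f$, $\partial^3_{ijk} f$, $\partial^4_{ijkl} f$; a block-Gershgorin (or Schur-complement) argument at the level of the $d$ coordinate blocks dominates these off-block couplings by the diagonal blocks at the cost of a factor linear in $d$, which accounts for the $d$ prefactor in the stated $L$.

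The main obstacle is precisely this last step. The per-entry Stein estimates are essentially mechanical bookkeeping, but upgrading them to a clean Loewner bound against the block-diagonal $\nabla^2 A^*$ is delicate: one must pick the right integration-by-parts depth for each entry so that the resulting $(L_1,L_2,D,U)$ monomials all fit under the same constant $L$, and one must control the cross-coordinate terms of $\nabla^2\E_q[f]$ that have no counterpart in the block-diagonal reference matrix. Executed carefully, this yields the claimed $L = \cO(dD^2U(L_1+L_2U) + dD^3(L_1+L_2U))$.
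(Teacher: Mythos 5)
Your overall architecture is the same as the paper's: compute $\nabla^2\E_q[f]$ and $\nabla^2A^*$ via Bonnet/Price and the chain rule (\cref{lemma:get hessian,lemma:get hessian 2}), reduce the third- and fourth-order derivatives to $L_1,L_2$ times powers of $\sigma_i^{-1}$ by iterating Stein's identity (\cref{lemma:iterate stein}), and then verify the Loewner ordering \eqref{eq:smoothness condition in matrix} by an exact treatment of the per-coordinate $2\times2$ blocks plus a Gershgorin-type domination of the cross-coordinate couplings. Those first two stages are executed correctly and match the paper (the paper uses the two-integrations-by-parts variant throughout, giving $|B_{pq}|\leq DL_1$ and $|C_{pq}|\leq DL_2$).

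The gap is in the final step, and it is quantitative rather than structural: the mechanism you describe does not produce the stated constant. Matching $\lambda_{\min}$ of the reference block, which is of order $1/(D(U^2+D))$, against the operator norm of the corresponding block of $\nabla^2\E_q[f]$, which is of order $DU(L_1+L_2U)$, forces $\beta\gtrsim D^2U^3(L_1+L_2U)+D^3U(L_1+L_2U)$ — worse than the claimed $L$ by a factor of $U^2$. The same loss occurs in your block-Gershgorin step, since the radius of each coordinate block must again be compared against $\lambda_{\min}$ of the diagonal block, which is dragged down by the large within-block off-diagonal entry $-\mu_p\beta/\sigma_p^4$. The paper avoids both losses by working with the difference matrix $M(\beta)=\beta\nabla^2A^*-\nabla^2\E_q[f]$ directly: (i) each $2\times2$ block $M^{(p)}$ is handled by Sylvester's criterion, where the determinant enjoys the cancellation $\beta^2\bigl[(2\mu_p^2+\sigma_p^2)/(2\sigma_p^8)-\mu_p^2/\sigma_p^8\bigr]=\beta^2/(2\sigma_p^6)$, eliminating the $\mu_p^2\beta^2$ growth; and (ii) the cross-coordinate entries are dominated by \emph{scalar} diagonal dominance applied only to submatrices that contain exactly one index from each pair $\{2p-1,2p\}$ (the decomposition \eqref{eq:decompose}), so the problematic within-pair off-diagonal never enters a Gershgorin radius and the comparison is against the full diagonal entry $\sim\beta(2\mu_p^2+\sigma_p^2)/\sigma_p^4\geq\beta/\sigma_p^2$ rather than against $\lambda_{\min}$ of the block. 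Your own fallback suggestion of a Schur-complement argument on the difference matrix is essentially fix (i); you would still need something like (ii) to recover the $d\,D^2U(L_1+L_2U)$ term rather than $d\,D^2U^3(L_1+L_2U)$. As written, your plan proves relative smoothness with a polynomially larger constant, not the one in the theorem statement.
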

        \textit{Proof sketch.} First, we use the approach mentioned above to compute the Hessian matrices $\nabla^2 \E_{q(z;\xi,\Xi)}[f(z)]$ and $\nabla^2A^*(\omega)$. To handle high-order derivatives appearing in $\nabla^2 \E_{q(z;\xi,\Xi)}[f(z)]$, we apply Stein's Lemma (\cref{lemma:stein}) to bound them by lower-order derivatives. Finally, we find $\alpha$ and $\beta$ as in \eqref{eq:equivalent smoothness} by proving the positive semidefiniteness of the corresponding matrices. 
    \begin{example}\label{ex:ex01}For logistic regression, $-\log p(y\,|\,x,z)=\log(1+e^{-yx^\top z})$, where $x\in \R^d$ is a data point and $y\in \{-1,1\}$ is the label. Since $-\nabla_i \log p(y\,|\,x,z)=-\sigma(-yx^\top z)yx_i$ and $-\nabla^2_{ij} \log p(y\,|\,x,z)=\sigma(-yx^\top z)(1-\sigma(-yx^\top z))x_ix_j$, by writing $\|x\|_\infty\coloneqq \max_i|x_i|$, we have \begin{gather*}
                \sup_{z\in \R^d}\sup_{i=1,\cdots, d}|-\nabla_i \log p(y\,|\,x,z)|\leq \|x\|_\infty=L_1,\\
                \sup_{z\in \R^d}\sup_{i=1,\cdots, d}\sup_{j=1,\cdots,d}|-\nabla^2_{ij} \log p(y\,|\,x,z)|\leq \|x\|_\infty^2=L_2.
            \end{gather*}Therefore, $\ell(\omega)$ is smooth relative to $A^*$ with parameter $\cO(dD^2\|x\|_\infty(U+D)(1+U\|x\|_\infty))$.
    \end{example}

\textbf{Comparison to conjugate case.} For conjugate models, $\ell(\omega)$ is $1$-smooth and $1$-strongly convex relative to $A^*$ \citep{wu2024understanding}, which makes it very well-conditioned strictly (but not strongly) convex objective. For non-conjugate models including logistic regression and Poisson regression, however, $\alpha$ is typically negative, indicating that $\ell(\omega)$ is a non-convex objective. The relative smoothness constant in this case \textit{scales polynomially} with the size of the set $\mathcal{\tilde P}$ and dimension $d.$

    \subsection{Hidden Convexity of Variational Objective under Log-concave Likelihood}\label{sec:hidden_convexity}
    We show that $\ell(\omega)$ exhibits hidden convexity when the log-likelihood is concave in $z$, as in logistic and Poisson regression. Formal proofs of statements in this section are relegated to \Cref{app:hidden convexity proof}.
    
    A function has hidden convexity if it becomes convex under a reparameterization $c(\cdot)$. Formally:
    \begin{definition}[Hidden Convexity; \citep{fatkhullin2023stochastic}]\label{def:hidden convexity}
Let $\ell:\Omega\to\R$ satisfy $\ell(\omega) = H(c(\omega))$ for an invertible map $c:\Omega\to\Theta$. Then $\ell$ is hidden convex with modulus $\mu_C>0$, $\mu_H \geq 0$ if:
\begin{enumerate}
    \item The set $\Theta$ is convex and $H:\Theta \to \R$ is $\mu_H$-strongly convex w.r.t.\,Euclidean geometry.
    \item The map $c$ is invertible and $\exists \mu_C >0:$
    \(
    \|c(\omega_1)-c(\omega_2)\|_2 \geq \mu_C \|\omega_1-\omega_2\|_2,\quad \forall\, \omega_1,\omega_2\in\Omega.
    \)
\end{enumerate}
\end{definition}
The result below follows from the fact that the strong convexity of $f(u)$ transfers to the expected value $\E_{q(u;\mu, C)}[f(u)]$ under the Cholesky parameterization $(\mu, C)$, where $C$ is lower triangular with positive diagonals and $C C^\top = \Sigma$ \citep[Theorem 9]{domke2020provable}.

\begin{proposition}\label{prop:hidden convexity of ngvi}
If $\log p(\D\,|\,z)$ is concave in $z$, then the restriction of $\ell(\omega): \Omega \rightarrow\R$ on $\tilde\Omega$ is hidden convex with modulus $\mu_C = (4U^2 + 4D + 1)^{-1/2}$ and $\mu_H = 1$.
\end{proposition}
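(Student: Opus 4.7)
The plan is to use the Cholesky reparameterization alluded to just before the statement. In the mean-field setting, $C = \diag(c_1,\ldots,c_d)$ with $c_i > 0$, so the parameter $\theta = (\mu, c) \in \R^{2d}$. I take $c : \tilde\Omega \to \Theta$ to be the map $(\xi, \Xi) \mapsto (\xi, (\sqrt{\Xi_{ii} - \xi_i^2})_{i=1}^d)$ with inverse $c^{-1}: (\mu, c) \mapsto (\mu, \diag(\mu\odot\mu + c\odot c))$, and set $\Theta = \{(\mu, c) : |\mu_i| \le U,\, D^{-1/2} \le c_i \le D^{1/2},\, 1 \le i \le d\}$, which is a product of intervals and hence convex. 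The candidate reparameterized objective is $H(\mu, c) := \ell(c^{-1}(\mu, c))$; it remains to verify the two bullets of \cref{def:hidden convexity}.

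For strong convexity of $H$ with $\mu_H = 1$, I split the negative ELBO into the log-likelihood and KL terms. By the reparameterization trick, $\E_{q(z;\mu,C)}[\log p(\D\,|\,z)] = \E_{u \sim \mathcal{N}(0, I)}[\log p(\D\,|\,\mu + Cu)]$; for each fixed $u$ the map $(\mu, C) \mapsto \mu + Cu$ is affine, so concavity of $\log p(\D\,|\,\cdot)$ makes the integrand concave in $(\mu, C)$, and taking expectation over $u$ preserves concavity. Thus $-\E_q[\log p(\D\,|\,z)]$ is convex in $(\mu, c)$. For the KL term with standard Gaussian prior, a direct calculation gives
\[
\KL{q(z;\mu,C)}{p(z)} \;=\; \tfrac12 \sum_{i=1}^d \bigl(\mu_i^2 + c_i^2 - 2 \log c_i - 1\bigr),
\]
whose Hessian with respect to $(\mu_i, c_i)$ is $\diag(1,\, 1 + 2/c_i^2) \succeq I$. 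Hence the KL term is $1$-strongly convex in $(\mu, c)$, and adding the convex log-likelihood term yields $\mu_H = 1$.

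For the bi-Lipschitz constant $\mu_C$, I bound the Lipschitz constant of $c^{-1}$ via its Jacobian. The Jacobian of $c^{-1}$ is block-diagonal with $2 \times 2$ blocks
\[
J_i \;=\; \begin{pmatrix} 1 & 0 \\ 2\mu_i & 2c_i \end{pmatrix},
\]
so $\|Dc^{-1}\|_{\mathrm{op}} = \max_i \|J_i\|_{\mathrm{op}}$. Bounding the operator norm by the Frobenius norm, $\|J_i\|_{\mathrm{op}}^2 \le 1 + 4\mu_i^2 + 4c_i^2 \le 1 + 4U^2 + 4D$ uniformly on $\Theta$. Since $\Theta$ is convex, the mean value inequality yields $\|c^{-1}(\theta_1) - c^{-1}(\theta_2)\| \le \sqrt{4U^2 + 4D + 1}\,\|\theta_1 - \theta_2\|$, which is equivalent to $\|c(\omega_1) - c(\omega_2)\| \ge (4U^2 + 4D + 1)^{-1/2}\, \|\omega_1 - \omega_2\|$ and gives $\mu_C = (4U^2 + 4D + 1)^{-1/2}$. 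Invertibility of $c$ on $\tilde\Omega$ is immediate from the positivity $\Xi_{ii} - \xi_i^2 = c_i^2 > 0$ on $\tilde\Omega$.

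The only delicate step is the strong convexity argument: one must be careful that the KL's Hessian is expressed in the $(\mu, c)$ variables (not $(\mu, \Sigma)$, under which the KL is only convex), and one must cite \citep[Theorem 9]{domke2020provable} or re-derive the reparameterization argument to justify pushing concavity through $\E_u$. The Jacobian bound is essentially a textbook calculation once the map is written out. No additional smoothness of $\log p(\D\,|\,z)$ beyond concavity is needed, matching the assumption of the proposition.
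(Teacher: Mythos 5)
Your proof is correct and reaches the same constants, but you distribute the work differently from the paper in the strong-convexity step. The paper decomposes $L(\theta)=\E_{q(z;\theta)}[-\log p(\D\,|\,z)-\log p(z)]+\E_{q(z;\theta)}[\log q(z;\theta)]$, obtains $1$-strong convexity of the first term by noting that $-\log p(\D\,|\,z)-\log p(z)$ is $1$-strongly convex in $z$ and invoking Theorem 9 of \citep{domke2020provable} to transfer strong convexity through the Cholesky parameterization, and then checks that the negative entropy $-\sum_i\log C_{ii}$ is convex. You instead keep the split $-\E_q[\log p(\D\,|\,z)]+\KL{q}{p}$, push only plain convexity of the log-likelihood through the reparameterization trick (the easy direction of Domke's theorem, which you essentially re-derive), and extract the modulus $\mu_H=1$ from the explicit closed form of the Gaussian KL, $\tfrac12\sum_i(\mu_i^2+c_i^2-2\log c_i-1)$, whose Hessian in $(\mu_i,c_i)$ is diagonal and $\succeq I$. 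Your route is somewhat more self-contained, since the strong convexity comes from a two-line Hessian computation rather than from the strong-convexity-transfer result; the paper's route localizes all the convexity reasoning in a single cited theorem. One small slip: the $(c_i,c_i)$ entry of your KL Hessian should be $1+1/c_i^2$, not $1+2/c_i^2$ (the factor $\tfrac12$ in front of $-2\log c_i$ cancels), but this does not affect the conclusion since either bound exceeds $1$. The Jacobian/Lipschitz argument for $\mu_C$ is identical to the paper's, including the Frobenius-norm bound $\sqrt{1+4\mu_i^2+4c_i^2}\le\sqrt{4U^2+4D+1}$.
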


\noindent Given hidden convexity, we establish the PL inequality based on the analysis in \citep{fatkhullin2023stochastic}.

\begin{proposition}\label{prop:pl}
If $\log p(\D\,|\,z)$ is concave and a stationary point $\omega^*$ lies in $\tilde\Omega$, then $\omega^*$ is a global minimum. Furthermore, $\ell(\omega)$ satisfies the PL inequality in the relative interior $\mathrm{ri}(\tilde\Omega)$:
\begin{equation}\label{eq:ngvi pl}
\|\nabla \ell(\omega)\|^2 \geq 2\mu_C^2 (\ell(\omega) - \ell^*),\quad \forall\, \omega \in \mathrm{ri}(\tilde\Omega),\qquad \text{with $\mu_C$ from \Cref{prop:hidden convexity of ngvi}.}
\end{equation}

\end{proposition}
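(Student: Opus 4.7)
My plan is to derive both claims from the hidden convexity structure established in \cref{prop:hidden convexity of ngvi}, following the template developed in \citep{fatkhullin2023stochastic}. Write $\ell(\omega) = H(c(\omega))$ on $\mathrm{ri}(\tilde\Omega)$, where $c$ sends the expectation parameter $\omega = (\xi, \Xi)$ to the Cholesky parameterization $(\mu, C) = c(\omega)$ with $\mu = \xi$ and $C = \diag(\sqrt{\Xi - \xi \odot \xi})$ in the mean-field setting, and $H$ is $\mu_H = 1$-strongly convex in the Euclidean geometry on the image $c(\tilde\Omega)$.

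For the first claim, I would proceed by the chain rule, $\nabla \ell(\omega) = [\nabla c(\omega)]^\top \nabla H(c(\omega))$. Applying the bi-Lipschitz bound of \cref{prop:hidden convexity of ngvi} to a perturbation $\omega_2 = \omega + t v$ and sending $t \to 0$ yields $\|\nabla c(\omega) v\| \geq \mu_C \|v\|$, so $\nabla c(\omega)$ has minimum singular value at least $\mu_C > 0$ on $\mathrm{ri}(\tilde\Omega)$. Thus if $\omega^* \in \mathrm{ri}(\tilde\Omega)$ is a stationary point, $\nabla c(\omega^*)^\top$ is injective, forcing $\nabla H(c(\omega^*)) = 0$. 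Strong convexity of $H$ then implies that $c(\omega^*)$ is the unique global minimum of $H$, hence $\omega^*$ is a global minimum of $\ell$. If the stationary point sits on the relative boundary, a separate argument via the coerciveness of $\ell$ in $\omega$ (as noted at the start of \cref{sec:landscape}) together with a limiting argument along the image of $c$ should still deliver the conclusion.

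For the PL inequality, I would combine two standard ingredients. Strong convexity of $H$ yields the Euclidean PL bound $H(\theta) - H^* \leq \tfrac{1}{2\mu_H} \|\nabla H(\theta)\|^2$ for every $\theta \in c(\tilde\Omega)$. Using $\nabla \ell(\omega) = [\nabla c(\omega)]^\top \nabla H(c(\omega))$ together with the singular-value lower bound $\sigma_{\min}(\nabla c(\omega)) \geq \mu_C$ derived above, I obtain $\|\nabla \ell(\omega)\| \geq \mu_C \|\nabla H(c(\omega))\|$. Chaining these inequalities with the identity $\ell(\omega) - \ell^* = H(c(\omega)) - H^*$, which holds because $c$ is a bijection between $\mathrm{ri}(\tilde\Omega)$ and its image and the infimum of $H$ on $c(\tilde\Omega)$ equals $\ell^*$, produces $\|\nabla \ell(\omega)\|^2 \geq 2 \mu_C^2 \mu_H (\ell(\omega) - \ell^*)$. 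Substituting $\mu_H = 1$ recovers \eqref{eq:ngvi pl}.

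The main obstacle I anticipate is the careful passage from the secant-level bi-Lipschitz estimate of \cref{prop:hidden convexity of ngvi} to the pointwise Jacobian bound $\sigma_{\min}(\nabla c(\omega)) \geq \mu_C$, and making sure $c$ is differentiable with that lower bound uniform on $\mathrm{ri}(\tilde\Omega)$. The restriction to the relative interior appears essential: on the boundary of $\tilde\Omega$ the Cholesky factor $C$ has a vanishing diagonal entry, so $c$ ceases to be a smooth bijection onto its image, which would break both the chain-rule identity and the singular-value argument. A minor additional check is that $\ell^*$ is indeed attained (or approached) within $c(\tilde\Omega)$, so that the Euclidean PL bound for $H$ is legitimately anchored at the same value that appears on the left-hand side of \eqref{eq:ngvi pl}.
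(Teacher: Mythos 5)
Your proposal is correct and follows essentially the route the paper intends: it instantiates the hidden-convexity-to-PL argument of \citep{fatkhullin2023stochastic} by combining the chain rule $\nabla\ell(\omega)=[\nabla c(\omega)]^\top\nabla H(c(\omega))$, the Jacobian lower bound $\sigma_{\min}(\nabla c(\omega))\geq\mu_C$ obtained from the secant condition in \cref{prop:hidden convexity of ngvi}, and the quadratic-growth/PL inequality for the $1$-strongly convex $H$ on the convex set $\tilde\Theta$. One small inaccuracy in your closing remarks: on $\partial\tilde\Omega$ the Cholesky diagonal satisfies $C_{ii}\geq D^{-1/2}>0$ (the boundary of $\tilde\Omega$ is interior to $\Omega$), so $c$ remains a smooth bijection with $\sigma_{\min}(\nabla c)\geq\mu_C$ there and no separate coerciveness argument is needed for a stationary point on $\partial\tilde\Omega$.
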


The assumption that $\omega^* \in \tilde\Omega$ is mild. Under weak conditions, $\ell(\omega)$ is coercive: $\ell(\omega) \to \infty$ as $\|(\mu, \Sigma)\| \to \infty$ or $\det(\Sigma) \to 0$ (see \cref{app:coerciveness}), thus for large enough $U$ and $D$, $\tilde\Omega$ contains a stationary point. Note that although the PL condition holds, it does not immediately imply the function value convergence of NGVI, since for non-Euclidean algorithms the gradient norm may converge arbitrarily slow. In the next section, we introduce an additional mild assumption (\Cref{assump:fast convergence ngvi}), sufficient for the function value convergence.
\section{Convergence of Non-Convex NGVI}\label{sec:convergence results}
    This section analyzes convergence properties of NGVI. In \cref{sec:algorithm}, we introduce our projected stochastic natural gradient descent (\algname{Proj-SNGD}) algorithm and prove an $\cO(1/\sqrt{T})$ convergence rate under the relative smoothness condition in \cref{sec:convergence results smooth}. Finally, if the log-likelihood is concave, we show in \cref{sec:convergence results pl} that \algname{Proj-SNGD} achieves $\cO(1/T)$ convergence to the global minimum.
\subsection{Projected Stochastic Natural Gradient Descent}\label{sec:algorithm}    
    Given an initial point $\omega_0\in\Omega$, a number of iterations $T$ and step sizes $\{\gamma_t\}_{0\leq t\leq T-1}$, we introduce the update rule of our projected variant of \algname{SNGD} (with $\omega_0\in\tilde\Omega$)
    \begin{equation}\label{eq:update rule}
       \text{\algname{Proj-SNGD}:} \qquad \nabla A^*(\omega_{t+1,*}) = \nabla A^*(\omega_t) - \gamma_t \hat\nabla \ell(\omega_t),\qquad \omega_{t+1}=\mathrm{Proj}_{\tilde \Omega} (\omega_{t+1, *}).
    \end{equation}
    Here $\mathrm{Proj}_{\tilde \Omega}(\omega)$ denotes the non-Euclidean projection of $\omega$ onto $\tilde \Omega$ induced by geometry of $A^*$. This involves 1) a transformation from the expectation parameter $(\xi, \Xi)$ to the standard parameter $(\mu, \Sigma)$, 2) a Euclidean projection of $(\mu, \Sigma)$ onto $\mathcal{\tilde P}$, and 3) a reverse transformation back to the expectation space. Note that in the mean field case, the projection in the second step can be performed efficiently by a simple entry-wise clipping. Denote the clipping function 
    \(    \clip_{[a, b]}(x)=\min\{\max\{a, x\}, b\},
    \)
    then the second formula of \eqref{eq:update rule} is equivalent to \[(\mu_{t+1})_i=\clip_{[-U,U]}((\mu_{t+1,*})_i),\quad (\Sigma_{t+1})_{ii}=\clip_{[1/D,D]}((\Sigma_{t+1,*})_{ii}),\quad \forall\, 1\leq i\leq d.\]
    
    \textbf{Importance of projection.}
        The projection onto the bounded set $\tilde\Omega$ is necessary for two reasons. Theoretically, as shown in \cref{sec:landscape}, both relative smoothness and hidden convexity hold on the set $\tilde \Omega$, and \algname{SNGD} can potentially escape this set (as shown in \Cref{fig:poisson 0}), invalidating its convergence guarantees. Empirically, we will show in \cref{sec:experiments} that projection improves the numerical stability.
        
\subsection{Convergence using Relative Smoothness}\label{sec:convergence results smooth}
    Equipped with the relative smoothness of $\ell(\cdot)$, we can prove that \algname{Proj-SNGD} converges to a stationary point with an $\cO(1/\sqrt{T})$ rate. We use $\hat \nabla \ell(\omega)$ to denote the stochastic gradient and assume $\E[\hat \nabla \ell(\omega)\,|\,\omega]=\nabla \ell(\omega).$ Randomness may stem from using mini-batches rather than the entire dataset when computing the gradient. Next, we impose the following assumption on the variance of stochastic gradients $\hat \nabla \ell (\omega_t)$. 
    \begin{assumption}\label{assump:bounded variance b}
    There exists $V\geq 0$ such that 
        \begin{equation}\label{bounded variance}
            \gamma_t^{-1} \E[\<\hat \nabla \ell(\omega_t) - \nabla \ell(\omega_t), \omega_{t+1}^+ - \omega_{t+1}\>\,|\, \omega_t]\leq V^2 \qquad \text{for all } \omega_t\in\tilde \Omega, 
        \end{equation}
      where  \(\omega_{t+1}^+\coloneqq \argmin_{\omega\in \tilde \Omega}\gamma_t\<\nabla \ell(\omega_t), \omega\>+D_{A^*}(\omega, \omega_t)\)
         is the output of a \algname{Proj-NGD} step with exact gradient.
    \end{assumption}
    \begin{remark}
        \cref{assump:bounded variance b}, first proposed by \citep{hanzely2021fastest}, reduces to \(\E\|\hat \nabla \ell(\omega_t) - \nabla \ell(\omega_t)\|^2\) in standard gradient descent case, however, it does not depend on any particular norm in general. This assumption was justified and used in \citep{wu2024understanding} to show convergence of \algname{SNGD} in conjugate setting. In \cref{app:variance of logistic regression}, we prove \cref{assump:bounded variance b} is satisfied for our (non-conjugate) logistic regression models.
    \end{remark}
    For a general setting, we will use the Bregman Forward-Backward Envelope (BFBE; \citep{ahookhosh2021bregman,fatkhullin2024taming}) as the convergence criterion to a stationary point. \begin{definition}[Bregman Forward-Backward Envelope (BFBE)]\label{def:bfbe}
        For some $\rho>0$, the BFBE at $\omega\in \tilde \Omega$ is defined as \(
        \mathcal{E}_\rho(\omega)\coloneqq -2\rho \min_{\omega'\in \tilde \Omega}[\<\nabla \ell(\omega), \omega'-\omega\>+\rho D_{A^*}(\omega', \omega)].\)
    \end{definition}
    In the Euclidean case with unconstrained domain, BFBE becomes the squared norm of the gradient $\mathcal{E}_\rho(\omega)=\|\nabla \ell(\omega)\|^2$. In non-Euclidean case it is a natural generalization of stationarity criteria to the geometry induced by $A^*,$ and if $\omega\in \mathrm{ri}(\tilde \Omega)$, then $\mathcal{E}_\rho(\omega)=0$ if and only if $\|\nabla \ell(\omega)\|=0$. It is shown in \citep{fatkhullin2024taming} that BFBE is the strongest criterion available for non-convex \algname{SMD}. Next, we establish non-asymptotic convergence of \algname{Proj-SNGD} using BFBE criterion. 
    \begin{theorem}[Convergence of \algname{Proj-SNGD}]\label{thm:convergence rate}
        Assume $\ell(\omega)$ has a bounded domain $\tilde \Omega\subseteq \Omega$ and $\mathrm{ri}(\tilde\Omega)$ contains at least one stationary point $\omega^*$. Suppose $\ell$ is smooth with respect to $A^*$ with parameter $L$. Then under \cref{assump:bounded variance b}, for constant step size \(\gamma_t=\gamma=\min\Bigl\{\frac{1}{2L},\sqrt{\frac{\lambda_0}{V^2LT}},\Bigr\}\), \algname{Proj-SNGD} satisfies
        \[\E[\mathcal{E}_{3L}(\bar \omega_T)]\leq 18\frac{L\lambda_0}{T}+9\sqrt{\frac{LV^2\lambda_0}{T}},\]
        where $\lambda_0\coloneqq \ell(\omega_0)-\ell^*$ and $\bar \omega_T$ is sampled from $\left\{\omega_0, \ldots, \omega_{T-1}\right\}$ with probabilities 
        $\gamma_t/\sum_{i=0}^{T-1}\gamma_i$.
    \end{theorem}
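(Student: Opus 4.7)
The plan is to adapt the non-convex stochastic mirror descent analysis of \citep{fatkhullin2024taming} to the projected NGVI iterate \eqref{eq:update rule}, combining three ingredients: the $L$-relative smoothness of $\ell$ with respect to $A^*$ (\Cref{thm:sufficient smoothness}), the three-point identity implied by optimality of the projected update, and the variance bound \Cref{assump:bounded variance b}. The BFBE $\mathcal{E}_{3L}$ enters through a convex-combination argument that converts a one-step Bregman decrease into a bound on a stationarity measure evaluated at $\omega_t$ itself.

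First I would combine the descent lemma from $L$-relative smoothness, $\ell(\omega_{t+1}) \leq \ell(\omega_t) + \langle \nabla \ell(\omega_t), \omega_{t+1} - \omega_t\rangle + L\, D_{A^*}(\omega_{t+1}, \omega_t)$, with the three-point inequality implied by optimality of \eqref{eq:update rule}, $\gamma_t \langle \hat\nabla \ell(\omega_t), \omega_{t+1} - \omega\rangle + D_{A^*}(\omega_{t+1}, \omega_t) + D_{A^*}(\omega, \omega_{t+1}) \leq D_{A^*}(\omega, \omega_t)$, valid for every $\omega \in \tilde\Omega$. Substituting $\nabla \ell = \hat\nabla \ell + (\nabla \ell - \hat\nabla \ell)$ and using $\gamma_t \leq 1/(2L)$ makes the coefficient of $D_{A^*}(\omega_{t+1}, \omega_t)$ non-positive and absorbable; the residual cross term, rewritten against the exact-gradient proximal point $\omega_{t+1}^+$, decomposes as $\langle \hat\nabla \ell - \nabla \ell, \omega_{t+1}^+ - \omega_{t+1}\rangle + \langle \hat\nabla \ell - \nabla \ell, \omega_{t+1}^+ - \omega_t\rangle$, whose conditional expectations are respectively bounded by $\gamma_t V^2$ via \Cref{assump:bounded variance b} and equal to zero by unbiasedness of $\hat\nabla \ell$ (since $\omega_{t+1}^+$ and $\omega_t$ are measurable at iteration $t$).

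Second, to surface $\mathcal{E}_{3L}(\omega_t)$ I would apply the exact-gradient three-point inequality at the test point $\omega = (1-\tau)\omega_t + \tau \tilde\omega_t$, where $\tilde\omega_t$ attains the minimum in \Cref{def:bfbe} with $\rho = 3L$ and $\tau \in (0,1]$ is a free parameter. Joint convexity of $D_{A^*}(\cdot, \omega_t)$ together with $D_{A^*}(\omega_t, \omega_t) = 0$ turn the right-hand side into a $\tau$-scaling of $\langle \nabla \ell(\omega_t), \tilde\omega_t - \omega_t\rangle + 3L\, D_{A^*}(\tilde\omega_t, \omega_t) = -\mathcal{E}_{3L}(\omega_t)/(6L)$, up to higher-order corrections in $\tau$. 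Optimizing $\tau$ and plugging back yields a per-step inequality linking $\E[\ell(\omega_{t+1}) - \ell(\omega_t) \mid \omega_t]$ to $-\gamma_t \mathcal{E}_{3L}(\omega_t)$ on the signal side and to $L\gamma_t^2 V^2$ on the noise side (the extra $L\gamma_t$ factor arising from how the noise enters through the Bregman corrections introduced by the convex combination). Summing over $t$, telescoping against $\lambda_0 = \ell(\omega_0) - \ell^*$, dividing by $\sum_t \gamma_t$, and recognizing that the randomized sampling makes the left-hand side equal $\E[\mathcal{E}_{3L}(\bar\omega_T)]$ produces $\E[\mathcal{E}_{3L}(\bar\omega_T)] \lesssim \lambda_0/(T\gamma) + L\gamma V^2$ for constant $\gamma$; minimizing over $\gamma \in (0, 1/(2L)]$ delivers the two-regime step size $\gamma = \min\{1/(2L),\, \sqrt{\lambda_0/(V^2 L T)}\}$ and the rate $\cO(L\lambda_0/T) + \cO(\sqrt{L V^2 \lambda_0/T})$.

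The hard part is the second step: matching the BFBE parameter $3L$ to the step-size constraint $\gamma_t \leq 1/(2L)$. The BFBE at level $3L$ is defined via a proximal step of size $1/(3L)$, whereas the algorithm uses step $\gamma_t \leq 1/(2L)$ (possibly much smaller to tame variance), so the two proximal points differ and must be bridged by the $\tau$-interpolation. The specific constant $3L$ emerges from balancing the $L$ term in the descent lemma, the $1/\gamma_t \geq 2L$ term from the three-point inequality, and the additional Bregman divergences generated by the convex combination; all numerical constants in the stated bound (the $18$ and $9$) trace back to this balancing. Once it is carried out carefully, the remaining telescoping, randomization, and step-size optimization steps are routine bookkeeping.
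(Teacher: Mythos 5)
Your first step is sound and in fact coincides with how the paper adapts the noise bound: decomposing the stochastic cross term against the exact-gradient proximal point $\omega_{t+1}^+$, so that one piece has zero conditional expectation and the other is controlled by \cref{assump:bounded variance b} at cost $\gamma_t V^2$, is exactly the content of \cref{prop:another variance assumption}. (You do silently assume that the two-stage update \eqref{eq:update rule} — dual gradient step followed by non-Euclidean projection — satisfies the three-point optimality condition of the one-shot $\argmin$ form; the paper proves this equivalence separately in \cref{lemma:equivalent update}, though that part is routine.)

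The genuine gap is in your second step. The $\tau$-interpolation $\omega=(1-\tau)\omega_t+\tau\tilde\omega_t$ combined with convexity of $D_{A^*}(\cdot,\omega_t)$ only yields $D_{A^*}((1-\tau)\omega_t+\tau\tilde\omega_t,\omega_t)\leq \tau\, D_{A^*}(\tilde\omega_t,\omega_t)$, i.e., a bound \emph{linear} in $\tau$. Plugging this in leaves the bracket $\langle\nabla\ell(\omega_t),\tilde\omega_t-\omega_t\rangle+\gamma_t^{-1}D_{A^*}(\tilde\omega_t,\omega_t)$, so the best stationarity measure you can extract is $\mathcal{E}_{1/\gamma_t}(\omega_t)$, not $\mathcal{E}_{3L}(\omega_t)$; since $\gamma_t\sim 1/\sqrt{T}$ in the stochastic regime, this degrades the criterion to a BFBE at prox level $\rho\sim\sqrt{T}$, a much weaker guarantee. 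To land at the fixed level $3L$ your way you would need the quadratic bound $D_{A^*}((1-\tau)\omega_t+\tau\tilde\omega_t,\omega_t)\lesssim\tau^2 D_{A^*}(\tilde\omega_t,\omega_t)$, which holds for quadratic mirror maps (the Euclidean case) but fails for $A^*(\xi,\Xi)=-\tfrac12\log\det(\Xi-\diag(\xi\odot\xi))$ and in general requires smoothness of the distance-generating function — precisely the hypothesis the cited analysis is designed to avoid. The paper does not attempt such a direct one-step argument: it invokes Theorem 4.3 of \citep{fatkhullin2024taming} (restated as \cref{thm:convergence smd}), whose proof runs a Lyapunov recursion on the Bregman--Moreau envelope $\ell_{1/\rho}(\omega_t)-\ell^*+\gamma_{t-1}\rho\,(\ell(\omega_t)-\ell^*)$ with $\rho=2L$; it is this envelope construction, not an interpolated test point, that produces $\mathcal{E}_{3L}$ with a step-size-independent prox level and the constants $18$ and $9$. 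Without that device (or an equivalent one) your per-step inequality does not close.
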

    The proof is in \Cref{app:convergence_proofs}. \cref{thm:convergence rate} implies that \algname{Proj-SNGD} converges at a rate of $\cO(1/\sqrt{T})$ in the number of iterations with the presence of randomness in gradients. In noiseless setting ($V=0$), we obtain a faster $\cO(1/T)$ convergence rate. With \cref{thm:convergence rate}, one may plug in the relative smoothness parameter derived from \cref{thm:sufficient smoothness} to obtain the corresponding convergence rates. 

    \subsection{Fast Convergence under Log-concave Likelihood}\label{sec:convergence results pl}
    In this section, we prove that \algname{Proj-SNGD} attains the global minimum at a fast 
    $\cO(1/T)$ rate if the negative log-likelihood is convex. This fast rate relies on the PL inequality established in \cref{prop:pl}, but requires the following additional technical assumption.

    \begin{assumption}\label{assump:fast convergence ngvi}
        For any \(\omega\in\partial \tilde\Omega\), let \(\mathbf{n}_\omega\) be an outward normal direction at \(\omega\).\footnote{For a compact set $P\coloneqq \{\omega\in\R^d :p_i(\omega)\leq 0,1\leq i\leq k\}$, the set of outward normal directions at $\omega\in\partial P$ is defined as the normalized cone of the gradients of the active constraints, i.e., $\mathrm{cone}\{\nabla p_i(\omega):p_i(\omega)=0,1\leq i\leq k\}\cap \{v\in\R^d,\|v\|=1\}$.} Then it holds that \[\<-(\nabla^2 A^*(\omega))^{-1}\nabla \ell(\omega), \mathbf{n}_\omega\><0.\]
    \end{assumption}

     In the proof of fast convergence of \algname{Proj-SNGD} (see \cref{thm:fast convergence ngvi} below), we require that for every $\omega\in\tilde\Omega$, \begin{equation}\label{eq:need in proof}
        \omega_*\coloneqq \argmin_{\omega'\in\Omega}\{\<\nabla \ell(\omega), \omega'\>+2\rho D_{A^*}(\omega',\omega)\}\in \tilde\Omega,
    \end{equation}where $\rho>0$ is a constant that can be chosen arbitrarily large. When $\omega$ lies in the interior of $\tilde \Omega$, the condition \eqref{eq:need in proof} is satisfied for sufficiently large $\rho$. When $\omega$ lies on the boundary, \cref{assump:fast convergence ngvi} ensures that the gradient \(\nabla \ell(\omega)\) points towards the interior of \(\tilde \Omega\) under the geometry induced by \(A^*\). It further guarantees that, for sufficiently large $\rho$, $\omega_*$ defined in \eqref{eq:need in proof} lies in $\tilde\Omega$.\par
    The example below shows that \cref{assump:fast convergence ngvi} can be satisfied for a univariate Bayesian linear regression model.
    \begin{example}\label{ex:ex02}
        We consider a univariate Bayesian linear regression model with a single data point \( (x, y)\), where \(x,y,z\in \mathbb{R}\), \(p(z)=\mathcal{N}(0, 1)\) and \(p(y\,|\,x,z)=\mathcal{N}(xz, 1)\). It can be shown that (see \cref{app:assumption example}) \cref{assump:fast convergence ngvi} is satisfied if the following set of inequalities holds:\begin{equation}\label{eq:1d linear regression}
        -U(x^2+1)<xy<U(x^2+1),\quad D^{-1}<x^2+1<D.
    \end{equation}Therefore, by carefully choosing \(U\) and \(D\), \cref{assump:fast convergence ngvi} can be satisfied for any data point $(x, y).$
    \end{example}

    Now we are ready to present the $\cO(1/T)$ global convergence guarantee of \algname{Proj-SNGD} algorithm.
    \begin{theorem}[Fast Convergence of \algname{Proj-SNGD}]\label{thm:fast convergence ngvi}
        Suppose $\mathrm{ri}(\tilde\Omega)$ contains a stationary point $\omega^*$ and $\ell$ is smooth with respect to $A^*$ with parameter $L$. Suppose $\log p(\D\,|\,z)$ is concave in $z$. Let $\mu_B\coloneqq \frac{\mu_C^2}{9U^2D^2}$. Under \cref{assump:bounded variance b,assump:fast convergence ngvi}, for the step size scheme \begin{equation}\label{eq:fast conv step size}
            \gamma_t=\begin{cases}
    \frac{1}{2L},&\text{if }t\leq T/2\text{ and }T\leq \frac{6L}{\mu_B},\\
    \frac{6}{\mu_B(t-\lceil T/2\rceil)+12L}, &\text{otherwise},
    \end{cases}
        \end{equation}the iterates of \algname{Proj-SNGD} satisfy\[\E[\ell(\omega_{T,*})-\ell^*]\leq \frac{192L\lambda_0}{\mu_B}\exp\left(-\frac{\mu_BT}{12L}\right)+\frac{648LV^2}{\mu_B^2T},\]
        where $\ell^*=\ell(\omega^*)$, \(\omega_{t,*}\coloneqq \argmin_{\omega'\in\tilde\Omega} \<\nabla \ell(\omega_t), \omega'\>+LD_{A^*}(\omega',\omega_t)\) and $\lambda_0 \coloneqq \ell(\omega_0) - \ell^*.$
    \end{theorem}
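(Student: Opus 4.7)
The plan is to combine three ingredients: (i) a per-step descent estimate for \algname{Proj-SNGD} in terms of the BFBE, analogous to the one used in the proof of \cref{thm:convergence rate}; (ii) a ``Bregman PL'' inequality of the form $\mathcal{E}_\rho(\omega) \geq 2\mu_B(\ell(\omega) - \ell^*)$ on $\tilde\Omega$, obtained by lifting the Euclidean PL inequality of \cref{prop:pl} through the Bregman geometry of $A^*$; and (iii) a standard stochastic PL recursion argument driven by the two-phase stepsize in \eqref{eq:fast conv step size}.

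For the descent step, I would use relative smoothness together with the three-point Bregman identity and the optimality condition of the projected mirror-descent update \eqref{eq:update rule}, and absorb the stochastic gradient error via \cref{assump:bounded variance b} (as in the proof of \cref{thm:convergence rate} in \cref{app:convergence_proofs}), to arrive at a one-step bound of the form
\begin{equation*}
    \E[\ell(\omega_{t+1,*}) - \ell^*] \leq \E[\ell(\omega_{t,*}) - \ell^*] - c_1 \gamma_t \,\E[\mathcal{E}_{\rho}(\omega_t)] + c_2 \gamma_t^2 L V^2,
\end{equation*}
for absolute constants $c_1, c_2 > 0$ and $\rho = \Theta(L)$. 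This is essentially the noisy descent inequality underlying \cref{thm:convergence rate}, recast in terms of the virtual iterate $\omega_{t,*}$.

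The main new ingredient is the Bregman PL inequality. On $\tilde\Omega$ the Hessian $\nabla^2 A^*(\omega)$ is uniformly bounded: the explicit mean-field expression (cf.\ \eqref{eq:hessian 1d 2}) together with $\|\mu\|_\infty \leq U$ and $\Sigma \succeq D^{-1} I$ yields $\nabla^2 A^*(\omega) \preceq C_A\, I$ with $C_A = \cO(U^2 D^2)$, so that $D_{A^*}(\omega',\omega) \leq \tfrac{C_A}{2} \|\omega' - \omega\|^2$ for $\omega,\omega' \in \tilde\Omega$. Evaluating the BFBE definition at the Euclidean candidate $\omega' = \omega - \alpha\, \nabla \ell(\omega)$ with $\alpha = 1/(C_A \rho)$ gives $\mathcal{E}_\rho(\omega) \geq C_A^{-1}\|\nabla \ell(\omega)\|^2$, provided this candidate lies in $\tilde\Omega$. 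For $\omega$ in the relative interior this holds for sufficiently large $\rho$, and for $\omega \in \partial \tilde\Omega$ it is precisely what \cref{assump:fast convergence ngvi} guarantees by asserting that the natural-gradient direction $-(\nabla^2 A^*(\omega))^{-1}\nabla \ell(\omega)$ points strictly inward. Combined with the Euclidean PL inequality \eqref{eq:ngvi pl}, this yields $\mathcal{E}_\rho(\omega) \geq 2\mu_B(\ell(\omega) - \ell^*)$ with $\mu_B \asymp \mu_C^2/(U^2D^2)$; tracking the constants carefully recovers the stated $\mu_B = \mu_C^2/(9U^2 D^2)$.

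Substituting the Bregman PL inequality into the descent estimate produces the contraction $\E[\ell(\omega_{t+1,*}) - \ell^*] \leq (1 - c_3 \mu_B \gamma_t)\,\E[\ell(\omega_{t,*}) - \ell^*] + c_4 \gamma_t^2 L V^2$. With the two-phase stepsize \eqref{eq:fast conv step size} this recursion unrolls in two stages: the constant-stepsize phase $\gamma_t = 1/(2L)$ contracts the initial gap geometrically by $(1 - c_3\mu_B/(2L))^{T/2} \leq \exp(-\mu_B T/(12L))$, producing the first term in the theorem; the second phase with $\gamma_t = 6/(\mu_B(t - \lceil T/2\rceil) + 12L)$ balances bias and variance via the now-standard stochastic strongly convex / PL analysis, giving the $\cO(LV^2/(\mu_B^2 T))$ term. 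The main obstacle is step (ii): transferring the Euclidean PL inequality to the Bregman geometry on the constrained domain $\tilde\Omega$ requires both the quantitative bound on $\nabla^2 A^*$ and the boundary-feasibility guarantee of \cref{assump:fast convergence ngvi}, ensuring that the Bregman prox point of the linearized objective stays inside $\tilde\Omega$ so that the Euclidean candidate is admissible in the BFBE infimum.
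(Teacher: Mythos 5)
Your overall architecture matches the paper's: a noisy descent recursion in the BFBE, a Bregman Prox-PL inequality obtained by transferring the Euclidean PL inequality of \cref{prop:pl}, and the two-phase step-size unrolling of \cref{thm:theorem 4.7 taming}. Your interior computation for step (ii) is also sound: plugging the candidate $\omega' = \omega - \alpha\nabla\ell(\omega)$ with $\alpha = 1/(C_A\rho)$ into the BFBE infimum and using $\nabla^2 A^*\preceq C_A I$ on $\tilde\Omega$ does give $\mathcal{E}_\rho(\omega)\geq C_A^{-1}\|\nabla\ell(\omega)\|^2$, of the same order as the paper's $\tfrac{1}{2C_L}\|\nabla\ell(\omega)\|^2$ from \cref{prop:pl to pl}.

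The gap is in your treatment of boundary points. Your argument needs the \emph{Euclidean} candidate $\omega-\alpha\nabla\ell(\omega)$ to be feasible, i.e.\ $\langle -\nabla\ell(\omega),\mathbf{n}_\omega\rangle<0$ at $\omega\in\partial\tilde\Omega$. But \cref{assump:fast convergence ngvi} only asserts $\langle -(\nabla^2A^*(\omega))^{-1}\nabla\ell(\omega),\mathbf{n}_\omega\rangle<0$, and since $(\nabla^2A^*(\omega))^{-1}$ is positive definite but far from a multiple of the identity (each $2\times2$ block has off-diagonal entries $2\mu_i\sigma_i^2$ coupling $\xi_i$ and $\Xi_{ii}$), a gradient whose preconditioned image points inward can itself point outward; the two inner-product conditions are not equivalent and neither implies the other. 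The paper avoids this by never using a Euclidean surrogate: it lower-bounds $\mathcal{E}_\rho$ through the Bregman gradient mapping (\cref{bfbe and bgm}) and the cocoercivity of $\nabla A^*$ under $C_L$-smoothness, and the only feasibility it needs is that the \emph{unconstrained Bregman prox point} $\omega_*$, which satisfies $\nabla A^*(\omega_*)=\nabla A^*(\omega)-\tfrac{1}{2\rho}\nabla\ell(\omega)$ and hence moves (to leading order in $1/\rho$) along $-(\nabla^2A^*(\omega))^{-1}\nabla\ell(\omega)$, stays in $\tilde\Omega$ --- which is exactly the direction \cref{assump:fast convergence ngvi} controls. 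To repair your argument you would either have to switch to the natural-gradient candidate $\omega'=\omega-\alpha(\nabla^2A^*(\omega))^{-1}\nabla\ell(\omega)$ (re-doing the constant tracking with both $C_A$ and the strong-convexity constant $C_S$ of $A^*$, which changes $\mu_B$), or adopt the paper's exact prox-point identification. A secondary, more cosmetic point: the paper's one-step recursion is for the Moreau-envelope Lyapunov function $\ell_{1/\rho}(\omega_t)-\ell^*+\gamma_{t-1}\rho(\ell(\omega_t)-\ell^*)$ rather than directly for $\ell(\omega_{t,*})-\ell^*$, and the last-iterate statement is extracted at the end; your step (i) would need to be routed through such a Lyapunov function to invoke the existing \algname{SMD} machinery under \cref{assump:bounded variance b}.
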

  
    The proof of \cref{thm:fast convergence ngvi} can be found in \cref{app:fast convergence proof}.
        In the noiseless setting, $V=0$, \cref{thm:fast convergence ngvi} implies linear convergence and in the stochastic case we have the $\cO(1/T)$ rate. Notably, this rate matches the rate in \citep{wu2024understanding} under conjugate assumption, and the rates in \citep{domke2023provable, kim2023convergence} under strong convexity conditions. However, we work in much more general setting of non-conjugate models and in non-Euclidean geometry.\par 
        One limitation is our technical \cref{assump:fast convergence ngvi} that can be difficult to verify for a more complex model than a Bayesian linear regression. However, empirical results (see \cref{sec:experiments}) indicate that with moderate values of $U$ and $D$, even if \cref{assump:fast convergence ngvi} may fail, the performance of \algname{Proj-SNGD} is no worse (and sometimes better) than \algname{SNGD}.\par

    \section{Experiments}\label{sec:experiments}

    \paragraph{Numerical Stability of \algname{SNGD} and \algname{Proj-SNGD}.}
    In this experiment, we again focus on Poisson regression as discussed in \cref{fig:poisson 0}. We consider data $(x, y)=(0.9, 24)$ where $y$ is sampled from $\mathrm{Poisson}(e^{4x})$, fixed initialization of variance parameter $\sigma_0^2=2$, and stochastic initialization of mean parameter $\mu_0\sim \mathrm{Unif}([-3,0])$. We aim to demonstrate numerical stability of \algname{Proj-SNGD} and \algname{SNGD}.\footnote{Poisson regression admits a closed-form gradient, and thus no randomness is involved in the training process. The only source of randomness is the initialization of $\mu_0$.}
    \begin{figure}
        \centering
    \subfloat{
        \includegraphics[width=0.36\textwidth]{new_plots/poisson_1_new.pdf}
    }\hspace{2em}
    \subfloat{
        \includegraphics[width=0.36\textwidth]{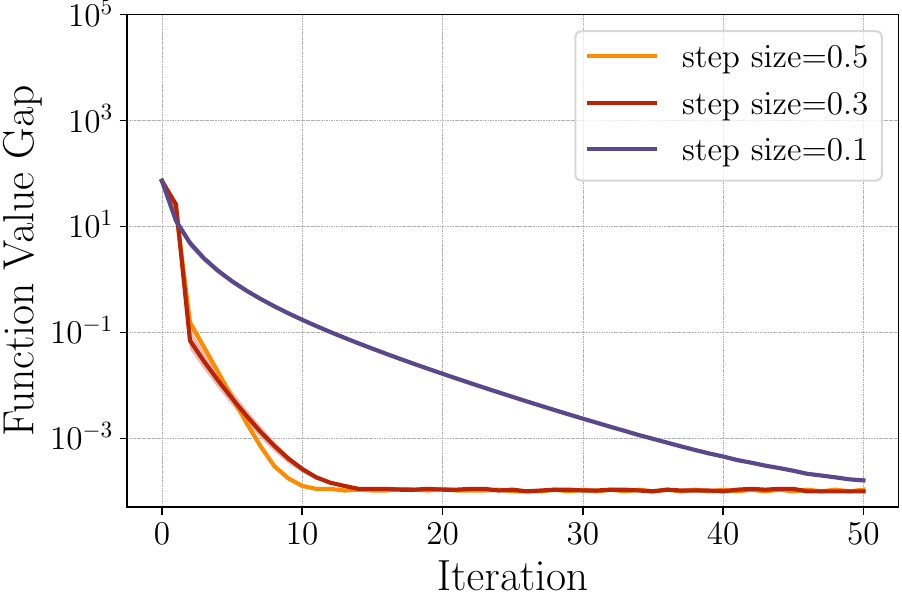}
    }
    \caption{\algname{SNGD} (left panel) and \algname{Proj-SNGD} (right panel) applied to Poisson regression problem with different step-sizes and initialized with $\sigma_0^2 = 2.$ \algname{Proj-SNGD} is used with $U=4$ and $D=25$. \textit{Non-Euclidean projection} improves convergence and sensitivity of \algname{SNGD}. The left panel here is the same as the left panel in \Cref{fig:poisson 0}. The right panel shows that projection fixes \algname{SNGD} even when starting with the same (large) initial point $\sigma_0^2=2.$}
    \label{fig:poisson}
    \end{figure}
    The results of 10 independent runs are shown in Figure \ref{fig:poisson}. The solid line represents the median and the shaded area indicates the first and third quartiles. In the left panel of \cref{fig:poisson}, we observe an increase in $\ell$ at the initial iteration when using \algname{SNGD} with larger step sizes. However, with the same initialization, the increase is absent for \algname{Proj-SNGD}, as shown in the right panel of \cref{fig:poisson}. Moreover, \algname{Proj-SNGD} exhibits faster convergence and, despite the constraint an additional $\tilde\Omega$, it converges to the same optimal solution as \algname{SNGD}.
    \paragraph{Experiment on MNIST Dataset.}
    In this experiment, we compare non-Euclidean algorithms (\algname{Proj-SNGD} and \algname{SNGD}) with Euclidean algorithms (\algname{Prox-SGD} and \algname{Proj-SGD}) proposed in \citep{domke2020provable}. Details of implementation and additional results can be found in \cref{app:additional experiments}. We consider logistic regression on a subset of MNIST dataset \citep{lecun1998mnist} with labels 6 or 8 ($n=11769$, $d=784$). We use mini-batches of size 2000 and set the step size $\gamma_t=\gamma_0/\sqrt{t}$, where $\gamma_0$ is a hyperparameter to be finetuned. We run the algorithm for 1000 epochs (6000 iterations). The results averaged over 5 independent runs are shown in Figure \ref{fig:mnist}. In the left panel of Figure \ref{fig:mnist}, we observe that non-Euclidean algorithms converge faster than Euclidean ones with fine-tuned step-size. The right panel illustrates robustness to the initial step size $\gamma_0$, with lower iteration counts indicating faster convergence. Non-Euclidean algorithms reach the threshold in around 1000–2000 iterations for a wide range \(0.05\leq \gamma_0\leq 0.2\), while Euclidean algorithms only do so at $\gamma_0=0.02$. We also observe that \algname{Proj-SNGD} slightly outperforms \algname{SNGD}. Therefore, non-Euclidean algorithms, especially \algname{Proj-SNGD}, are more robust to step-size, potentially reducing the tuning burden in practice.\par
    Additional experimental results on other datasets can be found in \cref{app:additional datasets}.
    \begin{figure}
        \centering
    \subfloat{
        \includegraphics[width=0.36\textwidth]{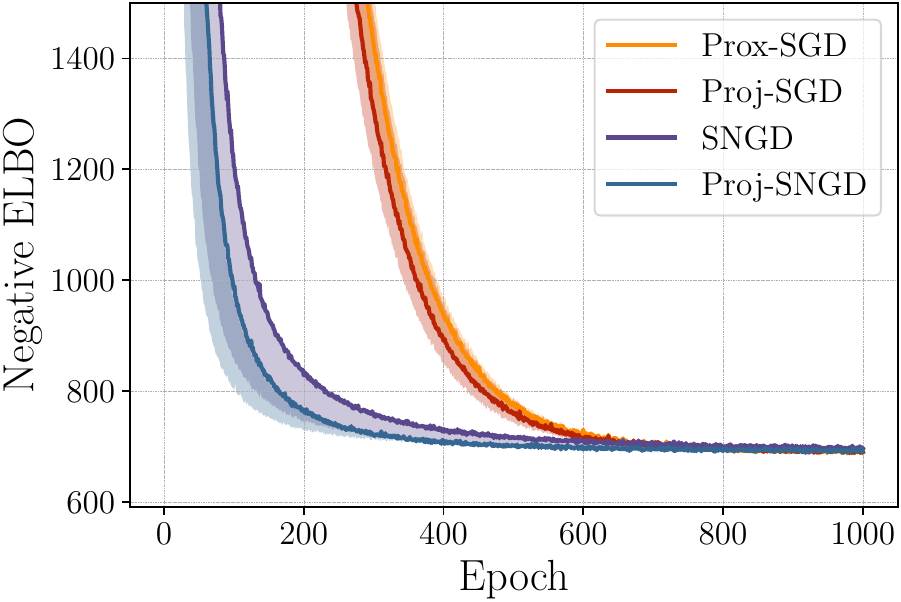}
    }\hspace{2em}
    \subfloat{
        \includegraphics[width=0.36\textwidth]{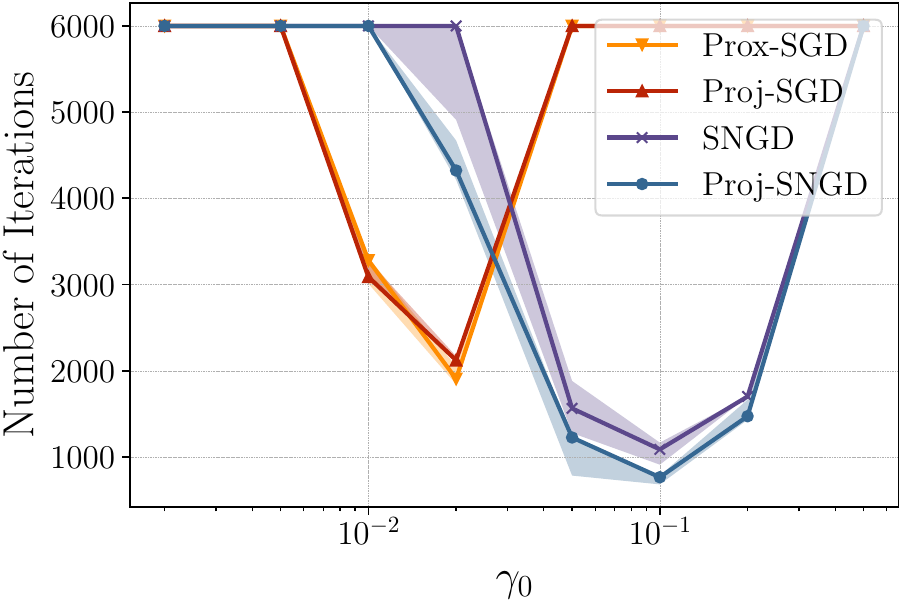}
    }
    \caption{Euclidean and non-Euclidean algorithms on MNIST dataset. Left: Objective during optimization with tuned step size.
    Right: Number of iterations before the objective falls below $\ell(\omega) \leq 700$ for different initial step-sizes $\gamma_0$. Non-Euclidean algorithms show consistently better performance, tolerate larger step-sizes and are more robust to step-size tuning.
    }
    \label{fig:mnist}
    \end{figure}
\section{Limitations and Future Work}\label{sec:discussion}
While our work makes a significant progress in understanding NGVI in non-conjugate models, there are certain limitation we want to discuss. First, our study of landscape properties is restricted to a compact domain. Whether global relative smoothness and/or PL inequality holds remains an open question. Second, our analysis is limited to the mean-field variational family; extending the analysis to full-covariance Gaussian family is an important direction for future work. Third, to obtain $\cO(1/T)$ convergence rate, we invoke an assumption on the behavior of objective on the boundary of the domain, which might be challenging to verify, and it would be important to remove this assumption.
\bibliographystyle{alpha}
\bibliography{main}

\newpage
\appendix

\tableofcontents
\clearpage
\section*{Appendix}
\addcontentsline{toc}{section}{Appendix}
\section{Summary of Assumptions in this Paper}
Below is a summary of the assumptions used in this paper for establishing the convergence of \algname{Proj-SNGD}:
\begin{enumerate}
    \item The objective $\ell$ is smooth relative to $A^*$.  
    This assumption is required for both \cref{thm:convergence rate} and \cref{thm:fast convergence ngvi}, and can be verified using \cref{thm:sufficient smoothness}.
    
    \item The stochastic gradient is unbiased with bounded variance (\cref{assump:bounded variance b}).  
    This is needed for both \cref{thm:convergence rate} and \cref{thm:fast convergence ngvi}, and holds for logistic regression models (see \cref{app:variance of logistic regression}).
    
    \item The relative interior $\mathrm{ri}(\tilde\Omega)$ contains a stationary point $\omega^*$.  
    This condition is required for both \cref{thm:convergence rate} and \cref{thm:fast convergence ngvi}. It is a mild assumption, since $\ell$ is coercive under very weak conditions (see \cref{app:coerciveness}).
    
    \item The descent direction points inward on the boundary of $\tilde\Omega$.  
    This assumption is only required in \cref{thm:fast convergence ngvi}, and it is satisfied, for example, in a univariate Bayesian linear regression model (see \cref{app:assumption example}).
    
    \item The log-likelihood $\log p(\mathcal D \,|\, z)$ is concave in $z$.  
    This is only needed in \cref{thm:fast convergence ngvi}, and holds for many models, including logistic regression and Poisson regression.
\end{enumerate}
\newpage
    \section{Descent Lemma for Mirror Descent}\label{app:descent lemma}
    In this section, we prove a descent lemma for mirror descent.
    \begin{lemma}\label{lemma:descent}
        Let $\Phi$ be a strictly convex distance generating function on a closed domain $\Omega\subseteq\R^d$ with induced Bregman divergence $D_\Phi$. Assume that $\ell:\U\to\R$ is $L$-smooth relative to $\Phi$. Consider (deterministic) mirror descent update with step size $\gamma$ \[u_{t+1}=\argmin_{u\in\U}~ \gamma\<\nabla \ell(u_t), u\>+D_\Phi(u, u_t),\]then with step size $\gamma=1/L$, the objective decreases after each iteration, i.e., \[\ell(u_{t+1})\leq \ell(u_t).\]
    \end{lemma}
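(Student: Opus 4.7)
The plan is to combine the upper bound from relative smoothness with the optimality property of the mirror descent iterate. First, since $\ell$ is $L$-smooth relative to $\Phi$ (with $L = \beta = -\alpha$ in the notation of \Cref{def:relative smoothness}), applying the upper half of the relative smoothness inequality with $v = u_{t+1}$ and $u = u_t$ yields
\begin{equation*}
\ell(u_{t+1}) \leq \ell(u_t) + \langle \nabla \ell(u_t), u_{t+1} - u_t \rangle + L\, D_\Phi(u_{t+1}, u_t).
\end{equation*}
This is the standard descent-lemma template and serves as the starting point of the argument.

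Next, I would exploit the definition of $u_{t+1}$ as the minimizer of $F(u) := \gamma \langle \nabla \ell(u_t), u \rangle + D_\Phi(u, u_t)$ over $\U$. Since $u_t \in \U$ is a feasible competitor and $D_\Phi(u_t, u_t) = 0$, comparing $F(u_{t+1}) \leq F(u_t)$ gives
\begin{equation*}
\gamma \langle \nabla \ell(u_t), u_{t+1} - u_t \rangle + D_\Phi(u_{t+1}, u_t) \leq 0.
\end{equation*}
Substituting the chosen step size $\gamma = 1/L$ and multiplying by $L$, this becomes
\begin{equation*}
\langle \nabla \ell(u_t), u_{t+1} - u_t \rangle + L\, D_\Phi(u_{t+1}, u_t) \leq 0,
\end{equation*}
which is precisely the linear-plus-Bregman quantity appearing on the right-hand side of the smoothness inequality.

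Combining the two displays immediately yields $\ell(u_{t+1}) \leq \ell(u_t)$, completing the proof. There is no real obstacle here: the argument is the textbook Bregman descent lemma specialized to $\gamma = 1/L$, and it uses only (i) the upper relative-smoothness bound on $\ell$ (which does not require the lower $-L D_\Phi$ bound), and (ii) the variational characterization of the mirror descent step. The only mild subtlety is ensuring that $D_\Phi(u_t, u_t) = 0$ and that $u_t$ is a legitimate competitor in the $\argmin$, both of which follow directly from the hypotheses ($\Phi$ strictly convex and differentiable, $u_t \in \U$).
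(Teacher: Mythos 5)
Your proof is correct, and it reaches the conclusion by a slightly more elementary route than the paper. Both arguments start from the same relative-smoothness descent template $\ell(u_{t+1}) \leq \ell(u_t) + \langle \nabla \ell(u_t), u_{t+1}-u_t\rangle + L\,D_\Phi(u_{t+1},u_t)$; the difference is in how the optimality of $u_{t+1}$ is used. You use only the trivial value comparison $F(u_{t+1}) \leq F(u_t)$ with $F(u) = \gamma\langle\nabla\ell(u_t),u\rangle + D_\Phi(u,u_t)$, which is self-contained and suffices for the stated conclusion. The paper instead invokes the three-point (proximal) inequality from Lemma F.1 of \citep{fatkhullin2024taming}, namely $\langle\nabla\ell(u_t),v\rangle + \tfrac{1}{\gamma}D_\Phi(v,u_t) \geq \langle\nabla\ell(u_t),u_{t+1}\rangle + \tfrac{1}{\gamma}D_\Phi(u_{t+1},u_t) + \tfrac{1}{\gamma}D_\Phi(v,u_{t+1})$, and sets $v=u_t$. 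That stronger tool buys the quantitative decrease $\ell(u_{t+1}) \leq \ell(u_t) - L\,D_\Phi(u_t,u_{t+1})$, which the paper then relaxes to $\ell(u_{t+1})\leq\ell(u_t)$; your version forgoes this extra term but proves exactly what the lemma claims with less machinery. No gaps.
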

    \cref{lemma:descent} indicates that if a mirror descent update with step size $\gamma$ causes an increase of the objective, the relative smoothness parameter must be greater than $1/\gamma$. Thus in the left panel of \cref{fig:poisson 0}, the smoothness parameter is greater than $1/0.3$, hence 1-relative smoothness fails in the non-conjugate model.\par
    In addition, the necessary condition of $\alpha$-$\beta$ relative smoothness in univariate case (\cref{eq:iff smooth}) implies \[\beta\geq \frac{\sigma^4}{2}x^4e^{\frac{\sigma^2x^2}{2}-2x},\]thus the smoothness parameter should grow exponentially with respect to $\sigma_0^2$. Therefore, \cref{fig:poisson 0} is consistent with our theoretical findings.
    \begin{proof}
        By the second part of Lemma F.1 in \citep{fatkhullin2024taming} and the definition of $u_{t+1}$, we have that for all $v\in\U$, \[\<\nabla \ell(u_t), v\>+\frac{1}{\gamma}D_\Phi(v, u_t)\geq \<\nabla \ell(u_t), u_{t+1}\>+\frac{1}{\gamma}D_\Phi(u_{t+1}, u_t)+\frac{1}{\gamma}D_\Phi(v, u_{t+1}).\]Then, we use the relative smoothness of $\ell$ to get that for all $v\in \U$,\begin{align*}
            \ell(u_{t+1})\leq\, & \ell(u_t)+\<\nabla \ell(u_t), u_{t+1}-u_t\>+LD_\Phi(u_{t+1}, u_t)\\
            =\, &\ell(u_t)-\<\nabla \ell(u_t), u_t\>+\left(\<\nabla \ell(u_t), u_{t+1}\>+\frac{1}{\gamma}D_\Phi(u_{t+1}, u_t)\right)\\
            \leq \, & \ell(u_t)-\<\nabla \ell(u_t), u_t\>+\<\nabla \ell(u_t), v\>+\frac{1}{\gamma}D_\Phi(v, u_t)-\frac{1}{\gamma}D_\Phi(v, u_{t+1}).
        \end{align*}
        Finally, we choose $v=u_t$ and the result follows.
    \end{proof}
    
    \newpage
    
    \section{Background on Exponential Family}
    In this section, we will provide background information about the exponential family. In \cref{app:facts gaussian}, we will derive the natural and expectation parameters of a Gaussian distribution. We will prove the simple form of \algname{NGD} update (formula \eqref{pre ngvi update}) in \cref{app:derivation ngvi}.
    \subsection{Facts about Gaussian Variational Family}\label{app:facts gaussian}
    Recall that $q$ belongs to exponential family if it takes the following form:
    \[
q(z;\eta) = h(z) \exp\left( \<\phi(z), \eta\> - A(\eta) \right),
\]
where $\eta$ is the natural parameter, $\phi(z)$ is the sufficient statistic, and $A(\eta)$ is the log-partition function. The expectation parameter is defined as $\omega=\E_{q(z;\eta)}[\phi(z)]$. Now we let $q(z)$ be the PDF of Gaussian random vector $\mathcal{N}(\mu, \Sigma)$, then \begin{align*}
        q(z)\propto& \exp\left((z-\mu)^\top \Sigma^{-1}(z-\mu)-\frac{1}{2}\log \mathrm{det}(\Sigma)\right)\\
        \propto & \exp\left(\<zz^\top, -\frac{1}{2}\Sigma^{-1}\>+\<z, \Sigma^{-1}\mu\>-\<\mu, \Sigma^{-1}\mu\>-\frac{1}{2}\log \mathrm{det}(\Sigma)\right).
    \end{align*}
    Therefore, the sufficient statistics are $z$ and $zz^\top$, and we have the following fact.
    \begin{fact}
        For multivariate Gaussian distribution $q$, the expectation parameters of $q$ are given by $\xi\coloneqq \E[z]=\mu$ and $\Xi\coloneqq \E[zz^\top]=\Sigma+\mu\mu^\top$.
        The natural parameters of $q$ are given by $\lambda\coloneqq \Sigma^{-1}\mu$ and $\Lambda\coloneqq -\frac{1}{2}\Sigma^{-1}$.
    \end{fact}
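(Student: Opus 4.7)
The plan is simply to read off both sets of parameters from the exponential-family factorization of the Gaussian density that was just written out immediately before the Fact. That display,
\[
q(z) \propto \exp\bigl(\langle zz^\top, -\tfrac{1}{2}\Sigma^{-1}\rangle + \langle z, \Sigma^{-1}\mu\rangle - \langle \mu, \Sigma^{-1}\mu\rangle - \tfrac{1}{2}\log\det(\Sigma)\bigr),
\]
already matches the exponential-family template $q(z;\eta) = h(z)\exp(\langle \phi(z), \eta\rangle - A(\eta))$ with sufficient statistic $\phi(z) = (z, zz^\top)$, so the natural parameters can be read off as the coefficients paired with $z$ and $zz^\top$ respectively, yielding $\lambda = \Sigma^{-1}\mu$ and $\Lambda = -\tfrac{1}{2}\Sigma^{-1}$. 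The remaining $z$-independent terms $-\langle \mu, \Sigma^{-1}\mu\rangle - \tfrac{1}{2}\log\det(\Sigma)$ together constitute the log-partition function $A(\eta)$.

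For the expectation parameters, I would compute the two moments separately. The mean $\xi = \E_q[z] = \mu$ is immediate from the definition of $\mathcal{N}(\mu, \Sigma)$. For the second moment, I would apply the standard identity $\Sigma = \E_q[(z-\mu)(z-\mu)^\top] = \E_q[zz^\top] - \mu\mu^\top$, which rearranges to $\Xi = \E_q[zz^\top] = \Sigma + \mu\mu^\top$.

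There is no real obstacle here: the whole argument reduces to bookkeeping once the exponential-family form has been written out, and the only nontrivial algebraic step, namely completing the square inside the exponent, has already been carried out in the lines preceding the Fact. The one minor point to be careful about is to use the symmetric-matrix inner product $\langle A, B\rangle = \mathrm{tr}(A^\top B)$ consistently when identifying $\Lambda$ from the coefficient of $\langle zz^\top, \cdot\rangle$; since $-\tfrac{1}{2}\Sigma^{-1}$ is itself symmetric no ambiguity arises, but this convention is worth noting explicitly.
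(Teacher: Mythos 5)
Your proposal is correct and matches the paper's own derivation: the paper likewise rewrites the Gaussian density in exponential-family form and reads off the natural parameters as the coefficients of $z$ and $zz^\top$, with the expectation parameters obtained as the moments $\E[z]$ and $\E[zz^\top]$ of the sufficient statistics. No meaningful difference in approach.
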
Moreover, by plugging the definition of $\lambda$ and $\Lambda$, one have \[A(\lambda, \Lambda)=\<\mu, \Sigma^{-1}\mu\>+\frac{1}{2}\log \mathrm{det}(\Sigma)=-\frac{1}{4}\lambda^\top \Lambda^{-1}\lambda-\frac{1}{2}\log\mathrm{det}(-2\Lambda).\]
    Next, we compute $A^*$ via the definition of convex conjugate:\begin{align*}
        A^*(\xi, \Xi)=\max_{\lambda, \Lambda\prec 0}\left\{\<\xi, \lambda\>+\<\Xi, \Lambda\>+\frac{1}{4}\lambda^\top \Lambda^{-1}\lambda+\frac{1}{2}\log\det(-2\Lambda)\right\}.
    \end{align*}
    It's easy to see that the optimal solution of the previous problem is given by\[\lambda^*=-2\Lambda^* \xi,\quad \Lambda^*=(\xi\xi^\top-\Xi)^{-1}.\]Moreover, we have the following result.
    \begin{fact}\label{fact A^*}The convex conjugate $A^*$ equals
        \[A^*(\xi, \Xi)=\begin{cases}
        -\frac{1}{2}\log\mathrm{det}(\Xi-\xi\xi^\top)+d+\frac{d}{2}\log 2, & \text{if }~\Xi-\xi\xi^\top\succ 0,\\
        +\infty,&\text{otherwise.}
    \end{cases}\]
    \end{fact}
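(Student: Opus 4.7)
The plan is to solve the variational formula displayed immediately above the statement, namely
\[
A^*(\xi,\Xi) \;=\; \sup_{\lambda \in \R^d,\,\Lambda \prec 0}\Bigl\{\<\xi,\lambda\> + \<\Xi,\Lambda\> + \tfrac{1}{4}\lambda^\top \Lambda^{-1}\lambda + \tfrac{1}{2}\log\det(-2\Lambda)\Bigr\},
\]
and to handle the boundary regime separately. First I would verify joint concavity of the inner objective on $\R^d \times \{\Lambda \prec 0\}$: the bilinear terms are affine, $\log\det(-2\Lambda)$ is concave via composition with $\Lambda \mapsto -2\Lambda$, and $\lambda^\top \Lambda^{-1}\lambda = -\lambda^\top(-\Lambda)^{-1}\lambda$ is jointly concave in $(\lambda,\Lambda)$ because $(x,Y)\mapsto x^\top Y^{-1}x$ is jointly convex on $\R^d \times \mathcal{S}_+^d$ (a standard Schur-complement fact). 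Joint concavity legitimizes solving by first-order conditions.

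Next I would execute the maximization in two stages. Differentiating in $\lambda$ gives $\xi + \tfrac{1}{2}\Lambda^{-1}\lambda = 0$, hence $\lambda^*(\Lambda) = -2\Lambda\xi$, which recovers the relation quoted in the text. Substituting this back eliminates $\lambda$ and, after the routine simplification $\<\xi,-2\Lambda\xi\> + \tfrac{1}{4}(-2\Lambda\xi)^\top \Lambda^{-1}(-2\Lambda\xi) = -\<\xi\xi^\top,\Lambda\>$, reduces the problem to $\sup_{\Lambda \prec 0}\bigl\{\<S,\Lambda\> + \tfrac{1}{2}\log\det(-2\Lambda)\bigr\}$ with $S := \Xi - \xi\xi^\top$. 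Using the matrix-calculus identity $\nabla_\Lambda \log\det(-2\Lambda) = \Lambda^{-1}$ and setting the gradient to zero forces $\Lambda^{-1}$ to be a negative multiple of $S$, which is compatible with the feasibility constraint $\Lambda \prec 0$ exactly when $S \succ 0$. Plugging the resulting $\Lambda^*$ back in and collecting the constants arising from $\mathrm{Tr}(S\Lambda^*)$ and $\log\det(-2\Lambda^*)$ will produce the claimed closed-form expression on $\{\Xi - \xi\xi^\top \succ 0\}$.

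For the complementary regime I would exhibit explicit divergent sequences showing the supremum is $+\infty$. If $S$ has a strictly negative eigenvalue with unit eigenvector $v$, the family $\Lambda_t := -I - t\,vv^\top \prec 0$ satisfies $\<S,\Lambda_t\> = -\mathrm{Tr}(S) - t\,v^\top Sv \to +\infty$ linearly in $t$, while $\tfrac{1}{2}\log\det(-2\Lambda_t)$ grows only logarithmically, so the objective blows up; if instead $S \succeq 0$ is merely singular with null eigenvector $v$, the family $-\Lambda_t := \varepsilon I + t\,vv^\top$ keeps $\<S,\Lambda_t\>$ bounded while $\log\det(-2\Lambda_t)$ still diverges. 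Either way $A^*(\xi,\Xi) = +\infty$. The only nontrivial obstacle is bookkeeping---carefully tracking the additive constants at substitution to match the stated $d + \tfrac{d}{2}\log 2$ term, which amounts to keeping the $-\tfrac{1}{2}\mathrm{Tr}(S \cdot S^{-1})$ and $\tfrac{1}{2}\log\det(S^{-1})$-type contributions cleanly separate---since the divergence argument for the second branch is the only step conceptually distinct from routine first-order optimization.
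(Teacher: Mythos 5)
Your proposal is correct and follows essentially the same route as the paper, which also evaluates the conjugate supremum by stationarity first in $\lambda$ (giving $\lambda^*=-2\Lambda\xi$) and then in $\Lambda$; your joint-concavity check and the explicit divergence argument on the branch $\Xi-\xi\xi^\top\not\succ 0$ are details the paper simply omits. One heads-up for your bookkeeping step: the stationary point is $\Lambda^*=-\tfrac12(\Xi-\xi\xi^\top)^{-1}$ (consistent with $\Lambda=-\tfrac12\Sigma^{-1}$; the paper's displayed $\Lambda^*=(\xi\xi^\top-\Xi)^{-1}$ drops the factor $\tfrac12$), and substituting it yields the additive constant $-\tfrac{d}{2}$ rather than the stated $d+\tfrac{d}{2}\log 2$ --- a harmless discrepancy, since only $\nabla A^*$ and $\nabla^2 A^*$ are used downstream.
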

    Based on \cref{fact A^*}, we can check that\begin{gather*}
        \nabla_{\lambda} A(\lambda, \Lambda)=-\frac{1}{2}\Lambda^{-1}\lambda=\xi,\quad \nabla_\Lambda A(\lambda, \Lambda)=\frac{1}{4}\Lambda^{-1}\lambda\lambda^\top\Lambda^{-1}-\frac{1}{2}\Lambda^{-1}=\Xi,\\
        \nabla_\xi A^*(\xi, \Xi)=(\Xi-\xi\xi^\top)^{-1}\xi=\lambda,\quad \nabla_\Xi A^*(\xi, \Xi)=-\frac{1}{2}(\Xi-\xi\xi^\top)^{-1}=\Lambda.
    \end{gather*}
    This also indicates that $\nabla A$ and $\nabla A^*$ are inverse operators of each other.\par
    Moreover, when $d=1$, it's straightforward to check that \[\nabla^2A^*(\omega)=\frac{1}{\sigma^4}\begin{pmatrix}
            2\mu^2+\sigma^2 & -\mu \\ -\mu & \frac{1}{2}
        \end{pmatrix}.\]
    This result will be useful in the proof of relative smoothness, e.g., in \cref{sec:relative_smoothness} \eqref{eq:hessian 1d 2}.\par 
    Next, we consider the mean field family, \[q(z)\propto \exp\left(\sum_{i=1}^d -\frac{1}{2\Sigma_{ii}}z_i^2+\frac{\mu_i}{\Sigma_{ii}}z_i+\frac{\mu_i^2}{\Sigma_{ii}}-\frac{1}{2}\log \Sigma_{ii}\right).\]
    \begin{fact}\label{fact mf gaussian}
        For multivariate Gaussian distribution $q$ under mean-field parameterization, the expectation parameters of $q$ are given by $\xi_i\coloneqq \E[z_i]=\mu_i$ and $\Xi_{ii}\coloneqq \E[z_i^2]=\Sigma_{ii}+\mu_i^2$ entry-wise. Equivalently, we have \[\xi=\mu, \quad \Xi=\Sigma+\diag(\mu\odot \mu).\]Similarly, we also have $\lambda\coloneqq \Sigma^{-1}\mu$ and diagonal matrix $\Lambda=-\frac{1}{2}\Sigma^{-1}$.
    \end{fact}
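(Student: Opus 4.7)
The plan is to read off both the sufficient statistics and the natural parameters from the exponential form of $q(z)$ displayed just above the statement, and to match them against the definition $\omega = \E_{q(z;\eta)}[\phi(z)]$ from \cref{sec:background_vi}. Since the mean-field density factorizes as $q(z) = \prod_{i=1}^d q_i(z_i)$ with $q_i$ the density of $\mathcal{N}(\mu_i, \Sigma_{ii})$, the sufficient statistics are simply $\phi(z) = (z_1,\dots,z_d,\; z_1^2,\dots,z_d^2)$, paired with natural parameters $\lambda \in \R^d$ for the linear terms and a diagonal matrix $\Lambda$ for the quadratic terms. This identification exactly mirrors what was done in the full-covariance case just before the statement, except that off-diagonal quadratic statistics are absent.

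For the expectation parameters, I would use independence of the coordinates (a consequence of $\Sigma$ being diagonal) and compute marginal moments of each $\mathcal{N}(\mu_i, \Sigma_{ii})$: directly, $\E[z_i] = \mu_i$ and $\E[z_i^2] = \mathrm{Var}(z_i) + (\E[z_i])^2 = \Sigma_{ii} + \mu_i^2$. Stacking these into a vector and a diagonal matrix yields $\xi = \mu$ and $\Xi = \Sigma + \diag(\mu \odot \mu)$, which is the first assertion. Note that this is consistent with the domain $\Omega$ defined in \cref{sec:background_vi}, where $\Xi$ is constrained to be diagonal in the mean-field case, so no off-diagonal second moments need to be tracked.

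For the natural parameters, I would expand the quadratic $-(z_i-\mu_i)^2/(2\Sigma_{ii})$ inside the exponent and read off coefficients: the coefficient of $z_i$ is $\mu_i/\Sigma_{ii}$ and the coefficient of $z_i^2$ is $-1/(2\Sigma_{ii})$. Collecting these across $i$ and using that $\Sigma$ is diagonal, we obtain $\lambda_i = \mu_i/\Sigma_{ii}$, i.e.\ $\lambda = \Sigma^{-1}\mu$, and $\Lambda_{ii} = -1/(2\Sigma_{ii})$ with all off-diagonal entries equal to zero, i.e.\ $\Lambda = -\tfrac{1}{2}\Sigma^{-1}$ is diagonal.

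I do not anticipate a real obstacle, since this is essentially an unpacking of exponential-family notation for a product of independent Gaussians. The only point worth emphasizing, relative to the full-covariance Gaussian family treated immediately above, is that the diagonal structure of $\Sigma$ forces $\Lambda$ to be diagonal as well, so the mean-field family is a strict sub-family of the full Gaussian exponential family with a lower-dimensional natural parameter space; this also explains why the relevant parameter domain $\Omega$ in \cref{sec:background_vi} restricts $\Xi - \diag(\xi\odot\xi)$ to be diagonal.
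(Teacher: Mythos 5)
Your proposal is correct and matches the paper's treatment: the paper likewise displays the factorized exponential form of the mean-field density, reads off the sufficient statistics $z_i$ and $z_i^2$ with their coefficients to identify $\lambda=\Sigma^{-1}\mu$ and $\Lambda=-\tfrac12\Sigma^{-1}$, and obtains the expectation parameters from the first and second marginal moments. Your added remark about the diagonal structure forcing $\Lambda$ to be diagonal is consistent with, though not spelled out in, the paper.
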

    In addition, \cref{fact mf gaussian} implies that the transformation between any two of the natural, expectation and standard parameters can be performed efficiently in $\cO(d)$ time. Therefore, every update of \algname{Proj-SNGD} in \eqref{eq:update rule} (given stochastic gradient $\hat \nabla \ell(\omega_t)$) can be performed in $\cO(d)$ time.
    \subsection{Derivation of \algname{SNGD}}\label{app:derivation ngvi}
    In this section, we show that formula \eqref{pre ngvi update} holds, i.e., \[\mathcal{I}(\eta_t)^{-1}\nabla \ell(\eta_t)=\nabla \ell(\omega_t),\]
    where $\eta_t$ and $\omega_t$ are the natural and expectation parameter of the same distribution. The proof is adapted from \citep{raskutti2015information} and \citep{wu2024understanding}.

    \begin{proofof}{Equation \eqref{pre ngvi update}}
        We first prove that $\mathcal{I}(\eta)=\nabla^2 A(\eta)$. Since \[q(z;\eta)=h(z)\exp(\<\phi(z), \eta\>-A(\eta)),\]we have \[\nabla_\eta \log q(z;\eta)=\phi(z)-\nabla A(\eta)=\phi(z)-\E[\phi(z)].\]By definition of Fisher information and duality between $\eta$ and $\omega$, \begin{align*}
            \mathcal{I}(\eta)=&\E[\nabla_\eta \log q(z;\eta)\nabla_\eta \log q(z;\eta)^\top]=\mathrm{Cov}(\phi(z)).
        \end{align*}
        Moreover, we have \begin{align*}
            \nabla^2 A(\eta)=&\nabla_\lambda(\E[\phi(z)])\\
            =&\nabla_\lambda \int q(z)\phi(z)\,\mathrm{d}z\\
            =&\int \nabla_\lambda (q(z))\phi(z)\,\mathrm{d}z\\
            =&\int q(z)\nabla_\lambda (\log q(z))\phi(z)\,\mathrm{d}z\\
            =&\E[\phi(z)(\phi(z)-\E[\phi(z)])^\top]\\
            =&\mathrm{Cov}(\phi(z)).
        \end{align*}
        Then, we use the chain rule to get \[\nabla_\eta \ell_\eta(\eta)=\nabla_\eta \ell_\omega(\nabla A(\eta))=\nabla_\eta^2 A(\eta)\cdot \nabla_\omega \ell_\omega(\omega)=\mathcal{I}(\eta)\nabla \ell(\omega).\]Multiplying $\mathcal{I}(\eta)^{-1}$ on both sides gives the desired result.
    \end{proofof}

    \newpage 
    
    \section{Coerciveness of the Variational Objective for Gaussian Prior and Gaussian Variational Family}\label{app:coerciveness}

    In this subsection, we aim to investigate the coerciveness of the variational loss $\ell(\omega)$ with Gaussian variational family. 

    \begin{definition}[Coerciveness on Bounded Domain]
        Let $f:\U\to\R$ be a real-valued function defined on an open set $\U$. We say $f$ is coercive if $f(\omega)\to +\infty$ as $u\in \U$, $u\to \partial \U$ or $\|u\|\to \infty$, where $\partial \U$ denotes the boundary of $\U$.
    \end{definition}  
    Notice that coerciveness may potentially depend on specific parameterization. We consider $4$ different parameterizations of Gaussian variational family in our work:
    \begin{align*}
    &\text{Standard:} \qquad &(\mu, \Sigma) \\
    &\text{Expectation:} \qquad & \omega = (\mu, \Sigma + \mu \mu^{\top}) \\
    &\text{Cholesky:} \qquad & c(\omega) = (\mu, C ) \\
    &\text{Natural:} \qquad & \eta = \left(\Sigma^{-1} \mu, - \frac{1}{2}\Sigma^{-1} \right) 
    \end{align*}
    In Cholesky parameterization, $C$ is defined as the Cholesky factor of $\Sigma$. Since our focus is the convergence of NGVI, we are mainly interested in the landscape properties of $\ell(\omega)$ in expectation parameterization. We first show objective $\ell(\mu, \Sigma)$ is coercive in standard parameterization $(\mu, \Sigma)\in \R^d\times \mathcal{S}^d_+$.
    \begin{theorem}[Coerciveness of the Variational Objective in Standard Parameterization]\label{thm:coerciveness}
Let \( q(z) = \mathcal{N}(z; \mu, \Sigma) \) be a Gaussian variational distribution with mean parameter \( \mu \in \mathbb{R}^d \) and covariance matrix \( \Sigma \in \mathcal{S}_+^d \). Let the prior be the standard normal distribution \( p(z) = \mathcal{N}(z; 0, I) \). Suppose the log-likelihood function satisfies a \emph{sub-quadratic growth} condition: there exist constants \( c_0, c_1 \geq 0 \) and an exponent \( r \in [1, 2) \) such that for any observed data \(\D\) and all \( z \in \mathbb{R}^d \),
\begin{equation}\label{sub-quadratic}
\log p(\D\,|\, z) \leq c_0 + c_1 \|z\|^r.
\end{equation}
Then the variational objective \( \ell : \mathbb{R}^d \times \mathcal{S}_+^d \to \mathbb{R} \) defined in \eqref{objective} is coercive, in the sense that
\[
\| (\mu, \Sigma) \| \to \infty \quad \text{or} \quad \det(\Sigma) \to 0 \quad \Longrightarrow \quad \ell(\mu, \Sigma) \to \infty.
\]
\end{theorem}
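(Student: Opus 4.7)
The plan is to derive an explicit lower bound on $\ell(\mu,\Sigma)$ as a sum of functions of the individual components (the mean norm squared and each eigenvalue of $\Sigma$), and then show each of those building blocks is itself coercive on its respective domain.

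First I would write out the two pieces of $\ell$ explicitly. For Gaussian prior and Gaussian variational family, the KL term admits the standard closed form
\[
\KL{q(z;\mu,\Sigma)}{p(z)} = \tfrac{1}{2}\bigl[\mathrm{tr}(\Sigma) + \|\mu\|^2 - d - \log\det(\Sigma)\bigr].
\]
For the log-likelihood term, I would apply the sub-quadratic bound \eqref{sub-quadratic} and then Jensen's inequality to the concave map $t\mapsto t^{r/2}$ (valid since $r/2<1$), using $\E_q[\|z\|^2]=\|\mu\|^2+\mathrm{tr}(\Sigma)$, to obtain
\[
-\E_{q(z;\mu,\Sigma)}[\log p(\D\,|\,z)] \;\geq\; -c_0 - c_1\bigl(\|\mu\|^2+\mathrm{tr}(\Sigma)\bigr)^{r/2}.
\]

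Next, writing $s:=\|\mu\|^2+\mathrm{tr}(\Sigma)$ and using Young's inequality with exponents $2/r$ and $2/(2-r)$ (this is the step where $r<2$ is essential), I would absorb $c_1 s^{r/2}$ into $s/4$ up to an additive constant $C(c_1,r)$. Letting $\lambda_1,\dots,\lambda_d>0$ be the eigenvalues of $\Sigma$, the bound then decomposes as
\[
\ell(\mu,\Sigma) \;\geq\; \tfrac{1}{4}\|\mu\|^2 + \sum_{i=1}^{d} g(\lambda_i) + C_0, \qquad g(\lambda):=\tfrac{\lambda}{4}-\tfrac{1}{2}\log\lambda,
\]
for some constant $C_0$ depending only on $c_0,c_1,r,d$. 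The function $g:(0,\infty)\to\R$ is smooth, strictly convex, attains its minimum at $\lambda=2$, and diverges to $+\infty$ at both endpoints $\lambda\to 0^+$ and $\lambda\to\infty$; in particular $g$ is bounded below.

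It then remains to translate the three divergence scenarios into divergence of this lower bound. If $\|\mu\|\to\infty$, the term $\|\mu\|^2/4$ dominates a bounded-below remainder. If $\|\Sigma\|\to\infty$ (in any matrix norm, using equivalence of norms and that $\Sigma$ is symmetric), the largest eigenvalue $\lambda_{\max}\to\infty$, and $g(\lambda_{\max})\to\infty$ while the other summands $g(\lambda_i)$ are bounded below. Finally, if $\det(\Sigma)\to 0$, then $\min_i \lambda_i\to 0^+$ along a subsequence, and again a single $g(\lambda_i)$ term forces the bound to $+\infty$. Combining the three gives coerciveness. The main technical obstacle is a bookkeeping one: making sure that when several of $\|\mu\|$, $\|\Sigma\|$ and $1/\det(\Sigma)$ diverge simultaneously, the sub-quadratic likelihood term cannot cancel the KL growth; the per-coordinate decomposition with $g$ bounded below isolates each divergent mode cleanly and sidesteps this issue.
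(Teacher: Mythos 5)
Your proposal is correct and follows essentially the same route as the paper's proof: closed-form KL, the sub-quadratic bound combined with Jensen to control $\E_q[\|z\|^r]$ by $(\|\mu\|^2+\mathrm{Tr}(\Sigma))^{r/2}$, and the observation that the quadratic and $-\log\det$ terms dominate because $r<2$. Your Young's-inequality absorption and per-eigenvalue decomposition via $g(\lambda)=\lambda/4-\tfrac{1}{2}\log\lambda$ merely make explicit the domination step that the paper states more informally, and correctly handle the case of simultaneous divergence.
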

    The sub-quadratic growth assumption \eqref{sub-quadratic} is a very mild condition which is satisfied by most problems in practice. For example, the log-likelihood function $\log p(\D\,|\,z)=\log p(y\,|\,x,z)$ for dataset $\D=\{(x_i,y_i)\}_{i=1}^n$ diverges to $-\infty$ when $\|z\|\to \infty$ in Bayesian regression and logistic regression, and has at most linear growth in Poisson regression for bounded $\{x_i\}_{i=1}^n$.\par 
    With this result, we can prove that the objective is coercive in all parameterizations as listed below.\begin{enumerate}
        \item \textbf{Expectation parameterization.} The domain of expectation parameter is $\Omega=\{(\xi, \Xi):\xi\in\R^d, \Xi-\xi\xi^\top \in\mathcal{S}_+^d\}$. For $\xi$, $\|\xi\|\to\infty$ if and only if $\|\mu\|\to\infty$. For unboundedness of $\Xi$, since $\|\Xi\|\leq \|\Sigma\|+\|\mu\|^2$, $\|\Xi\|\to\infty$ implies at least one of $\|\Sigma\|\to\infty$ and $\|\mu\|\to\infty$ holds. Moreover, if $(\xi,\Xi)\to\partial \Omega$, i.e., $\det(\Xi-\xi\xi^\top)\to 0$, this is equivalent to $\det(\Sigma)\to 0$.
        \item \textbf{Cholesky parameterization.} Since $C$ is defined as a lower-triangle matrix with positive diagonal entries, $\|C\|\to\infty$ if and only if $\|\Sigma\|\to\infty$, and $\det(C)\to 0$ if and only if $\det(\Sigma)\to 0$.
        \item \textbf{Natural parameterization.} The domain of natural parameter is $\D_A=\{(\lambda,\Lambda):\lambda\in\R^d,-\Lambda\in\mathcal{S}_+^d\}$. For $\lambda$, $\|\lambda\|\leq \|\Sigma^{-1}\|\|\mu\|=(\lambda_{\min}(\Sigma))^{-1}\|\mu\|$, thus $\|\lambda\|\to 0$ implies either $\det(\Sigma)\to 0$ or $\|\mu\|\to 0$. The proof of $\Lambda$ is obvious.
    \end{enumerate}
    \begin{proof}[Proof of \cref{thm:coerciveness}]
        It's clear that the boundary of $\R^d\times \mathcal{S}^d_+$ is $\R^d\times \{\Sigma: \Sigma\text{ is symmetric and }\mathrm{det(\Sigma)=0}\}.$ To show coerciveness, we need to prove that under the following 3 cases, the objective $\ell(\mu, \Sigma)$ will diverge to infinity:\begin{enumerate}
            \item $\|\mu\|\to\infty$,
            \item $\mathrm{det}(\Sigma)\to 0$,
            \item $\|\Sigma\|\to\infty$.
        \end{enumerate}
        Recall from \eqref{objective} that the variational objective is given by \[\ell(\mu, \Sigma)=\KL{q(z)}{p(z)}-\E_q[\log p(\D\,|\,z)].\]
        For the first term, we can find the closed-form solution of KL divergence between to Gaussian distributions:\[\KL{q(z)}{p(z)}=\frac{1}{2}\left[-\log \det(\Sigma)+\mathrm{Tr}(\Sigma)+\|\mu\|^2-d\right].\]For the second term, the sub-quadratic growth condition \eqref{sub-quadratic} implies that \[-\E_q[\log p(\D\,|\,z)]\geq -c_0-c_1\E[\|z\|^r].\]
        It remains to upper bound $\E[\|z\|^r]$. Let $x\sim \mathcal{N}(0, I)$ be a standard Gaussian random vector, then by Minkowski inequality and Jensen's inequality we have \begin{align*}
            (\E[\|z\|^r])^{1/r}=\,&(\E[\|\mu+\Sigma^{1/2}x\|^r])^{1/r}\\
            \leq \,& \|\mu\| + (\E[(\|\Sigma^{1/2} x\|^2)^{r/2}])^{1/r}\\
            \leq \,& \|\mu\| + ((\E[\|\Sigma^{-1/2}x\|^2])^{r/2})^{1/r}\\
            = \,& \|\mu\| + \mathrm{Tr}(\Sigma)^{1/2}.
        \end{align*}
        Therefore, since $1\leq r<2$, it can be shown that for some constant $C>0$, it holds that \[\E[\|z\|^r]\leq C(\|\mu\|^r+\mathrm{Tr}(\Sigma)^{r/2}).\]Finally, we have \[\ell(\mu, \Sigma)\geq \frac{1}{2}\left[-\log \det(\Sigma)+\mathrm{Tr}(\Sigma)+\|\mu\|^2-d\right]-c_0-c_1C(\|\mu\|^r+\mathrm{Tr}(\Sigma)^{r/2}).\]
        If $\mathrm{det}(\Sigma)\to 0$, the first term will diverge to $+\infty$. If $\|\mu\|\to\infty$ or $\|\Sigma\|\to\infty$, the terms in the square brackets dominate as $r<2$, hence $\ell(\mu, \Sigma)$ will diverge to $+\infty$.
    \end{proof}
    
\newpage 

\section{Examples of Non-convex Objectives in Non-conjugate Models}\label{app:nonconvex examples}
In this section, we present two non-conjugate models where the objective is non-convex. These examples are also discussed in Section 5 of \citep{wu2024understanding}, but we include them here for completeness.
\paragraph{Logistic Regression.} We consider 1-dimensional logistic regression model with data $\D=\{(x_i,y_i):x_i\in[-1,1], y_i\in\{-1,1\}\}_{i=1}^n$ and latent variable $z=(w,b)\in\R^2$. The prior $p(z)$ is standard Gaussian. Then we have
\[\ell(\omega)=\sum_{i=1}^m\E_{q(w,b)}\left[ \log(1+\exp(-y_i(wx_i+b)))\right]+\KL{q(w,b)}{p(w,b)}.\]In the following we will write $s_i\coloneqq \sigma(-y_i(wx_i+b))$ for $1\leq i\leq n$, where $\sigma(\cdot)$ is the sigmoid function. On the convex subset \[\Omega_1\coloneqq \{\omega=(0,\Xi):\Xi\in\mathcal{S}_+^2\text{ and } \Xi=\diag(\sigma_1^2,\sigma_2^2)\}\subseteq \Omega,\]we can compute the second derivative with respect to $\sigma_2^2$ using \cref{lemma:gradients} as \[\nabla^2_{\sigma_2^2}\ell(\omega)=\frac{1}{4}\sum_{i=1}^n\E_{q(w, b)}[s_i(1-s_i)(6s_i^2-6s_i+1)]+\frac{1}{2\sigma_2^4}.\]
A necessary condition of convexity of $\ell(\omega)$ is that the second derivative w.r.t. $\sigma_2^2$ is non-negative.\par
Note that as $\sigma_1^2,\sigma_2^2\to 0$, $w, b\to 0$ and $s_i\to 1/2$ in probability for all $1\leq i\leq n$, thus \[\lim_{\sigma_1^2,\sigma_2^2\to 0}\frac{1}{4}\E_{q(w, b)}[s_i(1-s_i)(6s_i^2-6s_i+1)]=-\frac{1}{32}<0.\]
Then there exists $\delta>0$ such that when $\sigma_1^2=\sigma_2^2=\delta$, \[s_i(1-s_i)(6s_i^2-6s_i+1)\leq -\frac{1}{64}\]holds for all $1\leq i\leq n$. Then we can choose $n> 32/\delta^2$ to show that $\ell(\omega)$ is non-convex when $\sigma_1^2=\sigma_2^2=\delta$, i.e., \[\nabla^2_{\sigma_2^2}\ell(\omega)\Big|_{\xi=0,\Xi=\diag(\delta,\delta)}<0.\]
\paragraph{Poisson Regression.} In this example, we are given dataset $\D=\{(x_i,y_i):x_i\in\R^d, y_i\in \N\}_{i=1}^n$ with latent variable $z\in\R^d$. We assume that $y\,|\,x\sim \mathrm{Poisson}(\lambda)$, where \[\lambda= \exp(z^\top x).\]The expected log-likelihood admits the following closed-form solution:\[-\E_{q(z)}\log(y|x,z)=\sum_{i=1}^n \left[-y_ix_i^\top \xi+\exp\left(x_i^\top \xi+\frac{1}{2}x_i^\top (\Xi-\xi\xi^\top)x_i\right)\right].\]Then we can compute the Hessian w.r.t. $\xi$:\[\nabla^2_\xi \ell(\omega)=\sum_{i=1}^n\exp\left(x_i^\top \xi+\frac{1}{2}x_i^\top (\Xi-\xi\xi^\top)x_i\right)x_i^\top \xi(x_i^\top \xi-2)x_ix_i^\top+\nabla^2_\xi A(\omega).\]
We consider the following convex subset where the covariance matrix equals identity, i.e., \[\Omega_2\coloneqq \{\omega=(\xi,\Xi):\xi\in\R^d,\Xi=\xi\xi^\top+2I\}.\]
We can find $x_i$ and $\xi$ such that $0<x_i^\top \xi<2$ for all $1\leq i\leq n$. Then we have $\exp\left(x_i^\top \xi+\frac{1}{2}x_i^\top (\Xi-\xi\xi^\top)x_i\right)>1$, and thus\[\nabla^2_\xi \ell(\omega)\preceq \sum_{i=1}^n x_i^\top \xi(x_i^\top \xi-2)x_ix_i^\top+\nabla^2_\xi A(\omega),\]where we used the fact that $\KL{q(z;\omega)}{q(z;\omega')}=D_{A^*}(\omega,\omega')$. Moreover, \[\nabla^2_\xi A^*(\omega)=(1+\xi^\top \Sigma^{-1}\xi)\Sigma^{-1}+\Sigma^{-1}\xi\xi^\top \Sigma^{-1}.\]
Therefore, for some $c>0$, if rescale the dataset $\D'=\{(cx_i,y_i):x_i\in\R^d, y_i\in \N\}_{i=1}^n$ and evaluate $\ell(\omega)$ at $\xi'=\xi/c$, we have \[\nabla^2_\xi \ell(\omega)\Big|_{\xi'=\xi/c}\preceq \sum_{i=1}^n c^2x_i^\top \xi(x_i^\top \xi-2)x_ix_i^\top+(1+c^{-2}\|\xi\|^2)I+c^{-2}\xi\xi^\top\prec 0,\]if $c$ is sufficiently large.

    \newpage 
    
    \section{Tighter Relative Smoothness in Univariate Case ($\mu, \sigma \in \R$)}\label{app:1d case}
    In this section, we will provide tighter relative smoothness guarantees in univariate case, i.e., $\mu,\sigma\in\R$, using the equivalent conditions for relative smoothness in \eqref{eq:equivalent smoothness}.
    \subsection{Necessary and Sufficient Conditions for Relative Smoothness}
    Let $f(z)\coloneqq -\log p(\D\,|\,z)$. Inequalities \eqref{eq:equivalent smoothness} give the necessary and sufficient conditions under which relative smoothness holds. We first find $\beta$ such that the inequality on the right hand side of \eqref{eq:equivalent smoothness} holds, i.e., \begin{equation}\label{eq:upper curv}
        \beta\nabla^2A^*(\omega)-\nabla^2 \E_{q(z;\omega)}[f(z)] \succeq 0.
    \end{equation}
    Let $B=\E_q[\nabla^3 f(z)]$, $C=\E_q[\nabla^4 f(z)]$. Using the explicit form of the Hessians \eqref{eq:hessian 1d 1} and \eqref{eq:hessian 1d 2}, the relative smoothness condition \eqref{eq:upper curv} is equivalent to
    \begin{equation}\label{big m 1}
        M(\beta)=\begin{pmatrix}
        \frac{\beta}{\sigma^2}+\frac{2\mu^2\beta}{\sigma^4}+[2\mu B-\mu^2 C] & -\frac{\mu \beta}{\sigma^4}-\frac12[B-\mu C] \\ -\frac{\mu \beta}{\sigma^4}-\frac12[B-\mu C] & \frac{\beta}{2\sigma^4}-\frac14 C
        \end{pmatrix}\succeq 0.        
    \end{equation}
    \begin{theorem}[Necessary and Sufficient Conditions for Relative Smooth Condition \eqref{eq:upper curv}, Univariate]\label{eq:iff smooth}
        Let $f$ be a four times continuously differentiable function on $\R$. Then for Gaussian variational family $\Q=\{q(z;\omega):\omega=(\xi, \Xi)\in\Omega\}$ and some positive constant $\beta$, condition \eqref{eq:upper curv} holds \emph{if and only if} for all $\mu\in\R, \sigma^2>0$,
        \begin{equation}
            \begin{cases}
        \vspace{4pt}\sigma^4\E_q[\nabla^4 f(z)]\leq 2\beta,\\
        \vspace{4pt}\dfrac{\beta}{\sigma^2}+\dfrac{2\mu^2\beta}{\sigma^4}\geq -2\mu \E_q[\nabla^3 f(z)]+\mu^2 \E_q[\nabla^4 f(z)],\\
        \dfrac{\beta^2}{2\sigma^6}-\dfrac{\E_q[\nabla^4 f(z)]\beta}{4\sigma^2}-\dfrac{(\E_q[\nabla^3 f(z)])^2}{4}\geq 0.
    \end{cases}
        \end{equation} 
    \end{theorem}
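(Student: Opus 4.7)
\ My plan is to invoke the standard characterization of positive semidefiniteness for a $2\times 2$ real symmetric matrix: it is PSD if and only if both diagonal entries are non-negative and its determinant is non-negative. Since condition \eqref{eq:upper curv} requires $M(\beta)\succeq 0$ for every $\omega=(\xi,\Xi)\in\Omega$, equivalently for every $(\mu,\sigma^2)\in\R\times\R_{>0}$, it suffices to check pointwise in $(\mu,\sigma^2)$ that the three scalar inequalities in the statement are equivalent to the three principal-minor conditions $M_{11}\geq 0$, $M_{22}\geq 0$, and $\det M(\beta)\geq 0$ read off from the explicit matrix \eqref{big m 1}.

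The first two equivalences are transparent. The condition $M_{22}\geq 0$ reads $\frac{\beta}{2\sigma^4}-\frac{C}{4}\geq 0$, which is the first inequality after multiplying by $4\sigma^4$. Similarly, $M_{11}\geq 0$ is, after moving the terms involving $B$ and $C$ to the right-hand side, exactly the second inequality. The only substantive step is therefore to show that $\det M(\beta)=M_{11}M_{22}-M_{12}^2$ collapses to the $\mu$-free quantity $\frac{\beta^2}{2\sigma^6}-\frac{\beta C}{4\sigma^2}-\frac{B^2}{4}$ that appears as the third inequality.

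To verify this I would expand $M_{11}M_{22}$ and $M_{12}^2$ in parallel and group terms by their monomial in $\mu$. The cancellations should come out cleanly: the term $\frac{\mu^2\beta^2}{\sigma^8}$ produced by $\frac{2\mu^2\beta}{\sigma^4}\cdot\frac{\beta}{2\sigma^4}$ in $M_{11}M_{22}$ is matched by the square of $\frac{\mu\beta}{\sigma^4}$ inside $M_{12}^2$; the two separate contributions $-\frac{\mu^2\beta C}{2\sigma^4}$ in $M_{11}M_{22}$ sum to $-\frac{\mu^2\beta C}{\sigma^4}$, which is cancelled by the cross term $2\cdot\frac{\mu\beta}{\sigma^4}\cdot(-\frac{\mu C}{2})$ in $M_{12}^2$; the term $\frac{\mu B\beta}{\sigma^4}$ cancels with $2\cdot\frac{\mu\beta}{\sigma^4}\cdot\frac{B}{2}$; and the remaining $-\frac{\mu BC}{2}$ and $\frac{\mu^2 C^2}{4}$ match the last cross and pure squared pieces of $M_{12}^2$. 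Everything $\mu$-dependent vanishes and the residual is precisely the expression in condition 3, so the pointwise equivalence holds and, quantifying over $(\mu,\sigma^2)$, the theorem follows. The main obstacle is nothing more than careful sign-and-coefficient bookkeeping; no substantive new idea is required beyond Sylvester's criterion and the explicit Hessian formulas \eqref{eq:hessian 1d 1} and \eqref{eq:hessian 1d 2}.
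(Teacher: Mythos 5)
Your proposal is correct and matches the paper's own argument: the paper likewise reads off the three conditions as the non-negativity of the two diagonal entries and of the determinant of $M(\beta)$ (its version of the $2\times 2$ principal-minor criterion), and verifies by direct expansion that all $\mu$-dependent terms in $\det M(\beta)$ cancel, leaving exactly $\frac{\beta^2}{2\sigma^6}-\frac{\beta C}{4\sigma^2}-\frac{B^2}{4}$. Your term-by-term cancellation bookkeeping checks out.
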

    \begin{remark}
    \cref{eq:iff smooth} is the immediate result of Sylvester's criterion of positive-semidefiniteness of matrix $M(\beta)$.\par
    The first condition corresponds to the non-negativeness of the bottom right entry.\par
    The second condition corresponds to the non-negativeness of the top left entry. Note that this is not a quadratic function of $\mu$ since $B$ and $C$ implicitly depend on $\mu$. 
    However, if we impose a uniform bound $|C|\leq \beta_4$ and $|B|\leq \beta_3$ for some $\beta_4, \beta_3\geq 0$ for all $\mu\in\R, \sigma^2>0$, the following inequality will be a sufficient condition for it to hold: \begin{equation}\label{sufficient quadratic}
    \left(\frac{2\beta}{\sigma^4}-\beta_4\right)\mu^2-2\beta_3\mu+\frac{\beta}{\sigma^2}\geq 0.
    \end{equation}\eqref{sufficient quadratic} holds for all $\mu\in \R$ if and only if: either the coefficients of quadratic and linear terms are both zero ($\nabla^3 f(z)=0$), or the discriminant is non-positive, which gives \begin{equation}\label{sufficient 2}
    \frac{\beta}{\sigma^2}\left(\frac{\beta}{2\sigma^4}-\frac{\beta_4}{4}\right)-\frac{\beta_3^2}{4}\geq 0.
    \end{equation}
    The last condition corresponds to the non-negativeness of the determinant. A direct computation of the determinant gives \begin{align*}
    \mathrm{det}(M(\beta))=\,&\mu^2\left[\left(\frac{2\beta}{\sigma^4}-C\right)\left(\frac{\beta}{2\sigma^4}-\frac{C}{4}\right)-\left(\beta-\frac{C}{2}\right)^2\right] \\
    &+\mu\left[2B\left(\frac{\beta}{2\sigma^4}-\frac{C}{4}\right)-B\left(\frac{\beta}{\sigma^4}-\frac{C}{2}\right)\right] + \frac{\beta}{\sigma^2}\left(\frac{\beta}{2\sigma^4}-\frac{C}{4}\right)-\frac{B^2}{4}.
    \end{align*}
    Note that the coefficients of the quadratic and linear terms are both 0, therefore we only need to guarantee that the constant term is non-negative. Interestingly, this condition is very similar to the non-positive discriminant condition \eqref{sufficient 2}: one substitutes $B, C$ with the upper bounds of $|B|, |C|$ to obtain \eqref{sufficient 2}.
    \end{remark}
    We also need to find conditions under which the inequality on the left side of \eqref{eq:equivalent smoothness} holds, i.e., \begin{equation}\label{eq:lower curv}
        \nabla^2 \E_{q(z;\omega)}[f(z)]-\alpha \nabla^2A^*(\omega)\succeq 0, 
    \end{equation}which is equivalent to\[-M(\alpha)=\begin{pmatrix}
        -2\mu B+\mu^2 C-\frac{\alpha}{\sigma^2}-\frac{2\mu^2\alpha}{\sigma^4} & \frac12[B-\mu C]+\frac{\mu \alpha}{\sigma^4} \\ \frac12[B-\mu C]+\frac{\mu \alpha}{\sigma^4} & \frac14 C -\frac{\alpha}{2\sigma^4}
        \end{pmatrix}\succeq 0.\]
    We apply Sylvester's criterion as before to obtain the following result.
    \begin{theorem}[Necessary and Sufficient Conditions for Relative Weak Convexity \eqref{eq:lower curv}, Univariate]\label{iff strongly convex}
        Let $f$ be a (a.e.) fourth differentiable function on $\R$. Then for Gaussian variational family $\Q=\{q(z;\omega):\omega=(\xi, \Xi)\in\Omega\}$ and some constant $\alpha$, condition \eqref{eq:lower curv} holds \emph{if and only if} for all $\mu\in\R,\sigma^2>0$, \begin{equation}
            \begin{cases}
        \vspace{4pt}\sigma^4\E_q[\nabla^4 f(z)]\geq 2\alpha,\\
        \vspace{4pt}\dfrac{2\mu^2\alpha}{\sigma^4}+\dfrac{\alpha}{\sigma^2}\leq -2\mu \E_q[\nabla^3 f(z)]+\mu^2 \E_q[\nabla^4 f(z)],\\
        \dfrac{\alpha^2}{2\sigma^6}-\dfrac{\E_q[\nabla^4 f(z)]\alpha}{4\sigma^2}-\dfrac{(\E_q[\nabla^3 f(z)])^2}{4}\geq 0.
    \end{cases}
        \end{equation} 
    \end{theorem}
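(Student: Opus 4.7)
The plan is to mirror the Sylvester-criterion argument that established \cref{eq:iff smooth}, but applied to the matrix $-M(\alpha)$ (displayed immediately before the theorem) rather than $M(\beta)$. Since $-M(\alpha)$ is a $2\times 2$ symmetric matrix, $-M(\alpha) \succeq 0$ is equivalent to the nonnegativity of both diagonal entries (the two $1\times 1$ principal minors) together with the nonnegativity of the determinant. It is essential here to use the PSD form of Sylvester's criterion, which requires \emph{all} principal minors rather than only the leading ones; using only the leading minors would miss the bottom-right entry constraint, which corresponds precisely to the first inequality of the statement.

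Writing $B \coloneqq \E_q[\nabla^3 f(z)]$ and $C \coloneqq \E_q[\nabla^4 f(z)]$, the two diagonal conditions are essentially bookkeeping: $\tfrac14 C - \tfrac{\alpha}{2\sigma^4} \geq 0$ rearranges into $\sigma^4 C \geq 2\alpha$ (the first inequality of the theorem), and $-2\mu B + \mu^2 C - \tfrac{\alpha}{\sigma^2} - \tfrac{2\mu^2\alpha}{\sigma^4} \geq 0$ rearranges directly into the second inequality. Since PSD of $-M(\alpha)$ must hold for every $(\mu, \sigma^2)$, and since each derived inequality is required for every $(\mu, \sigma^2)$, the equivalences pass through cleanly.

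The third inequality arises from the determinant, and this is where the main work lies. A convenient observation is that $\det(-M(\alpha)) = \det(M(\alpha))$ for a $2\times 2$ matrix, so the algebraic expansion already carried out in the remark following \cref{eq:iff smooth} transfers verbatim with $\beta$ replaced by $\alpha$; the sign of $\alpha$ plays no role in the identity. The crux, and the main obstacle, is the algebraic cancellation showing that the coefficients of $\mu^2$ and of $\mu$ in the expanded determinant vanish identically, leaving only the constant term $\tfrac{\alpha^2}{2\sigma^6} - \tfrac{C\alpha}{4\sigma^2} - \tfrac{B^2}{4}$. I would verify this by matching the $\mu^2$-coefficient contributed by the diagonal product, $(C - \tfrac{2\alpha}{\sigma^4})(\tfrac{C}{4} - \tfrac{\alpha}{2\sigma^4}) = \tfrac14(C - \tfrac{2\alpha}{\sigma^4})^2$, against the $\mu^2$-coefficient of the squared off-diagonal $(\tfrac12(B - \mu C) + \tfrac{\mu\alpha}{\sigma^4})^2$, which also equals $\tfrac14(C - \tfrac{2\alpha}{\sigma^4})^2$, so they cancel in $\det(-M(\alpha))$; an analogous matching of the linear-in-$\mu$ coefficients completes the reduction, and requiring the surviving constant term to be nonnegative for every $(\mu, \sigma^2)$ yields the third inequality, establishing both necessity and sufficiency.
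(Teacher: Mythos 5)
Your proposal is correct and follows essentially the same route as the paper, which simply invokes Sylvester's criterion for positive semidefiniteness on $-M(\alpha)$ (all principal minors, i.e., both diagonal entries and the determinant) and relies on the same cancellation of the $\mu^2$- and $\mu$-coefficients in the determinant already recorded in the remark following \cref{eq:iff smooth}. Your explicit verification of that cancellation, and the observation that $\det(-M(\alpha))=\det(M(\alpha))$ so the earlier expansion transfers with $\beta$ replaced by $\alpha$, fills in details the paper leaves implicit but adds no new idea.
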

    \begin{remark}
        The first condition implies an important necessary condition for relative convexity (i.e., $\alpha\geq 0$): $\nabla^4 f(z)\geq 0$ for all $z\in \R$. Indeed, if $\nabla^4 f(z_0)<0$ for some $z_0$ and $\nabla^4 f(z)$ is continuous, we can always pick some $(\mu, \sigma^2)$ such that $\E_q[\nabla^4 f(z)]<0$ (e.g. $\mu=z_0$ and $\sigma^2$ is sufficiently small), then we cannot find any $\alpha\geq0$ such that the first condition holds.
    \end{remark}
    \subsection{Simple Sufficient Conditions for Relative Smoothness}
    In the following, we aim to provide sufficient conditions for \eqref{eq:upper curv} and \eqref{eq:lower curv} with lower-order derivative conditions by applying Stein's lemma.
    \begin{lemma}[Stein's Lemma \citep{stein1981estimation}]\label{lemma:stein}
        Let $Z\sim \mathcal{N}(\mu, \Sigma)$, then for any differentiable function $g$, we have \[\E[g(Z)(Z-\mu)]=\Sigma\,\E[\nabla g(Z)].\]
    \end{lemma}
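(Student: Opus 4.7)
The plan is to derive Stein's lemma via a direct integration-by-parts argument applied to the multivariate Gaussian density. First I would write the expectation as an integral against the density $p(z) = (2\pi)^{-d/2}|\Sigma|^{-1/2}\exp\bigl(-\tfrac{1}{2}(z-\mu)^\top \Sigma^{-1}(z-\mu)\bigr)$, and exploit the fundamental identity
\[
\nabla_z p(z) = -\Sigma^{-1}(z-\mu)\, p(z),
\]
which follows immediately by differentiating the quadratic form in the exponent. Rearranging yields the coordinate-free relation $(z-\mu)\,p(z) = -\Sigma\,\nabla_z p(z)$, which is the algebraic engine of the proof.

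Next I would substitute this identity into $\E[g(Z)(Z-\mu)]$. Writing the $i$-th component,
\[
\E\bigl[g(Z)(Z-\mu)_i\bigr] = \int_{\R^d} g(z)(z-\mu)_i\, p(z)\, dz = -\sum_{j=1}^d \Sigma_{ij} \int_{\R^d} g(z)\, \partial_{z_j} p(z)\, dz.
\]
Now I would apply integration by parts in the variable $z_j$ to each summand, transferring the derivative from $p$ onto $g$ and picking up a sign. Provided the boundary terms vanish, this gives $\sum_j \Sigma_{ij} \int \partial_{z_j} g(z)\, p(z)\, dz = \bigl(\Sigma\, \E[\nabla g(Z)]\bigr)_i$, which is exactly the claimed identity. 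Repackaging componentwise yields $\E[g(Z)(Z-\mu)] = \Sigma\, \E[\nabla g(Z)]$.

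The main obstacle is the rigorous justification of integration by parts: one must ensure that $g(z)\, p(z) \to 0$ as $\|z\| \to \infty$ along each coordinate axis and that the resulting integrals are absolutely convergent. This is standard under a mild regularity assumption, e.g., that $g$ has at most polynomial (or subexponential) growth and its partial derivatives are locally integrable, since the Gaussian density decays faster than any polynomial. Under such a condition, Fubini's theorem justifies the coordinatewise application of the one-dimensional integration-by-parts formula, and the boundary terms vanish uniformly. Since in our usage $g$ will be a derivative of $-\log p(\D\,|\,z)$ which is assumed smooth and enters via uniformly bounded lower-order derivatives (as in Theorem~\ref{thm:sufficient smoothness}), these regularity requirements are automatically satisfied in the settings of interest.
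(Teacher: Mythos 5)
Your proof is correct. The paper does not actually prove this lemma---it imports it directly from Stein (1981) and uses it as a black box---so there is nothing to compare against; your integration-by-parts argument via the identity $\nabla_z p(z) = -\Sigma^{-1}(z-\mu)\,p(z)$ is the standard derivation. Your closing remark is also well taken: as stated, ``any differentiable $g$'' is slightly too generous (one needs the boundary terms to vanish and the integrals to converge), and your observation that the functions to which the lemma is applied in \cref{thm:sufficient smoothness} have uniformly bounded derivatives correctly discharges that requirement in the settings where the paper invokes it.
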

    \cref{lemma:stein} can be used to reduce the order of derivatives. More specifically, we can show that (see \cref{lemma:iterate stein} and \cref{sec:proof of stein} for a more general proof in multivariate case)
    \begin{equation}\label{L3toL1}
        |\E_q[\nabla^4 f(z)]|\leq \sigma^{-2} \sup_z|\nabla^2 f(z)|,\qquad |\E_q[\nabla^3 f(z)]|\leq \sigma^{-2} \sup_z|\nabla f(z)|.
    \end{equation}Combining the parts above, we can get the following sufficient conditions.
    \begin{corollary}[Sufficient Conditions for Relative Smoothness, Univariate]\label{sufficient smoothness}
        Under the assumptions of \cref{eq:iff smooth}, if we further assume that $\sup_z |\nabla f(z)|\leq L_1$ and $\sup_z |\nabla^2 f(z)|< L_2$ for some constant $L_1, L_2\geq 0$, then on the restriction of the parameter set $\{(\mu, \sigma^2):0<\sigma^2\leq D\}$ for some $D>0$, condition \eqref{eq:upper curv} holds with $$\beta=\dfrac{D}{2}L_2+\dfrac{\sqrt{2D}}{2}L_1.$$Moreover, relative weak convexity condition \eqref{eq:lower curv} holds with parameter $$\alpha=-\dfrac{D}{2}L_2-\dfrac{\sqrt{2D}}{2}L_1.$$\par
    \end{corollary}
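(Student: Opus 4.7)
The strategy is to reduce the corollary to purely algebraic verifications by invoking \cref{eq:iff smooth} and \cref{iff strongly convex}, which recast relative smoothness and relative weak convexity as three scalar inequalities in $(\mu, \sigma^2)$ involving only $\E_q[\nabla^3 f(z)]$ and $\E_q[\nabla^4 f(z)]$. After bounding these two quantities via Stein's Lemma (\cref{lemma:stein}) in the iterated form summarized in \eqref{L3toL1}, namely $|\E_q[\nabla^4 f(z)]| \leq L_2/\sigma^2$ and $|\E_q[\nabla^3 f(z)]| \leq L_1/\sigma^2$, the proof amounts to checking each of the three inequalities uniformly in $(\mu, \sigma^2)$ with $\sigma^2 \leq D$.

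For the claimed constant $\beta = \tfrac{D}{2}L_2 + \tfrac{\sqrt{2D}}{2}L_1$, the first condition of \cref{eq:iff smooth} reduces to $\sigma^2 L_2 \leq 2\beta$ after the Stein bound, which holds immediately since $\beta \geq DL_2/2$. The third condition, after multiplication by $4\sigma^4$ and substitution of the worst-case Stein bounds, becomes $\tfrac{2\beta^2}{\sigma^2} \geq \beta L_2 + L_1^2$; using $\sigma^2 \leq D$ and expanding the chosen $\beta$, a direct calculation yields $\tfrac{2\beta^2}{D} - \beta L_2 - L_1^2 = \tfrac{\sqrt{2D}}{2}L_1 L_2 \geq 0$, so the precise form of $\beta$ is calibrated to make this tight up to a positive slack.

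The delicate step is the second condition, which I would rewrite as a quadratic inequality in $|\mu|$,
\[
\Bigl(\tfrac{2\beta}{\sigma^2} - L_2\Bigr)\mu^2 \,-\, 2L_1|\mu| \,+\, \beta \;\geq\; 0,
\]
required to hold for all $\mu \in \R$. Since $\beta \geq \sigma^2 L_2/2$, the leading coefficient is nonnegative, and nonnegativity of the quadratic reduces to the discriminant condition $L_1^2 + \beta L_2 \leq \tfrac{2\beta^2}{\sigma^2}$, which is the same inequality already checked for the third condition. The uniform bound $\sigma^2 \leq D$ is essential here: without it, $\beta$ would need to grow with $\sigma^{-1}$, and no single constant would suffice across the parameter set. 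Finally, for the weak convexity statement \eqref{eq:lower curv} with $\alpha = -\beta$, \cref{iff strongly convex} provides three inequalities that differ from those of \cref{eq:iff smooth} only by sign flips that are absorbed when the Stein bounds are applied in absolute value, so the identical algebraic verifications yield $\alpha = -\tfrac{D}{2}L_2 - \tfrac{\sqrt{2D}}{2}L_1$.
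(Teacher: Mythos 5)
Your proposal is correct and follows essentially the same route as the paper: reduce to the three scalar conditions of \cref{eq:iff smooth} and \cref{iff strongly convex}, apply the iterated Stein bounds \eqref{L3toL1}, and observe that the second and third conditions collapse to the single discriminant inequality $L_1^2+\beta L_2\leq 2\beta^2/\sigma^2$. The only (cosmetic) difference is that you verify this inequality by directly computing the slack $\tfrac{2\beta^2}{D}-\beta L_2-L_1^2=\tfrac{\sqrt{2D}}{2}L_1L_2\geq 0$ at the worst case $\sigma^2=D$, whereas the paper bounds the larger root of the quadratic in $\beta$; both are valid.
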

    \begin{proof}
        For the choice of $\beta$, we first check three conditions in \cref{eq:iff smooth} one by one:\begin{enumerate}
            \item Since $\sigma^2\leq D$ and $\sup_z |\nabla^2 f(z)|\leq L_2$, use the upper bound on $|\E_q[\nabla^4 f(z)]|$ in \eqref{L3toL1} and we have \[\sigma^4 \E_q[\nabla^4 f(z)]\leq \sigma^2 \sup_{z}|\nabla^2 f(z)|< DL_2\leq \frac{\beta}{2}.\]
            \item Thanks to our global bound on $|\E[\nabla^4 f(z)]|$ and $|\E[\nabla^3 f(z)]|$ in \eqref{L3toL1}, we have \[\left(\frac{2\beta}{\sigma^4}-\E_q[\nabla^4 f(z)]\right)\mu^2+2\mu\E_q[\nabla^3 f(z)]+\frac{\beta}{\sigma^2}\geq \left(\frac{2\beta}{\sigma^4}-\frac{L_2}{\sigma^2}\right)\mu^2-\frac{2L_1}{\sigma^2}|\mu|+\frac{\beta}{\sigma^2}\]holds for any $\mu\in \R$.\\
            In order to show the RHS is non-negative for any $\mu\in \R$, we first assume $\mu\geq 0$ (similar results follow if $\mu\leq 0$). In this case, we only need to prove \[(2\beta-L_2\sigma^2)\mu^2-2L_1\sigma^2\mu+\beta\sigma^2\geq 0.\]This is a quadratic function of $\mu$ with positive quadratic term coefficient (this is the first condition), hence a sufficient condition is that the discriminant is non-positive, that is,\begin{equation}\label{condition 2}
                4L_1^2\sigma^4-4\beta\sigma^2(2\beta-L_2\sigma^2)\leq 0.
            \end{equation}
            This is again a quadratic function of $\beta$, and the inequality holds if we pick a sufficiently large $\beta$. More specifically, the larger root of \eqref{condition 2} is \[\beta^*=\frac{L_2\sigma^2+\sqrt{L_2^2\sigma^4+8L_1^2\sigma^2}}{4}\leq \frac{L_2\sigma^2+L_2\sigma^2}{4}+\frac{2\sqrt{2}L_1\sigma^2}{4}\leq \frac{D}{2}L_2+\frac{\sqrt{2D}}{2}L_1=\beta.\]Therefore, $\beta$ satisfies the inequality \eqref{condition 2}.
            \item We again use the upper bound of $|\E_q[\nabla^4 f(z)]|$ and $|\E_q[\nabla^3 f(z)]|$ in \eqref{L3toL1},\[\frac{\beta^2}{2\sigma^6}-\frac{\E_q[\nabla^4 f(z)]\beta}{4\sigma^2}-\frac{(\E_q[\nabla^3 f(z)])^2}{4}\geq \frac{\beta^2}{2\sigma^6}-\frac{L_2\beta}{4\sigma^4}-\frac{L_1^2}{4\sigma^4}.\]
            In fact, RHS is non-negative if and only if \eqref{condition 2} holds.
        \end{enumerate}
    The proof of relative weak convexity is similar.
    \end{proof}
    \begin{remark}
        In order to obtain a relative strong convexity guarantee of the objective $\ell(\omega)$, since the KL divergence term in \eqref{objective} is 1-relatively convex, we need to guarantee that $-\dfrac{D}{2}L_2-\dfrac{\sqrt{2D}}{2}L_1>-1$.
    \end{remark}
    As a result, under the assumptions of \cref{sufficient smoothness}, we conclude that $\E_q[f(z)]$ is relatively smooth with respect to $A^*$ with parameter $$L=\dfrac{D}{2}L_2+\dfrac{\sqrt{2D}}{2}L_1.$$
    \begin{example}\label{examples}Below are some examples where the likelihood $p(\D\,|\,z)=p(y\,|\,x,z)$ for $\D=\{(x, y)\}$ is specified:\par
        \textbf{Linear Bayesian Regression.} In linear Bayesian regression where $p(y\,|\,x,z)=\mathcal{N}(xz, \sigma^2)$, it's easy to see that $-\log p(y\,|\,x,z)$ is a quadratic function in $z$, hence the necessary and sufficient conditions in \cref{eq:iff smooth} and Theorem \ref{iff strongly convex} are satisfied with $\alpha=\beta=0$. Therefore, the negative ELBO is relatively smooth with respect to $A^*$ with parameter 1, and is also relatively strongly convex with parameter 1. This coincides with the result in \citep{wu2024understanding} when the model is conjugate.\par
        \textbf{Logistic Regression.} For logistic regression, let $f(z)=-\log p(y\,|\,x,z)=\log(1+e^{-xyz})$, then it can be shown that $|\nabla f(z)|\leq |x|$, $|\nabla^2 f(z)|\leq \frac{x^2}{4}$ for all $z\in \R$. Therefore, we can set $\sigma^2\leq D$ and $\E_q[f(z)]$ is relatively smooth with parameter $L=\frac{x^2D}{8}+\frac{\sqrt{2D}|x|}{2}$ according to \cref{sufficient smoothness}.\par
        \textbf{Poisson Regression} In Poisson regression, we write $f(z)=-\log p(y\,|\,x,z)=e^{zx}-xyz+c$ for $y\in \N_+$ and some constant $c$. Then $\E_q [\nabla^4 f(z)]=x^4 e^{\mu x+\sigma^2 x^2/2}$. In order to satisfy the first necessary and sufficient condition $\sigma^4\E_q[\nabla^4 f(z)]\leq 2\beta$ in the necessary and sufficient conditions \cref{eq:iff smooth}, we have to upper-bound $|x|$ ,$|\mu|$ and $\sigma^2$. Therefore, it is in general difficult to prove relative smoothness for Poisson regression without further assumptions (e.g., bounded domain).
    \end{example}
    
    \newpage
    
    \section{Missing Proofs in \cref{sec:landscape}}
    \subsection{Proof of \cref{thm:sufficient smoothness}: Sufficient Conditions for Relative Smoothness}
    We begin with two lemmas which compute the Hessian matrices $\nabla^2 \E_{q(z;\omega)}[f(z)]$ and $\nabla^2 A^*(\omega)$.
    \begin{lemma}\label{lemma:get hessian}
        Let $f\in C^4(\R^{d})$, then for $\omega=(\xi_1, \Xi_{11}, \cdots, \xi_d, \Xi_{dd})\in \R^{2d}$, we have for $p, q\in \{1,\cdots, d\}$,
        \[(\nabla^2 \E_{q(z;\omega)}[f(z)])_{ij}=\begin{cases}
        -2B_{pp}\mu_p+C_{pp}\mu_p^2, & i=j=2p-1,\\
        \frac{1}{4}C_{pp}, & i=j=2p,\\
        \frac{1}{2}(B_{pp}-C_{pp}\mu_p), & (i,j)=(2p-1,2p)\\&\text{ or }(i,j)=(2p,2p-1),\\
        H_{pq}-B_{pq}\mu_q-B_{qp}\mu_p+C_{pq}\mu_p\mu_q, & (i,j)=(2p-1,2q-1), p\neq q,\\
        \frac{1}{2}(B_{pq}-C_{pq}\mu_p), & (i,j)=(2p-1,2q)\\&\text{ or }(i,j)=(2p,2q-1),p\neq q,\\
        \frac{1}{4}C_{pq}, & (i,j)=(2p, 2q), p\neq q.
    \end{cases}\]
        where $H_{ij}=\E_q[\nabla^2_{ij}f(z)]$, $B_{ij}=\E_q[\nabla^3_{ijj}f(z)]$ and $C_{ij}=\E_q[\nabla^4_{iijj}f(z)]$.
    \end{lemma}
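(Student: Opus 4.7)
The plan is to reduce the computation to repeated applications of Bonnet's and Price's identities (\cref{lemma:gradients}) combined with the chain rule through the mean-field change of variables $\mu_p = \xi_p$ and $\Sigma_{pp} = \Xi_{pp} - \xi_p^2$, which defines the expectation parameterization entrywise. This is the multivariate analogue of the univariate derivation carried out just before \cref{prop:hessian 1d}; the new feature is bookkeeping across coordinates $p \neq q$.

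First I would compute the gradient of $g(\omega) := \E_{q(z;\omega)}[f(z)]$. Because $\Sigma$ is diagonal in the mean-field family, \cref{lemma:gradients} specializes to $\partial_{\mu_p}\E_q[h(z)] = \E_q[\nabla_p h(z)]$ and $\partial_{\Sigma_{pp}}\E_q[h(z)] = \tfrac{1}{2}\E_q[\nabla^2_{pp} h(z)]$ for any smooth $h$. Combined with $\partial\mu_p/\partial\xi_p = 1$, $\partial\Sigma_{pp}/\partial\xi_p = -2\xi_p$, and $\partial\Sigma_{pp}/\partial\Xi_{pp} = 1$, this yields
\begin{equation}\label{eq:plan-first-order}
\partial_{\xi_p} g(\omega) = \E_q[\nabla_p f(z) - \xi_p \nabla^2_{pp} f(z)], \qquad \partial_{\Xi_{pp}} g(\omega) = \tfrac{1}{2}\E_q[\nabla^2_{pp} f(z)].
\end{equation}
Each of the six cases of the Hessian is then obtained by differentiating \eqref{eq:plan-first-order} once more, applying the same identities to $h \in \{\nabla_p f, \nabla^2_{pp} f\}$ and using $\mu_\bullet = \xi_\bullet$. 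The longest case, $(i,j)=(2p-1,2q-1)$ with $p\neq q$, gives
\[
\partial_{\xi_q}\partial_{\xi_p} g = \E_q[\nabla^2_{pq} f - \xi_q \nabla^3_{pqq} f] - \xi_p \E_q[\nabla^3_{qpp} f - \xi_q \nabla^4_{ppqq} f] = H_{pq} - \mu_q B_{pq} - \mu_p B_{qp} + \mu_p\mu_q C_{pq};
\]
the diagonal case $i=j=2p-1$ is similar but picks up an extra $-\E_q[\nabla^2_{pp} f]$ term from differentiating the explicit $-\xi_p$ prefactor in \eqref{eq:plan-first-order}, which cancels exactly with the contribution from the $\E_q[\nabla_p f]$ summand, leaving $-2\mu_p B_{pp} + \mu_p^2 C_{pp}$. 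The remaining four cases are strictly simpler substitutions.

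The computation is essentially mechanical once \eqref{eq:plan-first-order} is in place; the main obstacle is index bookkeeping. In particular, care is needed to distinguish $B_{pq} = \E_q[\nabla^3_{pqq} f]$ from $B_{qp} = \E_q[\nabla^3_{qpp} f]$, since the paper's convention makes $B$ asymmetric and the two cannot be swapped inside the cross-derivative formula. The symmetry of the resulting Hessian provides a built-in consistency check: the $(2p-1,2q-1)$ entry is symmetric in $(p,q)$ precisely because $H_{pq}=H_{qp}$, $C_{pq}=C_{qp}$, and the contribution $-\mu_q B_{pq}-\mu_p B_{qp}$ is manifestly symmetric under $p \leftrightarrow q$.
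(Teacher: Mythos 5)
Your proposal is correct and follows essentially the same route as the paper's proof: compute the first-order partials via Bonnet's and Price's identities combined with the chain rule through $\mu_p=\xi_p$, $\Sigma_{pp}=\Xi_{pp}-\xi_p^2$, then differentiate once more case by case. Your handling of the asymmetric convention $B_{pq}=\E_q[\nabla^3_{pqq}f]$ versus $B_{qp}$ in the cross term is in fact cleaner than the paper's own write-up, which contains a minor index typo at that step.
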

    \begin{proof}
        We apply Bonnet's and Price's gradients (\cref{lemma:gradients}), we can easily get the following partial derivatives\begin{gather*}
            \nabla_{\xi_{ii}}\E[f(z)]=\E_q[\nabla_i f(z)-\nabla^2_{ii}f(z)\xi_i],\\
            \nabla_{\Xi_{ii}}\E[f(z)]=\frac{1}{2}\E_q[\nabla^2_{ii} f(z)].
        \end{gather*}
        Next, we use \cref{lemma:gradients} again to get for $i\neq j$,\begin{align*}
            \nabla^2_{\xi_i\xi_j}\E[f(z)]=\,&\nabla_{\xi_j}\E_q[\nabla_i f(z)-\nabla^2_{ii} f(z)\xi_i]\\
            =\,&\E_q[\nabla^2_{ij}f(z)-\nabla^3_{ijj}f(z)\xi_j-\nabla^3_{iij}f(z)\xi_i+\nabla^4_{iijj}f(z)\xi_i\xi_j]\\
            =\,&H_{ij}-B_{ij}\mu_j-B_{ij}\mu_i+C_{ij}\mu_i\mu_j.
        \end{align*}
        And for $i=j$, \begin{align*}
            \nabla^2_{\xi_i\xi_i}\E[f(z)]=\,&\nabla_{\xi_i}\E_q[\nabla_i f(z)-\nabla^2_{ii} f(z)\xi_i]\\
            =\,&\E_q[\nabla^2_{ii}f(z)-\nabla^3_{iii}f(z)\xi_i-\nabla^3_{iii}f(z)\xi_i+\nabla^4_{iiii}f(z)\xi_i^2-\nabla^2_{ii}f(z)]\\
            =&-2B_{ii}\mu_i+C_{ii}\mu_i^2.
        \end{align*}
        Moreover, for either $i=j$ or $i\neq j$, we have
        \begin{align*}
            \nabla^2_{\xi_i\Xi_{jj}}\E[f(z)]=\nabla_{\Xi_{jj}}\E_q[\nabla_i f(z)-\nabla^2_{ii} f(z)\xi_i]
            =\frac{1}{2}\E_q[\nabla^3_{ijj}f(z)-\nabla^4_{iijj}f(z)\xi_i]
            =\frac{1}{2}(B_{ij}-C_{ij}\mu_i).
        \end{align*}
        \begin{align*}
            \nabla^2_{\Xi_{ii}\Xi_{jj}}\E[f(z)]=\frac{1}{2}\nabla_{\Xi_{jj}}\E_q[\nabla^2_{ii} f(z)]
            =\frac{1}{4}\E_q[\nabla^4_{iijj}f(z)]
            =\frac{1}{4}C_{ij}.
        \end{align*}
    \end{proof}
    \begin{lemma}\label{lemma:get hessian 2}
        For mean field family $q_\omega(z)$, we have \[\nabla^2A^*(\omega)=\mathrm{diag}(\nabla^2 A^*_1(\omega_1), \nabla^2 A^*_2(\omega_2), \cdots, \nabla^2 A^*_d(\omega_d)),\]where each $\nabla^2A^*_i(\omega_i)$ is a $2\times 2$ matrix corresponding to univariate case, i.e., \[\nabla^2A^*_i(\omega_i)=\frac{1}{\sigma_{i}^4}\begin{pmatrix}
            2\mu_i^2+\sigma_{i}^2 & -\mu_i \\ -\mu_i & \frac{1}{2}
        \end{pmatrix}.\]
    \end{lemma}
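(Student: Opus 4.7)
The plan is to exploit the additive separability of $A^*$ over the coordinate pairs $(\xi_i, \Xi_{ii})$. In the mean-field parameterization, both the natural parameter $\Lambda$ and the expectation parameter $\Xi$ are restricted to diagonal matrices, so the log-partition function $A(\lambda,\Lambda) = -\tfrac{1}{4}\lambda^\top \Lambda^{-1}\lambda - \tfrac{1}{2}\log\det(-2\Lambda)$ (derived in \cref{app:facts gaussian}) collapses to a coordinate-wise sum $A(\lambda,\Lambda) = \sum_{i=1}^d \bigl[ -\tfrac{\lambda_i^2}{4\Lambda_{ii}} - \tfrac{1}{2}\log(-2\Lambda_{ii}) \bigr]$. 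Taking convex conjugates commutes with such separable sums, so I would conclude $A^*(\omega) = \sum_{i=1}^d A_i^*(\xi_i,\Xi_{ii})$ for some univariate functions $A_i^*$.

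Next I would identify each $A_i^*$ explicitly. The same sup-computation that yielded \cref{fact A^*} in the multivariate case, applied in dimension one, gives $A_i^*(\xi_i,\Xi_{ii}) = -\tfrac{1}{2}\log(\Xi_{ii}-\xi_i^2) + c_i$ on $\{\Xi_{ii} > \xi_i^2\}$, with $c_i$ an irrelevant constant. Alternatively, one can read this off directly from \cref{fact A^*}, since when $\Xi$ is diagonal the matrix $\Xi - \diag(\xi\odot\xi)$ is diagonal with entries $\Xi_{ii}-\xi_i^2$, so $\log\det(\Xi-\diag(\xi\odot\xi)) = \sum_i \log(\Xi_{ii}-\xi_i^2)$.

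Because $A^*$ is additively separable in the pairs $(\xi_i,\Xi_{ii})$, every mixed second partial derivative across distinct indices vanishes: $\partial^2 A^*/\partial \xi_i \partial \xi_j = \partial^2 A^*/\partial \xi_i \partial \Xi_{jj} = \partial^2 A^*/\partial \Xi_{ii} \partial \Xi_{jj} = 0$ for $i\neq j$. Combined with the coordinate ordering $\omega = (\xi_1,\Xi_{11},\ldots,\xi_d,\Xi_{dd})$ used in \cref{lemma:get hessian}, this immediately yields a block-diagonal structure with $2\times 2$ blocks, where the $i$-th block is the Hessian of $A_i^*$. That $2\times 2$ Hessian has already been computed in \eqref{eq:hessian 1d 2} (with the substitution $\mu\mapsto \mu_i = \xi_i$ and $\sigma^2 \mapsto \sigma_i^2 = \Xi_{ii}-\xi_i^2$), and matches the displayed formula in the lemma statement.

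There is essentially no obstacle here beyond routine bookkeeping; the lemma is a direct consequence of separability plus the univariate calculation already done in \cref{sec:relative_smoothness}. The only care required is to keep the ordering convention consistent so that "block-diagonal with $2\times 2$ blocks" corresponds exactly to the vanishing cross-partials across indices $i\neq j$, which is immediate from the indexing scheme used throughout the appendix.
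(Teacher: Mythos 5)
Your proposal is correct and follows essentially the same route as the paper: the paper likewise observes that coordinate independence (separability of $A^*$ over the pairs $(\xi_i,\Xi_{ii})$) forces all cross-partials to vanish, yielding a block-diagonal Hessian whose $2\times 2$ blocks are given by the univariate computation in \eqref{eq:hessian 1d 2}. Your derivation is merely a more explicit spelling-out of the separability step (via the conjugate of a separable sum, or the determinant of a diagonal matrix), which the paper compresses into a single sentence.
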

    \begin{proof}
        From \cref{app:facts gaussian} we know that \begin{gather*}
            \nabla_\xi A^*(\omega)=(\Xi-\xi\xi^\top)^{-1}\xi=\Sigma^{-1}\mu,\\
            \nabla_\Xi A^*(\omega)=-\frac{1}{2}(\Xi-\xi\xi^\top)^{-1}=-\frac{1}{2}\Sigma^{-1}.
        \end{gather*}
        Since the coordinates are independent, $\nabla^2_{\omega_i\omega_j}A^*(\omega)=0$ for all $i\neq j$. Therefore, $\nabla^2 A^*(\omega)$ is a block diagonal matrix, and it's easy to show that each block \[\nabla^2_{\omega_i\omega_i}A^*(\omega)=\nabla^2A^*_i(\omega_i)=\frac{1}{\sigma_i^4}\begin{pmatrix}
            2\mu_i^2+\sigma_i^2 & -\mu_i \\ -\mu_i & \frac{1}{2}
        \end{pmatrix}.\]
    \end{proof}
    In order to prove the theorem, we still need a lemma to transform high-order derivative conditions to lower-order ones with Stein's Lemma~\ref{lemma:stein}. The proof of this lemma is deferred to \cref{sec:proof of stein}.
    \begin{lemma}\label{lemma:iterate stein}
        For $f\in C^2(\R^d)$, if $Z\sim \mathcal{N}(\mu, \Sigma)$ where $\Sigma$ is a diagonal matrix, we have that for any $i, j\in \{1,\cdots, d\}$, \[|\E[\nabla_{ij}f(Z)]|\leq \sigma_i^{-1}\sigma_j^{-1}\sup_{z}|f(z)|.\]
    \end{lemma}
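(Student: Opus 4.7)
}
The plan is to apply Stein's lemma (\cref{lemma:stein}) twice in order to trade both derivatives on $f$ for the supremum norm of $f$ itself, together with two factors that we can absorb into $\sigma_i^{-1}\sigma_j^{-1}$ by taking expectations. Because $\Sigma = \mathrm{diag}(\sigma_1^2,\ldots,\sigma_d^2)$, the coordinates $Z_1,\ldots,Z_d$ are independent Gaussians, so Stein's lemma applied in coordinate $k$ reads $\sigma_k^2 \E[\nabla_k g(Z)] = \E[g(Z)(Z_k - \mu_k)]$ for any differentiable $g$ with mild growth.

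First, consider the off-diagonal case $i \neq j$. Applying Stein in coordinate $i$ with $g = \nabla_j f$ gives
\[
\E[\nabla_{ij} f(Z)] = \sigma_i^{-2}\, \E[\nabla_j f(Z)\,(Z_i - \mu_i)].
\]
Next, I would apply Stein in coordinate $j$ to the auxiliary function $h(z) \coloneqq f(z)(z_i - \mu_i)$; since $i \neq j$, the factor $(z_i - \mu_i)$ is constant with respect to $z_j$, so $\nabla_j h(z) = \nabla_j f(z)(z_i - \mu_i)$ and hence
\[
\E[\nabla_j f(Z)(Z_i - \mu_i)] = \sigma_j^{-2}\, \E[f(Z)(Z_i - \mu_i)(Z_j - \mu_j)].
\]
Combining the two identities and bounding $|f(Z)| \le \sup_z |f(z)|$, the desired estimate follows by using independence of $Z_i$ and $Z_j$ and the fact that $\E|Z_i - \mu_i|\,\E|Z_j - \mu_j| \le \sigma_i \sigma_j$ (by Cauchy--Schwarz applied to each factor).

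The main obstacle is the diagonal case $i = j$, where the two applications of Stein no longer commute because $h(z) = f(z)(z_i - \mu_i)$ now depends on $z_i$ through both factors and produces a spurious $f(Z)$ term when differentiated. Carrying out the same two-step procedure here yields the Hermite-polynomial identity
\[
\E[\nabla_{ii} f(Z)] = \sigma_i^{-2}\, \E\!\left[f(Z)\!\left(\tfrac{(Z_i - \mu_i)^2}{\sigma_i^2} - 1\right)\right].
\]
Taking absolute values and pulling the supremum of $|f|$ out, the remaining task is to show $\E\bigl|(Z_i-\mu_i)^2/\sigma_i^2 - 1\bigr| \le 1$. This reduces to the one-dimensional calculation $\E|Y^2 - 1| \le 1$ for a standard normal $Y$, which can be carried out explicitly by splitting the integral at $|y| = 1$ and integrating by parts with $y^2 \phi(y) = -y\phi'(y)$; this yields the closed form $4\phi(1) < 1$. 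This gives the bound $|\E[\nabla_{ii} f(Z)]| \le \sigma_i^{-2} \sup_z |f(z)|$, which matches the claim for $i=j$ and completes the proof.
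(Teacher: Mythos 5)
Your proposal is correct and follows essentially the same route as the paper: two applications of Stein's lemma in each case, yielding $\E[\nabla_{ij}f(Z)]=\sigma_i^{-1}\sigma_j^{-1}\E[f(Z)X_iX_j]$ for $i\neq j$ and the Hermite identity $\E[\nabla_{ii}f(Z)]=\sigma_i^{-2}\E[f(Z)(X_i^2-1)]$ for $i=j$, closed off by $\E|X|\leq 1$ and $\E|X^2-1|=4\phi(1)=2\sqrt{2/(\pi e)}\leq 1$. The only cosmetic difference is that the paper first integrates out the other coordinates to reduce the diagonal case to a univariate function before applying Stein, whereas you apply Stein directly and track the extra $f(Z)$ term; both land on the same identity.
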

    With the three lemmas above, we are ready to prove \cref{thm:sufficient smoothness}.
\begin{proof}[Proof of \cref{thm:sufficient smoothness}]

    First, we will show that with $\beta=\cO(dD^2U(L_1+L_2U)+dD^3(L_1+L_2U))$, $\nabla^2 \E_{q(z;\omega)}[f(z)]\preceq \beta\nabla^2 A^*(\omega)$ holds, i.e., $M(\beta)\coloneqq \beta\nabla^2 A^*(\omega)-\nabla^2 \E_{q(z;\omega)}[f(z)]$ is positive semidefinite. According to \cref{lemma:get hessian,lemma:get hessian 2},
    for $p, q\in \{1,\cdots, d\}$,\begin{equation}\label{big m}
            M(\beta)_{ij}=\begin{cases}
        \frac{\beta}{\sigma_p^4}(2\mu_p^2+\sigma_p^2)+2B_{pp}\mu_p-C_{pp}\mu_p^2, & i=j=2p-1,\\
        \frac{\beta}{2\sigma_p^4}-\frac{1}{4}C_{pp}, & i=j=2p,\\
        -\frac{\mu_p}{\sigma_p^4}\beta-\frac{1}{2}(B_{pp}-C_{pp}\mu_p), & (i,j)=(2p-1,2p)\text{ or }(i,j)=(2p,2p-1),\\
        -H_{pq}+B_{pq}\mu_q+B_{qp}\mu_p-C_{pq}\mu_p\mu_q, & (i,j)=(2p-1,2q-1), p\neq q,\\
        -\frac{1}{2}(B_{pq}-C_{pq}\mu_p), & (i,j)=(2p-1,2q)\\&\text{ or }(i,j)=(2p,2q-1),p\neq q,\\
        -\frac{1}{4}C_{pq}, & (i,j)=(2p, 2q), p\neq q.
    \end{cases}
    \end{equation}
    Then, we will prove that with $\beta=\cO(dD^2U(L_1+L_2U)+dD^3(L_1+L_2U))$, $M$ is positive semidefinite. In the following, we will drop the dependence of $\beta$ for convenience.\par
    Note that the assumptions in the theorem and \cref{lemma:iterate stein} imply that $|B_{pq}|\leq \sigma_q^{-2}\sup_z|\nabla_p f(z)|\leq DL_1$ and $|C_{pq}|\leq \sigma_q^{-2}\sup_z|\nabla^2_{pp} f(z)|\leq DL_2$. It is obvious that $|H_{pq}|\leq L_2$.\par
    Define matrix $\tilde M=M-\frac{2}{3}\mathrm{diag}(M)$, where $\mathrm{diag}(M)$ is the diagonal matrix by setting all non-diagonal entries of $M$ to 0. Then for any $v\in \R^{2d}$, 
    \begin{equation}\label{eq:decompose}
        \begin{aligned}
        v^\top Mv=&\sum_{i=1}^{2d}\sum_{j=1}^{2d}v_iv_jM_{ij}=2^{2-d}\sum_{B\in \B}\sum_{i\in B}\sum_{j\in B}v_iv_j\tilde M_{ij}\\
        &+\sum_{p=1}^{d}\sum_{i=1}^2\sum_{j=1}^2 v_{2p-2+i}v_{2p-2+j}M_{2p-2+i,2p-2+j}.
    \end{aligned}
    \end{equation}
    
    Here $\B$ is the set of sets of cardinality $d$, which contain exactly one element of $\{1,2\}$, one element of $\{3,4\}$, ..., one element of $\{2d-1,2d\}$. The set $\B$ has cardinality $2^d$. Equation \eqref{eq:decompose} can be proved by comparing the coefficient of each term $v_iv_jM_{ij}$ on both sides. Therefore, to prove $v^\top Mv\geq 0$ for any $v\in \R^{2d}$, we only need to show each term on the RHS is non-negative. In other words, we only need to show \begin{itemize}
            \item For any $B\in \B$, the matrix $\tilde M^B$ is PSD, where $\tilde M^B\in \R^{d\times d}$ is the submatrix of $\tilde M$ by choosing the $i$-th rows and columns where $i\in B$.
            \item For any $p\in \{1,\cdots,d\}$, the submatrices of $M$ \[M^{(p)}\coloneqq \begin{pmatrix}
                M_{2p-1,2p-1} & M_{2p-1,2p} \\ M_{2p, 2p-1} & M_{2p,2p}
            \end{pmatrix}\]are PSD.
        \end{itemize}
        For the first set of conditions, we aim to find $\beta$ such that the matrices $\tilde M^B$ are diagonally dominant and the diagonal entries are positive. For positiveness, if $\beta\geq \cO(dD^2U(L_1+L_2U)+dD^3(L_1+L_2U))$, then \begin{align*}
            M_{2p-1,2p-1}=\,&\frac{\beta}{\sigma_p^4}(2\mu_p^2+\sigma_p^2)+2B_{pp}\mu_p-C_{pp}\mu_p^2\\
            \geq \,&\sigma_p^{-2}\beta-2|B_{pp}||\mu_p|-|C_{pp}||\mu_p|^2\\
            \geq \,&D^{-1}\beta-2DL_1U-DL_2U^2\\
            \geq \,&0.
        \end{align*}
        \begin{align*}
            M_{2p,2p}=\frac{\beta}{2\sigma_p^4}-\frac{1}{4}C_{pp}
            \geq  \frac{1}{2}D^{-2}\beta-\frac{1}{4}DL_2
            \geq 0.
        \end{align*}
        For diagonal dominance, we fix $i=2p-1$ for some $p\in {1,\cdots, d}$, and show that for every $\tilde M^B$ containing the $i$-th row and $i$-th column, the diagonal element $\tilde M^B_{ii}$ is dominating in this matrix. Indeed, since the $2p$-th row does not appear in $\tilde M^B$,
        we only need to guarantee that $\tilde M_{2p-1,2p-1}\geq \sum_{i=1,i\notin\{2p-1,2p\}}^{2d} |\tilde M_{2p-1,i}|$: \begin{align*}
            &\tilde M_{2p-1,2p-1}\geq \sum_{i=1,i\notin\{2p-1,2p\}}^{2d} |\tilde M_{2p-1,i}|\\
            \Longleftrightarrow \,& \frac{\beta}{3\sigma_p^4}(2\mu_p^2+\sigma_p^2)+\frac{2}{3}B_{pp}\mu_p-\frac{1}{3}C_{pp}\mu_p^2\geq \sum_{i\neq k}|-H_{pi}+B_{pi}\mu_i+B_{ip}\mu_p-C_{pi}\mu_p\mu_i| \\ 
            & \qquad +\frac{1}{2}|B_{pi}-C_{pi}\mu_p|\\
            \Longleftarrow\,&\frac{\beta}{3\sigma_p^2}\geq d(L_2+2DL_1U+DL_2U^2)+\frac{d}{2}(DL_1+DL_2U)\\
            \Longleftarrow \,& \beta=\cO(dD^2U(L_1+L_2U)).
        \end{align*}
        Now, applying the same reasoning for $i=2p$, we obtain\begin{align*}
            &\tilde M_{2p,2p}\geq \sum_{i=1,i\notin\{2p-1,2p\}}^{2d} |\tilde M_{2p,i}|\\
            \Longleftrightarrow \,& \frac{\beta}{6\sigma_p^4}-\frac{1}{12}C_{pp}\geq \sum_{i\neq p}\frac{1}{2}|B_{ip}-C_{ip}\mu_i|+\frac{1}{4}|C_{pi}|\\
            \Longleftarrow \, & \frac{\beta}{6\sigma_p^4}\geq \frac{d}{2}(DL_1+DL_2U)+dDL_2\\
            \Longleftarrow \, & \beta\geq \cO(dD^3(L_1+L_2U)).
        \end{align*}
        The second set of conditions can be easily verified. Since we have proved the non-negativeness of diagonal entries, to prove PSD we only need to show that the determinant of $M^{(p)}$ is non-negative. For every fixed $p\in \{1,\cdots, d\}$ the determinant is given by \[\left(\frac{\beta}{2\sigma_p^4}(2\mu_p^2+\sigma_p^2)+2B_{pp}\mu_p-C_{pp}\mu_p^2\right)\left(\frac{\beta}{2\sigma_p^4}-\frac{1}{4}C_{pp}\right)-\left[\frac{\mu_p}{\sigma_p^4}\beta-\frac{1}{2}\left(B_{pp}- C_{pp}\mu_p\right)\right]^2.\]
        Let the determinant be non-negative and we get \[\beta^2\geq \cO(D^2U(L_1+L_2U))\beta+\cO(D^3U(L_1+L_2U)).\]
        Since the larger root of $\beta^2=a\beta+b$ ($a, b\geq 0$) is given by $\beta=\frac{-a+\sqrt{a^2+4b}}{2}$, the condition $\beta\geq a+\sqrt{b}= \frac{a}{2}+\frac{a+\sqrt{4b}}{2}\geq \frac{a+\sqrt{a^2+4b}}{2}$ guarantees that $\beta^2\geq a\beta+b$ always holds. Therefore, the determinant is non-negative if $\beta\geq \cO(D^2U(L_1+L_2U)+\sqrt{D^3U(L_1+L_2U)})=\cO(D^2U(L_1+L_2U))$.\par
        Clearly, $\beta=\cO(dD^2U(L_1+L_2U)+dD^3(L_1+L_2U))$ satisfies all the conditions.\par
        Finally, we need to prove that $\alpha=-\cO(dD^2U(L_1+L_2U)+dD^3(L_1+L_2U))$ satisfies $\nabla^2 \E_{q(z;\omega)}[f(z)]\succeq \alpha \nabla^2 A^*(\omega),$ i.e., $\nabla^2 \E_{q(z;\omega)}[f(z)]- \alpha \nabla^2 A^*(\omega)=-M(\alpha)\succeq 0$. We can again choose $\alpha<0$ and $|\alpha|$ so large that the matrix $-M(\alpha)$ is diagonally dominant with positive diagonal entries. Since the previous proof provides upper bounds on the absolute value of non-diagonal entries of $M$, the same bound also applies to the absolute value of non-diagonal entries of $-M(\alpha)$.\par
        Therefore, using the same approach, one can show that $-M(\alpha)\succeq 0$ with $-\alpha$ taking the same value as $\beta$. As a result, $-\E_q[\log p(\D\,|\, z)]=\E_q[f(z)]$ is $L$-smooth relative to $A^*$ on $\tilde\Omega$, where $L=\cO(dD^2U(L_1+L_2U)+dD^3(L_1+L_2U))$.
\end{proof}
    \subsection{Proof of \cref{lemma:iterate stein}}\label{sec:proof of stein}
    We first restate \cref{lemma:iterate stein} below.
    \begin{lemma*}
        For $f\in C^2(\R^d)$, if $Z\sim \mathcal{N}(\mu, \Sigma)$ where $\Sigma$ is a diagonal matrix, we have that for any $i, j\in \{1,\cdots, d\}$, \[|\E[\nabla_{ij}f(Z)]|\leq \sigma_i^{-1}\sigma_j^{-1}\sup_{z}|f(z)|.\]
    \end{lemma*}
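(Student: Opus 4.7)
The plan is to apply Stein's lemma twice to rewrite $\E[\nabla_{ij} f(Z)]$ purely as an expectation of $f(Z)$ weighted by a low-degree polynomial in $Z-\mu$, and then bound the resulting moment using the fact that $\sup_z |f(z)|$ factors out.

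First, I would apply \cref{lemma:stein} to the scalar function $g(z) = \nabla_i f(z)$. Reading off the $j$-th component and using that $\Sigma = \mathrm{diag}(\sigma_1^2, \ldots, \sigma_d^2)$ is diagonal gives
\[
\sigma_j^2 \, \E[\nabla_{ij} f(Z)] = \E[\nabla_i f(Z)\,(Z_j - \mu_j)].
\]
Next I would apply \cref{lemma:stein} again, now to $g(z) = f(z)(z_j - \mu_j)$, reading off the $i$-th component. By the product rule $\partial_i(f(z)(z_j-\mu_j)) = \nabla_i f(z)(z_j-\mu_j) + f(z)\delta_{ij}$, so combining the two identities yields the key representation
\[
\E[\nabla_{ij} f(Z)] = \sigma_i^{-2}\sigma_j^{-2} \, \E\!\left[f(Z)\bigl((Z_i - \mu_i)(Z_j - \mu_j) - \sigma_i^2 \,\delta_{ij}\bigr)\right].
\]

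Finally, I would factor out $\sup_z |f(z)|$ and bound the residual first absolute moment. Setting $\bar Z_k := \sigma_k^{-1}(Z_k - \mu_k) \sim \mathcal{N}(0,1)$ and using that $\Sigma$ is diagonal (so the $\bar Z_k$ are independent), the bound reduces to showing that $\E|\bar Z_i \bar Z_j - \delta_{ij}| \leq 1$. When $i \neq j$, independence and $\E|\bar Z_k| = \sqrt{2/\pi}$ give $\E|\bar Z_i \bar Z_j| = 2/\pi < 1$. When $i = j$, a direct calculation splitting over $\{\bar Z_i^2 \leq 1\}$ and using integration by parts on $\int u^2 e^{-u^2/2}\,du$ yields $\E|\bar Z_i^2 - 1| = 2\sqrt{2/(\pi e)} < 1$. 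In either case $|\E[\nabla_{ij} f(Z)]| \leq \sigma_i^{-1}\sigma_j^{-1}\sup_z|f(z)|$, as claimed.

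The main obstacle is a minor one: the diagonal case $i = j$ requires checking that the first absolute moment of a centered $\chi_1^2$ variable is at most $1$. This is not immediately obvious from generic Stein-type manipulations since the bias term $\sigma_i^2 \delta_{ij}$ could in principle inflate the constant, but the explicit computation above confirms the clean bound. A secondary subtlety is that $f$ need only be in $C^2$, so one should justify applying Stein's lemma to $\nabla_i f$ and to $f(Z)(Z_j - \mu_j)$; this is standard under the given boundedness $\sup_z |f(z)| < \infty$, which together with $C^2$ regularity of $f$ ensures the required integrability for Stein's identity.
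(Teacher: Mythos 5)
Your proof is correct and follows essentially the same route as the paper's: apply Stein's lemma twice to convert $\E[\nabla_{ij}f(Z)]$ into $\sigma_i^{-1}\sigma_j^{-1}\E[f(Z)(\bar Z_i\bar Z_j-\delta_{ij})]$ and then bound the absolute moments $\E|\bar Z_i\bar Z_j|=2/\pi$ and $\E|\bar Z_i^2-1|=2\sqrt{2/(\pi e)}$ by $1$. The only cosmetic difference is that the paper handles the diagonal case $i=j$ by first marginalizing $f$ to a univariate function before iterating Stein, whereas you work with the multivariate identity and the product rule directly; both yield the identical representation and constants.
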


    \begin{proof}
    For $i=j$, we have\begin{align*}
            \E[\nabla^2_{ii}f(Z)]=&\int_\R q(z_i)\int_{\R^{d-1}}q(z_{-i}\,|\,z_i)\nabla^2_{ii}f(z)\,\mathrm{d}z_{-i}\,\mathrm{d}z_i\\
            =&\int_\R q(z_i)\int_{\R^{d-1}}q(z_{-i})\nabla^2_{ii}f(z_{-i}, z_i)\,\mathrm{d}z_{-i}\,\mathrm{d}z_i\\
            =&\int_\R q(z_i)\nabla_{ii}\left(\int_{\R^{d-1}}q(z_{-i})f(z_{-i}, z_i)\,\mathrm{d}z_{-i}\right)\,\mathrm{d}z_i\\
            =\,&\E[g''(Z_i)],
        \end{align*}where $g(Z_i)$ is a univariate function and \[g(Z_i)\coloneqq \int_{\R^{d-1}}q(z_{-i})f(z_{-i}, Z_i)\,\mathrm{d}z_{-i},\qquad |g(Z_i)|\leq \sup_{z\in \R}|f(z)|.\] Since $Z_i\sim \mathcal{N}(\mu_i, \sigma_i^2)$, we apply Stein's lemma (\cref{lemma:stein}) twice and we have \begin{align*}
        \E[g''(Z_i)]=\,&\sigma_i^{-2}\E[g'(Z_i)(Z_i-\mu_i)]\\
        =\,&\sigma_i^{-2}\E[g'(Z_i)(Z_i-\mu_i)+g(Z_i)]-\sigma_i^{-2}\E[g(Z_i)]\\
        =\,&\sigma_i^{-4}\E[g(Z_i)(Z_i-\mu_i)^2]-\sigma_i^{-2}\E[g(Z_i)]\\
        =\,&\sigma_i^{-2}\E[g(Z_i)(X_i^2-1)],
    \end{align*}
        where $X_i\coloneqq(Z_i-\mu_i)/\sigma_i\sim \mathcal{N}(0, 1)$. Using Lemma \ref{expectation of abs 1} and we get the following bound on $\E[\nabla^2_{ii}f(Z)]$: \[\E[\nabla^2_{ii}f(Z)]=\E[g''(Z_i)]\leq \sigma_i^{-2}\E[g(Z_i)(X_i^2-1)]\leq \sigma_i^{-2}\sup_z|f(z)|\E|X_i^2-1|\leq \sigma_i^{-2}\sup_z |f(z)|.\]
        For $i\neq j$, we define $g(Z)=(Z_j-\mu_j)f(Z)$ and use \cref{lemma:stein} to get \[\E[f(Z)(Z_j-\mu_j)(Z-\mu)]=\E[g(Z)(Z-\mu)]=\Sigma\E[\nabla g(Z)].\]
        Reading the $i$-th coordinate gives \begin{equation}\label{stein iterate 1}
            \E[f(Z)(Z_j-\mu_j)(Z_i-\mu_i)]=\E[g(Z)(Z_i-\mu_i)]=\sigma_i^2\E[\nabla_i g(Z)]=\sigma_i^2\E[(Z_j-\mu_j)\nabla_i f(Z)].
        \end{equation}
        Then we define $h(Z)=\nabla_i f(Z)$ and \cref{lemma:stein} gives\[\E[h(Z)(Z-\mu)]=\Sigma \E[\nabla h(Z)].\] 
        Reading the $j$-th coordinate gives \begin{equation}\label{stein iterate 2}
            \E[(Z_j-\mu_j)\nabla_if(Z)]=\E[h(Z)(Z_j-\mu_j)]=\sigma_j^2 \E[\nabla_j h(Z_j)]=\sigma_j^2\E[\nabla^2_{ij}f(Z)].
        \end{equation}
        Combining \eqref{stein iterate 1} and \eqref{stein iterate 2} and we have the following bound of $\E[\nabla_{ij}f(Z)]$:\[\E[\nabla_{ij}f(Z)]=\sigma_i^{-1}\sigma_j^{-1}\E[f(Z)X_iX_j],\]where $X_i\coloneqq (Z_i-\mu_i)/\sigma_i$ and same for $X_j$. Note that $X_i,X_j$ are independent because we assume $\Sigma$ is diagonal. Finally, we use the standard result that $\E|X|\leq 1$ for standard Gaussian variable $X$ to get that\[\E[\nabla_{ij}f(Z)]=\sigma_i^{-1}\sigma_j^{-1}\E[f(Z)X_iX_j]\leq \sigma_i^{-1}\sigma_j^{-1} \sup_z|f(z)| \E|X_i|\E|X_j|\leq \sigma_i^{-1}\sigma_j^{-1} \sup_z |f(z)|.\]
        By symmetry, \(-\E[\nabla_{ij}^2f(Z)]\leq \sigma_i^{-1}\sigma_j^{-1}\sup_{z}|f(z)|\) holds as well.
    \end{proof}
    
    \begin{lemma}\label{expectation of abs 1}
        Let $X\sim \mathcal{N}(0, 1)$, then \[\E[|X^2-1|]=2\sqrt{\frac{2}{\pi e}}\leq 1.\]
    \end{lemma}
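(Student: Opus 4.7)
The plan is to compute $\E[|X^2-1|]$ in closed form by splitting the real line into the region where $X^2 - 1 \geq 0$ and its complement, then exploit the identity $\E[X^2 - 1] = 0$ to halve the work. Concretely, write
\[
\E[|X^2-1|] = \int_{|x|\geq 1}(x^2-1)\phi(x)\,dx + \int_{|x|<1}(1-x^2)\phi(x)\,dx,
\]
where $\phi$ is the standard Gaussian density. Denoting the two pieces by $A$ and $B$, the variance identity $\E[X^2]=1$ gives $A - B = 0$, so $\E[|X^2-1|] = 2B$, and it suffices to evaluate the bounded-support integral $B = \int_{-1}^{1}(1-x^2)\phi(x)\,dx$.

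Next I would use the elementary identity $\phi'(x) = -x\phi(x)$ and integration by parts on $\int_{-1}^{1} x^2 \phi(x)\,dx = -\int_{-1}^{1} x\,\phi'(x)\,dx$. The boundary term contributes $-[x\phi(x)]_{-1}^{1} = -2\phi(1)$, and the remaining integral cancels exactly against $\int_{-1}^{1}\phi(x)\,dx$ that appears in $B$. This leaves $2B = 4\phi(1) = \tfrac{4}{\sqrt{2\pi}}\,e^{-1/2} = 2\sqrt{2/(\pi e)}$, which is exactly the stated equality.

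For the inequality $2\sqrt{2/(\pi e)} \leq 1$, I would just verify $8 \leq \pi e$ numerically (since $\pi e \approx 8.54$), or more rigorously note $\pi > 3.14$ and $e > 2.71$, giving $\pi e > 8.5 > 8$.

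The main obstacle here is essentially nonexistent: the computation is elementary once one notices the symmetry $A = B$, which reduces a tail integral to a bounded one and lets the integration-by-parts boundary term $\phi(1)$ emerge cleanly. The only mild subtlety is recognizing that the cancellation in the integration by parts makes the final answer depend on $\phi(1)$ alone, independent of the Gaussian CDF value $\Phi(1)$, so no special-function evaluations are needed.
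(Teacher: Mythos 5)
Your proposal is correct and follows essentially the same route as the paper: both use $\E[X^2]=1$ to reduce $\E[|X^2-1|]$ to twice the bounded integral $\int_{-1}^{1}(1-x^2)\phi(x)\,dx$, and your integration by parts with $\phi'(x)=-x\phi(x)$ is just a rephrasing of the paper's observation that $\frac{d}{dx}\bigl(xe^{-x^2/2}\bigr)=(1-x^2)e^{-x^2/2}$, yielding the same value $4\phi(1)=2\sqrt{2/(\pi e)}$.
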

    \begin{proof}First, we have
        \[\E[|X^2-1|]=\E[X^2-1]+2\E[\1_{|X|\leq 1}(1-X^2)]=4\E[\1_{X\in [0, 1]}(1-X^2)].\]
        Note that \[\frac{\,\mathrm{d}}{\,\mathrm{d}x}\left(xe^{-\frac{x^2}{2}}\right)=(1-x^2)e^{-\frac{x^2}{2}},\]then we can compute the expectation in closed form \begin{align*}
            \E[\1_{X\in [0, 1]}(1-X^2)]=&\int_{0}^1 \frac{1}{\sqrt{2\pi}}(1-x^2)e^{-x^2/2}\,\mathrm{d}x 
            =\frac{1}{\sqrt{2\pi}}xe^{-\frac{x^2}{2}}\Big|_0^1
            =\frac{1}{\sqrt{2\pi e}}.
        \end{align*}
        Hence $\E[|X^2-1|]=4\E[\1_{X\in [0, 1]}(1-X^2)]=2\sqrt{\frac{2}{\pi e}}\leq 1.$
    \end{proof}
    \subsection{Proof of \cref{prop:hidden convexity of ngvi}: Hidden Convexity of $\ell(\omega)$}\label{app:hidden convexity proof}
    We define $\Theta\coloneqq \{(\mu, C):C \in \mathcal{S}_+^d\text{ and is diagonal}\}$. This set corresponds to the Cholesky parameterization of the expectation parameters contained in $\Omega$. Furthermore, we define $c:\Omega\to\Theta$ to be the one-to-one map from expectation parameter to Cholesky parameter of the same distribution.
    \[
\begin{tikzpicture}[
baseline={(current bounding box.center)},
node distance=4cm,
every node/.style={align=center}
]

\node (left) at (0,0) { \space \\
$\omega = \left(\mu,\, \Sigma + \mathrm{diag}(\mu \odot \mu)\right)$ 
};

\node (right) at (6,0) {
$c(\omega) = (\mu,\, C), $\\
\small where $C C^{\top} = \Sigma$
};

\node[above=2.5ex of left] { {Expectation parameters, $\Omega$}};
\node[above=2.5ex of right] { {Cholesky parameters, $\Theta$}};

\draw[->, thick, bend left=20] (left.east) to node[above, yshift=0.5ex] {\small mapping $c(\omega)$} (right.west);

\end{tikzpicture}
\]
For bounded expectation parameter domain $\tilde\Omega$, we also define its counterpart in Cholesky parameterization $\tilde \Theta\coloneqq\{(\mu, C):C\text{ is diagonal and }D^{-1}\leq C_{ii}^2\leq D,~\forall\, 1\leq i\leq d\}$. With a slight abuse of notation, we also use $c$ to denote the one-to-one map from $\tilde\Omega$ to $\tilde \Theta$.\par
In the following, we will verify that the negative ELBO $\ell(\omega)$ is hidden convex as defined in \cref{def:hidden convexity} on the bounded domain $\tilde \Omega$.
\begin{proof}[Proof of \cref{prop:hidden convexity of ngvi}]
Define $L:\tilde\Theta\to\R$,\begin{equation}\label{eq:hidden decomposition}
    L(\theta)\coloneqq \ell(c(\omega))=\E_{q(z;\theta)}[-\log p(y\,|\,z)-\log p(z)]+\E_{q(z;\theta)}[\log q(z;\theta)],
\end{equation}where $q(z;\theta)=q(z;\mu,C)=\mathcal{N}(\mu, CC^\top)$ is the Gaussian vector with Cholesky parameterization.
\paragraph{Proof of the First Property in \cref{def:hidden convexity}.}
    We first verify the strong convexity of $L$ on $\tilde\Theta$. It is obvious that the domain $\tilde \Theta$ is convex. For the second term in \eqref{eq:hidden decomposition}, we note that it is simply the negative entropy of Gaussian distribution:\[\E_{q(z;\theta)}[\log q(z;\theta)]=-\frac{d}{2}(1+\log (2\pi))-\frac{1}{2}\log \det(CC^\top)=Const-\sum_{i=1}^d \log C_{ii}.\]
    Since each $-\log C_{ii}$ is convex in $C_{ii}$, the entropy is convex in $C$. Then we can conclude that the second term $\E_{q_\theta(z)}[\log q_\theta(z)]$ is convex in $\theta$.\par 
    For the first term of \eqref{eq:hidden decomposition}, we apply Theorem 9 of \citep{domke2020provable}, which states that the $\mu_H$-strong convexity of $f(z)$ for some function $f$ implies the $\mu_H$-strong convexity of $\E_{q(z;\theta)}[f(z)]$ w.r.t. the Cholesky parameter $\theta$. Therefore, we only need to prove the strong convexity of $-\log p(y\,|\,z)-\log p(z)$ in $z$. Indeed, $-\log p(y\,|\,z)-\log p(z)$ is 1-strongly convex in $z$ because $\log p(y\,|\,z)$ is concave in $z$ and \[-\log p(z)=\frac{d}{2}\log(2\pi)+\frac{1}{2}z^\top z\] is 1-strongly convex in $z$.\par
    Therefore, $L(\theta)$, as the sum of a 1-strongly convex function (first term) and a convex function (second term), is 1-strongly convex. Then we conclude that the first property in \cref{def:hidden convexity} is satisfied with $\mu_H=1$.
    \paragraph{Proof of the Second Property in \cref{def:hidden convexity}.}
    Now we compute $\mu_c$ by computing the Lipschitz coefficient of the inverse map $c^{-1}$ on $\tilde\Theta$. We reparameterize $\omega$ and $\theta$ as vectors: $\omega=(\xi_1, \Xi_{11}, \cdots, \xi_d, \Xi_{dd})\in\R^{2d}$ and $\theta=(\mu_1, C_{11}, \cdots, \mu_d, C_{dd})\in\R^{2d}$, respectively. Note that $\omega=c^{-1}(\theta)=(\mu_1, \mu_1^2+C_{11}^2, \cdots, \mu_d, \mu_d^2+C_{dd}^2)$, and $\nabla c^{-1}(\theta)=\diag(G_1, \cdots, G_d)\in \R^{2d\times 2d}$, where \[G_i\coloneqq \begin{pmatrix}
        1 & 0 \\ 2\mu_i & 2C_{ii}
    \end{pmatrix}.\]
    For each block $G_i$, the largest singular value of $G_i$ is bounded by \[\sigma_{\max} (G_i)=\sqrt{\lambda_{\max} (G_iG_i^\top)}\leq \sqrt{\mathrm{Tr}(G_iG_i^\top)}\leq \sqrt{4U^2+4D+1}.\]
    The last inequality above holds since \[G_iG_i^\top=\begin{pmatrix}
        1 & 2\mu_i \\ 2\mu_i & 4\mu_i^2+4C_{ii}^2
    \end{pmatrix},\] and $\mathrm{Tr}(G_iG_i^\top)=1+4\mu_i^2+4C_{ii}^2\leq 4U^2+4D+1$ using the fact that $(\mu, C)\in\tilde \Theta$.\par
    Hence we conclude that $\|\nabla c^{-1}(\theta)\|\leq \sqrt{4U^2+4D+1}$, and $c^{-1}(\cdot)$ is $\sqrt{4U^2+4D+1}$-Lipschitz continuous. Then the second property is satisfied with $\mu_C=(4U^2+4D+1)^{-1/2}$.
    \end{proof}

    \newpage
    
    \section{Missing Proofs in \cref{sec:convergence results}}\label{app:convergence_proofs}
    In this section, we will prove the convergence guarantees of \algname{Proj-SNGD} in \cref{sec:convergence results}. We first prove an auxiliary result of smoothness and strong convexity of $A^*$ in the Euclidean geometry in \cref{app:proof of a}. Next, we prove \cref{thm:convergence rate} and \cref{thm:fast convergence ngvi} in \cref{app:convergence rate proof} and \cref{app:fast convergence proof}, respectively.
    \subsection{Smoothness and Strong Convexity of $A^*$}\label{app:proof of a}
    In this subsection, we will establish the smoothness and strong convexity properties of $A^*$ in Euclidean geometry in the bounded set \[\tilde\Omega=\{(\xi, \Xi): |\xi_i|\leq U, \Xi-\diag(\xi\odot \xi)\text{ is diagonal, }D^{-1}\leq (\Xi-\diag(\xi\odot \xi))_{ii}\leq D,~\forall\, 1\leq i\leq d\}.\]It is obvious that $A^*$ is globally 1-strongly convex in the non-Euclidean geometry induced by $A^*$ itself. However, it is only globally strictly convex in the Euclidean geometry, and strongly convexity only holds in a bounded domain of $\Omega$. In order to transform the PL inequality in Euclidean geometry \eqref{eq:ngvi pl} to the PL inequality in non-Euclidean geometry (see \cref{app:fast convergence proof}), such smooth and strong convexity results are necessary.
    \begin{lemma}\label{lemma:strong cvx of A^*}
        $A^*$ is strongly convex with parameter $C_S\coloneqq (D(4U^2+2D+1))^{-1}$ with respect to Euclidean norm. Moreover, it is smooth with parameter $C_L\coloneqq \frac{9U^2D^2}{2}$.
    \end{lemma}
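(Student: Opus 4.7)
The plan is to reduce the statement to a bound on $2\times 2$ matrices by exploiting the block-diagonal structure of $\nabla^2 A^*(\omega)$ established in \cref{lemma:get hessian 2}: writing $\nabla^2 A^*(\omega) = \mathrm{diag}(M_1, \ldots, M_d)$ with $M_i \coloneqq \nabla^2 A^*_i(\omega_i) = \frac{1}{\sigma_i^4}\bigl(\begin{smallmatrix} 2\mu_i^2 + \sigma_i^2 & -\mu_i \\ -\mu_i & 1/2 \end{smallmatrix}\bigr)$, the Euclidean smoothness/strong-convexity constants of $A^*$ on $\tilde\Omega$ are exactly the worst-case values of $\lambda_{\max}(M_i)$ and $\lambda_{\min}(M_i)$ over $|\mu_i|\leq U$ and $\sigma_i^2 \in [D^{-1}, D]$. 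Since $A^*$ is convex, each $M_i$ is PSD, so I can bound its extremal eigenvalues via its trace and determinant.

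A direct computation yields $\mathrm{tr}(M_i) = (2\mu_i^2 + \sigma_i^2 + 1/2)/\sigma_i^4$ and $\det(M_i) = 1/(2\sigma_i^6)$. For strong convexity, I will use the elementary $2\times 2$ identity $\lambda_{\min} \geq \det/\mathrm{tr}$ (which follows from $\lambda_{\min}\lambda_{\max}=\det$ and $\lambda_{\max} \leq \mathrm{tr}$ for a PSD matrix). This gives $\lambda_{\min}(M_i) \geq 1/(\sigma_i^2(4\mu_i^2 + 2\sigma_i^2 + 1))$, whose denominator is monotone increasing in both $\mu_i^2$ and $\sigma_i^2$, so its supremum on $\tilde\Omega$ is attained at $(\mu_i^2, \sigma_i^2) = (U^2, D)$, yielding exactly $C_S = (D(4U^2 + 2D + 1))^{-1}$.

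For smoothness, I use the trivial bound $\lambda_{\max}(M_i) \leq \mathrm{tr}(M_i)$. The key observation is that $\mathrm{tr}(M_i) = (2\mu_i^2 + 1/2)/\sigma_i^4 + 1/\sigma_i^2$ is strictly decreasing in $\sigma_i^2$, so the worst case is $\sigma_i^2 = D^{-1}$ (not $\sigma_i^2 = D$, which is the worst case for the other bound), giving $\lambda_{\max}(M_i) \leq D^2(2U^2 + D^{-1} + 1/2)$. A short manipulation using $D > 1$ (hence $D^{-1} < 1$) and $U \geq 1$ (hence $3/2 \leq 3U^2/2$) then gives $2U^2 + D^{-1} + 1/2 < 2U^2 + 3U^2/2 = 7U^2/2 \leq 9U^2/2$, and therefore $\lambda_{\max}(M_i) < 9U^2D^2/2 = C_L$.

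There is no serious obstacle here — the proof is essentially a one-dimensional calculation after block-diagonalization. The only point requiring care is identifying the correct worst-case $\sigma_i^2$, which differs between the two bounds: the strong-convexity bound degrades as $\sigma_i^2$ grows (through the factor $2\sigma_i^2$ in the denominator), whereas the smoothness bound degrades as $\sigma_i^2$ shrinks (through the factor $1/\sigma_i^4$). Once this monotonicity is observed, both constants follow from elementary algebra using the assumed ranges $U \geq 1$, $D > 1$.
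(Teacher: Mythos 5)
Your proof is correct and follows essentially the same route as the paper's: reduce to the $2\times 2$ blocks of \cref{lemma:get hessian 2}, bound $\lambda_{\min}$ by $\det/\mathrm{tr}$ (identically to the paper, yielding exactly $C_S$), and bound $\lambda_{\max}$ over the ranges $|\mu_i|\le U$, $D^{-1}\le\sigma_i^2\le D$. The only difference is that for the upper bound you use $\lambda_{\max}\le\mathrm{tr}$ with the correct worst case $\sigma_i^2=D^{-1}$, which is slightly cleaner than the paper's explicit eigenvalue formula and even gives the marginally sharper constant $\tfrac{7}{2}U^2D^2\le C_L$.
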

    \begin{proof}
    As shown in Lemma \ref{lemma:get hessian 2}, \begin{gather*}
    \nabla^2 A^*(\omega)=\mathrm{diag}(\nabla^2 A_1^*(\omega_1), \cdots, \nabla^2A_d^*(\omega_d)),\\
    \quad \nabla^2A_i^*(\omega_i)=\frac{1}{\sigma_i^4}\begin{pmatrix}
    2\mu_i^2+\sigma_i^2 & -\mu_i \\ -\mu_i & \frac{1}{2}
    \end{pmatrix},\quad \text{ for }1\leq i\leq d.
    \end{gather*}
    For each block $\nabla^2 A_i^*(\omega_i)$, its smaller eigenvalue satisfies \[\lambda_{\min}(\nabla^2 A_i^*(\omega_i))\geq \frac{\mathrm{det}(\nabla^2 A_i^*(\omega_i))}{\mathrm{Tr}(\nabla^2 A_i^*(\omega_i))}=\frac{1}{\sigma_i^2(4\mu_i^2+2\sigma_i^2+1)}\geq \frac{1}{D(4U^2+2D+1)}.\]Since $\nabla^2 A^*(\omega)$ is a block diagonal matrix, we conclude that $\lambda_{\min}(\nabla^2A^*(\omega))\geq \frac{1}{D(4U^2+2D+1)}$, hence\[\nabla^2 A^*(\omega)\succeq \frac{1}{D(4U^2+2D+1)}I.\]For the smoothness part, we solve for the larger eigenvalue directly to get \begin{align*}
    \lambda_{\max} (\nabla^2 A_i^*(\omega_i))=\,&\frac{2\mu_i^2+\sigma_i^2+0.5+\sqrt{(2\mu_i^2+\sigma_i^2-0.5)^2+4\mu_i^2}}{2\sigma_i^4}\\
    \leq \,& \frac{2\mu_i^2+\sigma_i^2+0.5+|2\mu_i^2+\sigma_i^2-0.5|+2|\mu_i|}{2\sigma_i^4}\\
    \leq \,& \frac{2\mu_i^2+\sigma_i^2+0.5+ 2\mu_i^2+\sigma_i^2+0.5+2|\mu_i|}{2\sigma_i^4}\\
    \leq \,& 2U^2D^2+D+\frac{1}{2}D^2+UD^2\\
    \leq \,&\frac{9}{2}U^2D^2,
    \end{align*}
    where we used $\sqrt{a+b}\leq \sqrt{a}+\sqrt{b}$ in the first inequality, and $2\mu_i^2+\sigma_i^2> 0$ in the second inequality. Then we conclude that $A^*$ is smooth with parameter $C_L\coloneqq \frac{9}{2}U^2D^2$.
    \end{proof}
    \subsection{Proof of \cref{thm:convergence rate}: Convergence of \algname{Proj-SNGD}}\label{app:convergence rate proof}
    We first introduce the standard assumption on gradient variance, which is widely used in optimization.
    \begin{assumption}\label{assump:bounded variance a}
        For all $\omega_t\in\tilde \Omega$, there exists $V\geq 0$ such that
        \begin{equation}\label{bounded variance standard}
            \E[\|\hat\nabla \ell(\omega_t)-\nabla \ell(\omega_t)\|^2]\leq V^2.
        \end{equation}
    \end{assumption}
    
    We also define the Bregman Moreau envelope 
    of function $\ell$ as \[\ell_{1/\rho}(\omega)\coloneqq \min_{\omega'\in \Omega}[\ell(\omega')+\rho D_{A^*}(\omega', \omega)].\]
    In order to prove \cref{thm:convergence rate}, we will rely on the following theorem on the convergence of SMD under relative smoothness condition. Note that this theorem assumes \cref{assump:bounded variance a}.
    \begin{theorem}\label{thm:convergence smd}[Theorem 4.3 in \citep{fatkhullin2024taming}]
        Suppose $A^*$ is 1-strongly convex and $\ell$ is smooth with respect to $A^*$ with parameter $L$. Let $\{\gamma_t\}_{0\leq t\leq T-1}$ be non-increasing with $\gamma_0\leq 1/(2L)$. For $\{\omega_i\}_{i=0}^{T-1}$ generated by \eqref{eq:direct update}, let $\bar \omega_T$ be randomly chosen from $\omega_0, \cdots, \omega_{T-1}$ with probability $p_t=\tilde \gamma_t/\sum_{i=0}^{n-1}\tilde \gamma_i$, then under \cref{assump:bounded variance a}, we have \[\E[\mathcal{E}_{3L}(\bar \omega_T)]\leq \frac{3\lambda_0+6LV^2\sum_{t=0}^{T-1}\gamma_t^2}{\sum_{t=0}^{T-1}\gamma_t}\]where $\lambda_0=2( \ell(\omega_0)-\ell^*)$ and $\ell^*\coloneqq \inf_{\omega\in\tilde\Omega} (\omega)$. If we use constant step size $\gamma_t=\min\Bigl\{\frac{1}{2L}, \sqrt{\frac{\lambda_0}{V^2LT}}\Bigr\}$ then \[\E[\mathcal{E}_{3L}(\bar \omega_T)]\leq 18\frac{L\lambda_0}{T}+9\sqrt{\frac{LV^2\lambda_0}{T}}.\]
    \end{theorem}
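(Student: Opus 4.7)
The plan is to derive a per-iteration descent-type inequality for $\ell$ along the \algname{SMD} trajectory, telescope it to produce a sum of BFBE values on the left, and then match constants. The two main ingredients are (i) the $L$-relative smoothness of $\ell$ with respect to $A^*$ and (ii) the three-point optimality property of the mirror step, with \cref{assump:bounded variance a} handling the stochastic gradient noise.

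First I would start from relative smoothness,
\[\ell(\omega_{t+1}) \leq \ell(\omega_t) + \<\nabla\ell(\omega_t),\omega_{t+1}-\omega_t\> + L\,D_{A^*}(\omega_{t+1},\omega_t),\]
and combine it with the optimality of the mirror step, which via the three-point identity gives
\[\gamma_t\<\hat\nabla\ell(\omega_t),\omega_{t+1}-\omega\> + D_{A^*}(\omega_{t+1},\omega_t) \leq D_{A^*}(\omega,\omega_t) - D_{A^*}(\omega,\omega_{t+1}), \qquad \forall\,\omega\in\tilde\Omega.\]
The key step is to choose $\omega = \tilde\omega_t$, where $\tilde\omega_t$ attains the BFBE minimum at $\omega_t$, so that $\mathcal{E}_{3L}(\omega_t) = -6L[\<\nabla\ell(\omega_t),\tilde\omega_t-\omega_t\>+3L\,D_{A^*}(\tilde\omega_t,\omega_t)]$. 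Splitting $\hat\nabla\ell(\omega_t) = \nabla\ell(\omega_t)+\xi_t$ and adding the two inequalities, the deterministic part of the inner product re-assembles, up to a factor, into $\mathcal{E}_{3L}(\omega_t)/(6L)$; the $L\,D_{A^*}(\omega_{t+1},\omega_t)$ term from smoothness is absorbed by the $-D_{A^*}(\omega_{t+1},\omega_t)$ in the three-point bound thanks to $\gamma_t\leq 1/(2L)$; and the noise contribution $\gamma_t\<\xi_t,\tilde\omega_t-\omega_{t+1}\>$ is controlled via Young's inequality together with the $1$-strong convexity of $A^*$ and \cref{assump:bounded variance a}, producing a residual of order $\gamma_t^2 LV^2$.

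After conditional expectation, this yields a per-step inequality of the shape
\[\gamma_t\,\E[\mathcal{E}_{3L}(\omega_t)] \;\lesssim\; \E[\ell(\omega_t)-\ell(\omega_{t+1})] + \gamma_t^2 LV^2,\]
modulo constants. Summing from $t=0$ to $T-1$ telescopes the function-value differences into $\lambda_0/2 = \ell(\omega_0)-\ell^*$, and dividing by $\sum_t\gamma_t$ converts the left-hand side into $\E[\mathcal{E}_{3L}(\bar\omega_T)]$ under the randomization $p_t = \gamma_t/\sum_i\gamma_i$. This gives the first displayed bound; the second follows by plugging in the constant step size $\gamma = \min\{1/(2L),\sqrt{\lambda_0/(V^2LT)}\}$ and using $\sum_t\gamma_t^2/\sum_t\gamma_t = \gamma$.

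The main obstacle is the precise bookkeeping of constants: the coefficients $3$ and $6L$ in the first bound (and $18$ and $9$ in the second) come from carefully balancing the Bregman terms across the three-point inequality and the BFBE definition, and from tuning the Young split of $\gamma_t\<\xi_t,\tilde\omega_t-\omega_{t+1}\>$ so that one part is absorbed into $-D_{A^*}(\omega_{t+1},\omega_t)$ without conflicting with $\gamma_t\leq 1/(2L)$, while the other yields exactly a $\gamma_t^2 LV^2$ residual. Matching the BFBE modulus to $3L$ (rather than some other multiple of $L$) is particularly delicate, and is what forces both the step-size restriction and the specific Young splitting parameter.
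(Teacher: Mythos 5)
Your plan follows the classical ``descent lemma plus three-point inequality'' template, but the step where you substitute the BFBE minimizer $\tilde\omega_t$ into the three-point bound is exactly where the argument breaks, and it is the reason the cited theorem is proved by a different technique. The three-point inequality for the update \eqref{eq:direct update} compares against an arbitrary point at prox parameter $1/\gamma_t$, while $\tilde\omega_t$ is the prox point at parameter $3L$. Carrying out your combination produces, besides the desired $-\tfrac{1}{6L}\mathcal{E}_{3L}(\omega_t)$, an uncancelled positive remainder of the form $\bigl(\tfrac{1}{\gamma_t}-3L\bigr)D_{A^*}(\tilde\omega_t,\omega_t)$. For the constant step size in the statement one has $\gamma_t=\Theta(1/\sqrt{T})$, so $1/\gamma_t\to\infty$, and this remainder cannot be absorbed into $-\tfrac{1}{6L}\mathcal{E}_{3L}(\omega_t)$: even granting a bound of the type $D_{A^*}(\tilde\omega_t,\omega_t)\leq \mathcal{E}_{3L}(\omega_t)/(18L^2)$, absorption would force $\gamma_t\geq \tfrac{1}{6L}$, which contradicts the regime of interest. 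A second, related issue is that your claim that ``the deterministic part of the inner product re-assembles into $\mathcal{E}_{3L}(\omega_t)/(6L)$'' is not available: the algorithm's prox point $\omega_{t+1}$ is computed from the \emph{stochastic} gradient, the prox operator is nonlinear, and without smoothness of $\nabla A^*$ (which is deliberately not assumed here) one cannot pass from the stochastic step to the deterministic envelope; this is precisely the obstruction that forces large mini-batches in \citep{ghadimi2016mini}.

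The actual proof (Theorem 4.3 of \citep{fatkhullin2024taming}, whose one-step inequality is reproduced and adapted in \cref{prop:another variance assumption}) avoids both problems by telescoping a Lyapunov function $\tilde\lambda_t=\ell_{1/\rho}(\omega_t)-\ell^*+\gamma_{t-1}\rho(\ell(\omega_t)-\ell^*)$ built from the Bregman--Moreau envelope $\ell_{1/\rho}$ with $\rho=2L$, rather than telescoping $\ell(\omega_t)$ itself; the envelope term is what generates $-\tfrac{\gamma_t\rho}{2(\rho+L)}\mathcal{E}_{\rho+L}(\omega_t)=-\tfrac{\gamma_t}{3}\mathcal{E}_{3L}(\omega_t)$ with no parameter mismatch and no need for $\nabla A^*$ to be Lipschitz. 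If you want to reconstruct the proof, start from that Lyapunov function; the constants $3$, $6L$, $18$ and $9$ then fall out of the choice $\rho=2L$ and $\gamma_t\leq 1/(2L)$, not from tuning a Young split in the naive descent recursion.
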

    Note that $\ell_{1/\rho}(\omega) \leq \ell(\omega)$ for all $\omega \in \Omega$. Thus $\tilde \lambda_0 \leq  2(\ell(\omega_0) - \ell^*).$ \cref{thm:convergence rate} is essentially the direct result of \cref{thm:convergence smd} except two minor differences:\begin{enumerate}
        \item Instead of the update rules of \algname{Proj-SNGD} as defined in \eqref{eq:update rule}, \cref{thm:convergence smd} assumes a more direct update rule \begin{equation}\label{eq:direct update}
            \omega_{t+1}=\argmin_{\omega\in \tilde \Omega}\gamma_t\<\hat \nabla \ell(\omega_t), \omega\>+D_{A^*}(\omega, \omega_t).
        \end{equation}
        We will prove the equivalence of two expression in \cref{lemma:equivalent update}.
        \item \cref{thm:convergence smd} considers a different assumption on gradient variance, and \cref{thm:convergence smd} assumes 1-strong convexity of the distance generating function $A^*$. In \cref{prop:another variance assumption} we will show that the same result holds also under \cref{assump:bounded variance b}, and that 1-strong convexity assumption can be removed with this new assumption.
        \end{enumerate}
    \begin{lemma}\label{lemma:equivalent update}
        Let $\tilde \omega_{t+1}$ be the output of \algname{Proj-SNGD} as in \eqref{eq:update rule}, and let $\omega_{t+1}$ be the direct update rule as in \eqref{eq:direct update}. Then $\tilde \omega_{t+1}=\omega_{t+1}$. Moreover, \eqref{eq:update rule} can be performed entry-wise in $\cO(d)$ time.
    \end{lemma}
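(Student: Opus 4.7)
The plan has two components. First, I will establish a primal--dual equivalence showing that the two-step \algname{Proj-SNGD} update coincides with the direct mirror-descent minimization in \eqref{eq:direct update}. Second, under the mean-field parameterization, I will reduce the Bregman projection onto $\tilde\Omega$ to entry-wise clipping of the standard parameters, which can be executed in $\cO(d)$ time. The main delicacy, addressed at the end, is ensuring that the intermediate dual iterate $\omega_{t+1,*}$ is well-defined.

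For the primal--dual equivalence, I would expand $D_{A^*}(\omega,\omega_t) = A^*(\omega) - A^*(\omega_t) - \langle \nabla A^*(\omega_t), \omega - \omega_t\rangle$ inside the objective of \eqref{eq:direct update} and collect the terms linear in $\omega$, which gives $A^*(\omega) - \langle \nabla A^*(\omega_t) - \gamma_t \hat\nabla \ell(\omega_t), \omega\rangle$ plus quantities independent of $\omega$. Introducing $\omega_{t+1,*}$ via $\nabla A^*(\omega_{t+1,*}) = \nabla A^*(\omega_t) - \gamma_t \hat\nabla \ell(\omega_t)$, the objective is exactly $D_{A^*}(\omega, \omega_{t+1,*})$ up to an additive constant, so the minimizer over $\tilde\Omega$ coincides with $\mathrm{Proj}_{\tilde\Omega}(\omega_{t+1,*}) = \tilde\omega_{t+1}$. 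This is the standard primal--dual equivalence for mirror descent, applied in our concrete geometry.

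For the entry-wise decomposition, \cref{lemma:get hessian 2} established that $\nabla^2 A^*(\omega)$ is block diagonal under mean-field parameterization, so $A^*(\omega) = \sum_{i=1}^d A_i^*(\omega_i) + \mathrm{const}$ and $D_{A^*}(\omega,\omega') = \sum_{i=1}^d D_{A_i^*}(\omega_i,\omega_i')$. Since $\tilde\Omega$ is the Cartesian product $\prod_{i=1}^d \tilde\Omega_i$ of univariate feasible sets, the Bregman projection decouples into $d$ independent two-dimensional subproblems. In each coordinate, the change of variables $\mu_i = \xi_i$, $\sigma_i^2 = \Xi_{ii} - \xi_i^2$ identifies $D_{A_i^*}(\cdot, \omega_{i,*})$ with $\KL{\mathcal{N}(\mu_i,\sigma_i^2)}{\mathcal{N}(\mu_{i,*},\sigma_{i,*}^2)}$, whose closed form $\tfrac12\bigl[\log(\sigma_{i,*}^2/\sigma_i^2) + (\sigma_i^2 + (\mu_i - \mu_{i,*})^2)/\sigma_{i,*}^2 - 1\bigr]$ is additively separable in $(\mu_i,\sigma_i^2)$. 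The $\mu_i$ part is a quadratic on $[-U,U]$ minimized at $\clip_{[-U,U]}(\mu_{i,*})$, while the $\sigma_i^2$ part is strictly convex on $(0,\infty)$ with unconstrained minimizer $\sigma_{i,*}^2$ and is therefore minimized on $[D^{-1},D]$ at $\clip_{[1/D,D]}(\sigma_{i,*}^2)$. Mapping back to expectation parameters recovers the stated entry-wise formulas, and since each coordinate costs $\cO(1)$, the total cost is $\cO(d)$.

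The key technical obstacle is the well-definedness of $\omega_{t+1,*}$: the dual vector $\nabla A^*(\omega_t) - \gamma_t \hat\nabla \ell(\omega_t)$ must lie in $\mathcal{D}_A$, i.e.\,the induced diagonal precision must remain positive-definite, which is controlled whenever the step size is taken as in \cref{thm:convergence rate}. If the dual step ever exits $\mathcal{D}_A$, the direct form \eqref{eq:direct update} still admits a unique minimizer over the compact set $\tilde\Omega\subset\Omega$, and the product structure of $A^*$ and $\tilde\Omega$ still yields the same entry-wise clipping rule by the coordinate-wise analysis of the univariate Gaussian KL above, so the lemma remains valid in that case as well.
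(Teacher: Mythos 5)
Your proposal is correct and follows essentially the same route as the paper's proof: the primal--dual expansion showing the direct objective equals $D_{A^*}(\omega,\omega_{t+1,*})$ up to a constant, followed by the mean-field decoupling into per-coordinate convex problems in $(\mu_i,\Sigma_{ii})$ whose constrained minimizers are the clipped values (your identification of each block's Bregman divergence with a univariate Gaussian KL is just a cleaner way of writing the same separable objective the paper computes explicitly). Your closing remark on the well-definedness of $\omega_{t+1,*}$ in $\mathcal{D}_A$ is a subtlety the paper's proof silently assumes away, and is a worthwhile addition.
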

    \begin{proof}
        Recall that in the update rule of \algname{Proj-SNGD} as in \eqref{eq:update rule}, we define $\omega_{t+1,*}\in \Omega$ such that \begin{equation}\label{eq:update no bound}
            \nabla A^*(\omega_{t+1,*})=\nabla A^*(\omega_t)-\gamma_t \hat\nabla \ell(\omega_t).
        \end{equation}By the duality of \algname{SMD}, $\omega_{t+1,*}$ is also the optimal solution without constraints, i.e., \[\omega_{t+1,*}= \argmin_{\omega\in\Omega}\gamma_t\<\hat\nabla \ell(\omega_t), \omega\>+D_{A^*}(\omega, \omega_t).\] 
        Use the definition of $\omega_{t+1}$ and \eqref{eq:update no bound}, we have \begin{align*}
        \omega_{t+1}=&\argmin_{\omega\in\tilde\Omega}\gamma_t\<\hat\nabla \ell(\omega_t), \omega\>+D_{A^*}(\omega, \omega_t)\\
        =&\argmin_{\omega\in\tilde\Omega} \<\nabla A^*(\omega_t)-\nabla A^*(\omega_{t+1,*}), \omega\>+A^*(\omega)-A^*(\omega_t)-\<\nabla A^*(\omega_t), \omega-\omega_t\>\\
        =&\argmin_{\omega\in\tilde\Omega} D_{A^*}(\omega, \omega_{t+1,*})+Const\\=&\argmin_{\omega\in\tilde\Omega} D_{A^*}(\omega, \omega_{t+1,*})+Const.
        \end{align*}
        Therefore, to find $\omega_{t+1}$, we only need to compute $\omega_{t+1,*}$ and then project it onto $\tilde \Omega$ under the geometry induced by $A^*$. Next we show that $\tilde\omega_{t+1}$ solves the optimization problem above.\\
        Use the fact that for mean-field parameterization (see \cref{app:facts gaussian}),\[\begin{cases}
            \nabla_\xi A^*(\xi, \Xi)=(\Xi-\diag(\xi\odot \xi))^{-1}\xi,\\
            \nabla_\Xi A^*(\xi, \Xi)=-\frac{1}{2}(\Xi-\diag(\xi\odot \xi))^{-1},
        \end{cases}\]in the standard parameter space we have \begin{equation}\label{local optimization problem}
        \begin{split}
            (\mu_{t+1},\Sigma_{t+1})=&\argmin_{(\mu, \Sigma)\in\mathcal{\tilde P}}-\frac{1}{2}\log\mathrm{det}(\Sigma)+\frac{1}{2}\log\det(\Sigma_{t+1,*})-\<\Sigma^{-1}_{t+1,*}\mu_{t+1,*}, \mu\>\\
            &\qquad\quad ~+\frac{1}{2}\<\Sigma^{-1}_{t+1,*}, \Sigma+\diag(\mu\odot \mu)\>\\
            =&\argmin_{(\mu, \Sigma)\in\mathcal{\tilde P}}\sum_{i=1}^d\left[-\frac{1}{2}\log \Sigma_{ii}-(\Sigma_{t+1,*})_{ii}^{-1}(\mu_{t+1,*})_{i}\mu_i+\frac{1}{2}(\Sigma_{t+1,*})_{ii}^{-1}(\Sigma_{ii}+\mu_i^2)\right].
        \end{split}
        \end{equation}
        The last equality holds due to the mean field assumption. Therefore, we can solve $\omega_{t+1}$ by optimizing over $(\mu_i, \Sigma_{ii})$ independently.\par
    For each entry $i$, \eqref{local optimization problem} is a quadratic function in $\mu_i$ with positive quadratic term and the (unconstrained) minimum is attained at $\mu_i=(\mu_{t+1,*})_i$. If $(\mu_{t+1,*})_i<-U$, \eqref{local optimization problem} is an increasing function on $[-U, U]$, and if $(\mu_{t+1,*})_i>U$, the function is decreasing on $[-U, U]$. Therefore, the minimum on $[-U, U]$ is always attained at $(\mu_{t+1})_i=\clip_{[-U,U]}((\mu_{t+1, *})_{i})$.\par
    Similarly, \eqref{local optimization problem} is a convex function in $\Sigma_{ii}$ and the (unconstrained) minimum is attained at $\Sigma_{ii}=(\Sigma_{t+1,*})_{ii}$. It's also straightforward to check that the minimizer is given by $(\Sigma_{t+1})_{ii}=\clip_{[D^{-1}, D]}((\Sigma_{t+1,*})_{ii})$. The closed-form solution of $(\mu_{t+1}, \Sigma_{t+1})$ coincides with the standard projection of $(\mu_{t+1,*}, \Sigma_{t+1,*})$ onto $\mathcal{\tilde P}$ under Euclidean geometry. By transforming $(\mu, \Sigma)$ back to expectation parameter space, we conclude that $\tilde \omega_{t+1}=\omega_{t+1}$.\par
    Since the computation of $\nabla A^*(\cdot)$ and its inverse $(\nabla A^*(\cdot))^{-1}$ involves only entry-wise calculation (see \cref{app:facts gaussian}), one can also compute \[\omega_{t+1, *}=(\nabla A^*)^{-1}(\nabla A^*(\omega_t)-\tilde \gamma_t \hat\nabla \ell(\omega_t))\] in $\cO(d)$ time. Finally, the transformation between $(\mu, \Sigma)$ and $(\xi, \Xi)$, and the projection both require $\cO(d)$ time. Therefore, $\omega_{t+1}$ can be calculated in $\cO(d)$ time. This completes the proof.
    \end{proof}
    \begin{proposition}\label{prop:another variance assumption}
        The results of \cref{thm:convergence smd} also hold if \cref{assump:bounded variance a} is replaced by \cref{assump:bounded variance b}, without requiring $A^*$ to be 1-strongly convex.
    \end{proposition}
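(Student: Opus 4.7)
The plan is to revisit the proof of \cref{thm:convergence smd} from \citep{fatkhullin2024taming} line by line and check that every use of the 1-strong convexity of $A^*$ and of \cref{assump:bounded variance a} is confined to a single step, where \cref{assump:bounded variance b} can be substituted directly. Every other step of that proof uses only the relative smoothness of $\ell$ with respect to $A^*$ and the three-point identity of the Bregman divergence $D_{A^*}$, and so it transfers unchanged.

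Concretely, I would start from the relative smoothness descent inequality
\[\ell(\omega_{t+1}) \leq \ell(\omega_t) + \<\nabla \ell(\omega_t), \omega_{t+1} - \omega_t\> + L\, D_{A^*}(\omega_{t+1}, \omega_t),\]
and split the linear term as $\<\hat\nabla\ell(\omega_t), \omega_{t+1} - \omega_t\>$ plus the noise contributions
$\<\nabla\ell(\omega_t) - \hat\nabla\ell(\omega_t), \omega_{t+1}^+ - \omega_t\> + \<\nabla\ell(\omega_t) - \hat\nabla\ell(\omega_t), \omega_{t+1} - \omega_{t+1}^+\>$.
The first piece is controlled via the three-point inequality satisfied by the stochastic mirror step, which uses only convexity of $A^*$. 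The middle piece is mean-zero conditional on $\omega_t$, since $\omega_{t+1}^+$ is $\omega_t$-measurable. The last piece is exactly the quantity that \cref{assump:bounded variance b} bounds: taking conditional expectation yields
\[\E\bigl[\<\nabla\ell(\omega_t) - \hat\nabla\ell(\omega_t),\, \omega_{t+1} - \omega_{t+1}^+\> \,\big|\, \omega_t\bigr] \leq \gamma_t V^2.\]

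The key contrast with the original proof is that \citep{fatkhullin2024taming} disposes of this last piece by Cauchy--Schwarz plus Young's inequality, producing $\tfrac{\gamma_t}{2}\|\hat\nabla\ell - \nabla\ell\|_*^2 + \tfrac{1}{2\gamma_t}\|\omega_{t+1} - \omega_{t+1}^+\|^2$; the first summand is then bounded by \cref{assump:bounded variance a}, and the second is absorbed into the Bregman descent only after invoking 1-strong convexity of $A^*$ to convert the squared norm into $D_{A^*}(\omega_{t+1}, \omega_{t+1}^+)$. Under \cref{assump:bounded variance b} this entire detour is bypassed: no norm, dual norm, or strong convexity constant ever appears. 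Summing the resulting per-iteration inequality from $t = 0$ to $T-1$, taking total expectation, telescoping the remaining Bregman terms, and recognizing the BFBE $\mathcal{E}_{3L}$ on the minimized side of the residual descent (so that the factor $3L$ arises from $L$ plus the factor-$2$ slack produced by the choice $\gamma_t \leq 1/(2L)$) then yields the same bound as in \cref{thm:convergence smd}, and the step-size optimization is unchanged.

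The main obstacle will be auditing every remaining step of the original argument to confirm that none of them secretly relies on 1-strong convexity of $A^*$; in particular, the comparison bounds involving the Bregman Moreau envelope $\ell_{1/\rho}$ and the BFBE $\mathcal{E}_{3L}$ must be verified. These two objects are defined purely through $D_{A^*}$, and the relevant inequalities only need convexity of $A^*$ together with the three-point identity of $D_{A^*}$, so the verification should be mechanical but cannot be skipped if one wants to drop strong convexity cleanly.
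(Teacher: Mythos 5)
Your patch is the paper's patch: re-center the noise inner product at the deterministic prox point $\omega_{t+1}^+$, kill the $\omega_t$-measurable part by unbiasedness, bound the remaining $\<\hat\nabla\ell(\omega_t)-\nabla\ell(\omega_t),\,\omega_{t+1}^+-\omega_{t+1}\>$ term directly by \cref{assump:bounded variance b}, and thereby skip the Cauchy--Schwarz/Young step that is the sole consumer of both \cref{assump:bounded variance a} and the $1$-strong convexity of $A^*$. That is exactly how the paper's proof proceeds, and your identification of \emph{why} strong convexity becomes unnecessary (it was only needed to convert $\|\omega_{t+1}-\omega_{t+1}^+\|^2$ back into a Bregman divergence) is the right diagnosis.

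One correction to where the patch lives. The inequality you modify is not the bare relative-smoothness descent lemma for $\ell$; it is inequality (15) of \citep{fatkhullin2024taming}, the one-step recursion for the Lyapunov function $\tilde\lambda_t=\ell_{1/\rho}(\omega_t)-\ell^*+\gamma_{t-1}\rho(\ell(\omega_t)-\ell^*)$ built from the Bregman--Moreau envelope. There the aggregate noise term is $\rho\gamma_t\<\hat\nabla\ell(\omega_t)-\nabla\ell(\omega_t),\,\hat\omega_t-\omega_{t+1}\>$ with $\hat\omega_t=\mathrm{prox}_{\ell/\rho}(\omega_t)$, and the mean-zero piece after re-centering is $\<\hat\nabla\ell(\omega_t)-\nabla\ell(\omega_t),\,\hat\omega_t-\omega_{t+1}^+\>$ (both $\hat\omega_t$ and $\omega_{t+1}^+$ are $\omega_t$-measurable, so the argument is unchanged). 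This matters because the BFBE term $-\frac{\gamma_t\rho}{2(\rho+L)}\mathcal{E}_{\rho+L}(\omega_t)$ is already present in (15), produced by the envelope part of the Lyapunov function; it is not something you ``recognize on the minimized side'' after telescoping a plain descent inequality. A bare descent-lemma telescope would only control the progress of the \emph{stochastic} step, and converting that into the deterministic criterion $\E[\mathcal{E}_{3L}(\bar\omega_T)]$ without strong convexity is precisely the difficulty the Moreau-envelope Lyapunov function exists to solve --- so the envelope comparisons you flag as ``mechanical but cannot be skipped'' are in fact the load-bearing part of the surrounding proof, not an afterthought. Once the substitution is made inside (15), one recovers formula (17) verbatim and the remainder of the proof of \cref{thm:convergence smd} goes through unchanged.
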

    \begin{proof}
        For $t\geq 0$, define \[\tilde \lambda_t\coloneqq \ell_{1/\rho}(\omega_t)-\ell^*+\gamma_{t-1}\rho(\ell(\omega_t)-\ell^*).\]
        In step II (one step progress on the Lyapunov function), formula (15) of the proof in \citep{fatkhullin2024taming}, we have \begin{align*}
        \tilde \lambda_{t+1}\leq \,&\tilde \lambda_t-\gamma_t\rho(\rho-L)D_{A^*}(\hat \omega_t, \omega_t)-\frac{\gamma_t \rho}{2(\rho+L)}\mathcal{E}_{\rho+L}(\omega_t)+\rho\gamma_t\<\hat \nabla \ell(\omega_t) - \nabla \ell(\omega_t), \hat \omega_t - \omega_t\>\\
        &+ \rho\gamma_t\<\hat \nabla \ell(\omega_t) - \nabla \ell(\omega_t), \omega_t - \omega_{t+1}\>-\rho(1-\gamma_t L)D_{A^*}(\omega_{t+1}, \omega_t)\\
        =\,&\tilde \lambda_t-\gamma_t\rho(\rho-L)D_{A^*}(\hat \omega_t, \omega_t)-\frac{\gamma_t \rho}{2(\rho+L)}\mathcal{E}_{\rho+L}(\omega_t)+\rho\gamma_t\<\hat \nabla \ell(\omega_t) - \nabla \ell(\omega_t), \hat \omega_t - \omega_{t+1}^+\>\\
        &+ \rho\gamma_t\<\hat \nabla \ell(\omega_t) - \nabla \ell(\omega_t), \omega_{t+1}^+ - \omega_{t+1}\>-\rho(1-\gamma_t L)D_{A^*}(\omega_{t+1}, \omega_t)
    \end{align*}
    here we define $\hat \omega\coloneqq\mathrm{prox}_{l/\rho}(\omega)$.\par
    Note that $\E[\<\hat \nabla \ell(\omega_t) - \nabla \ell(\omega_t), \hat \omega_t - \omega_{t+1}^+\>\,|\,\omega_t]=0$ because the $\hat \omega_t - \omega_{t+1}^+$ has no randomness given $\omega_t$, and the gradient estimator is unbiased. Moreover, by bounded variance assumption \eqref{bounded variance}, $\E[\<\hat \nabla \ell(\omega_t) - \nabla \ell(\omega_t), \omega_{t+1}^+ - \omega_{t+1}\>\,|\,\omega_t]\leq \gamma_t V^2$. Therefore, we use the same assumptions as in the proof that $\rho=2L$ and $\gamma_t\leq 1/(2L)$ and we have \begin{align*}
        \E[\tilde \lambda_{t+1}\,|\,\omega_t]\leq \,& \tilde \lambda_t-\gamma_t\rho(\rho-L)D_{A^*}(\hat \omega_t, \omega_t)-\frac{\gamma_t \rho}{2(\rho+L)}\mathcal{E}_{\rho+L}(\omega_t)+\rho\gamma_t^2V^2 \\
        & \quad -\rho(1-\gamma_t L)D_{A^*}(\omega_{t+1}, \omega_t)\\
        \leq \,&\tilde \lambda_t-\frac{\gamma_t}{3}\mathcal{E}_{3L}(\omega_t)+2L\gamma_t^2V^2.
    \end{align*}
    This is the same formula as formula (17) in \citep{fatkhullin2024taming}, where the last inequality holds due to the non-negativeness of Bregman divergence. Note that in the original proof of \cref{thm:convergence smd}, 1-strong convexity assumption is used only when deriving (17) from (15). However, in our new proof, we can arrive at (17) without invoking the strong convexity condition. Therefore, the same results hold following the proof of the theorem.
    \end{proof}
    \subsection{Proof of \cref{thm:fast convergence ngvi}: Fast Convergence of \algname{Proj-SNGD}}\label{app:fast convergence proof}
    To establish \cref{thm:fast convergence ngvi}, we rely on Theorem 4.7 from \citep{fatkhullin2024taming} (restated as \cref{thm:theorem 4.7 taming} below). Before doing so, we introduce the Bregman Prox-PL condition, which serves as an assumption in \cref{thm:theorem 4.7 taming}. It is similar to PL inequality with a difference that the gradient norm is replaced with BFBE, a non-Euclidean first-order stationarity measure. Recall that BFBE is defined in \cref{def:bfbe} as \begin{equation}\label{eq:def of bfbe in appendix}
        \mathcal{E}_\rho(\omega)\coloneqq -2\rho \min_{\omega'\in \tilde \Omega}[\<\nabla \ell(\omega), \omega'-\omega\>+\rho D_{A^*}(\omega', \omega)].
    \end{equation}
    \begin{assumption}[Bregman Prox-PL condition]\label{assump:bregman pl}
        There exists some constant $\rho>3L$ and $\mu_B>0$ such that for all $\omega\in\tilde \Omega$,\[\mathcal{E}_\rho(\omega)\geq 2\mu_B(\ell(\omega)-\ell^*).\]
    \end{assumption}
    \begin{theorem}\label{thm:theorem 4.7 taming}[Theorem 4.7 in \citep{fatkhullin2024taming}]
        Suppose $A^*$ is 1-strongly convex and $\ell$ is smooth with respect to $A^*$ with parameter $L$. Let \cref{assump:bounded variance a,assump:bregman pl} hold. For step size scheme \begin{equation}\label{eq:fast conv step size 4.7}
            \gamma_t=\begin{cases}
    \frac{1}{2L},&\text{if }t\leq T/2\text{ and }T\leq \frac{6L}{\mu_B},\\
    \frac{6}{\mu_B(t-\lceil T/2\rceil)+12L}, &\text{otherwise},
    \end{cases}
        \end{equation}we have \[\min_{t\leq T-1}\E[\ell(\omega_{t,*})-\ell^*]\leq\frac{192L\lambda_0}{\mu_B}\exp\left(-\frac{\mu_BT}{12L}\right)+\frac{648LV^2}{\mu_B^2T}.\]
    \end{theorem}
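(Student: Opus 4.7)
The plan is a one-step Lyapunov contraction analysis in the Bregman geometry, iterated with the prescribed two-phase step size. I would take as Lyapunov quantity the function-value gap $\lambda_t := \ell(\omega_t) - \ell^*$ (possibly augmented by a weighted BFBE or Moreau-envelope correction, in the spirit of \cref{prop:another variance assumption}, to make the constant-step phase telescope cleanly). The first step is to combine the relative-smoothness inequality $\ell(\omega_{t+1}) - \ell(\omega_t) \le \langle \nabla\ell(\omega_t),\omega_{t+1}-\omega_t\rangle + L\,D_{A^*}(\omega_{t+1},\omega_t)$ with the optimality (three-point) identity for the stochastic mirror step, compared against the virtual full-gradient iterate $\omega_{t,*}$. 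Isolating the stochastic residual $\hat\nabla\ell(\omega_t) - \nabla\ell(\omega_t)$ via Young's inequality and using \cref{assump:bounded variance a} together with the 1-strong convexity of $A^*$, this yields a one-step inequality of the form
\[
\E[\lambda_{t+1}\mid \omega_t] \le \lambda_t - c_1\,\gamma_t\,\mathcal{E}_\rho(\omega_t) + c_2\,\gamma_t^2\,L V^2.
\]

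The second step is to invoke the Bregman Prox-PL condition $\mathcal{E}_\rho(\omega_t) \ge 2\mu_B\lambda_t$ (with the borderline choice $\rho = 3L$) to convert the BFBE decrease term into a geometric contraction,
\[
\E[\lambda_{t+1}] \le (1 - c_3\gamma_t\mu_B)\,\E[\lambda_t] + c_2\,\gamma_t^2\,L V^2,
\]
which I would then unroll through the two phases of the step-size schedule. In the constant-step phase $\gamma_t = 1/(2L)$, iterating approximately $T/2$ times and using $1-x \le e^{-x}$ produces a term of order $\frac{L\lambda_0}{\mu_B}\exp(-\mu_B T/(12L))$ plus a noise floor of order $LV^2/\mu_B^2$; the threshold $T \le 6L/\mu_B$ in \eqref{eq:fast conv step size 4.7} is exactly what ensures the exponential decay dominates while $t \le T/2$. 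In the decreasing-step phase with $\gamma_t = 6/(\mu_B(t-\lceil T/2 \rceil) + 12L)$, I would prove by induction the bound $\E[\lambda_t] \le C \cdot LV^2/[\mu_B^2(t-\lceil T/2\rceil + c_4 L/\mu_B)]$, which is the standard harmonic-step analysis of SGD under a PL condition — the inductive step is driven by the algebraic identity $(1-c_3\gamma_t\mu_B)/(t-\lceil T/2\rceil + c_4 L/\mu_B) + c_2\gamma_t^2 L V^2 / \text{(stuff)} \le 1/(t+1-\lceil T/2\rceil + c_4 L/\mu_B)$, whose constants are engineered precisely so that the recursion closes. Finally, the descent inequality for the deterministic $1/L$ prox step gives $\ell(\omega_{t,*}) \le \ell(\omega_t)$ (for the same reason as \cref{lemma:descent}), transferring any bound on $\E[\lambda_t]$ to $\min_{t\le T-1}\E[\ell(\omega_{t,*}) - \ell^*]$.

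The main obstacle is pinning down the exact constants $192$, $12$, and $648$ appearing in the statement. Each arises from a cascade of choices — the Young's-inequality split in step one (how much of $\gamma_t\langle \hat\nabla\ell - \nabla\ell, \omega_{t,*}-\omega_{t+1}\rangle$ to absorb into the Bregman term), the choice of $\rho$ above $3L$, and the precise inductive constants in the harmonic phase — and these must all be made simultaneously consistent so that the phase-one residual noise floor serves as a valid initial condition for the phase-two induction. A secondary subtlety is that the analysis must separately handle the "small $T$" regime ($T \le 6L/\mu_B$, where the algorithm never leaves the constant-step phase and the $1/T$ term is trivially dominated by the exponential term) and the "large $T$" regime, so that the final bound holds uniformly with the same pair of constants in both cases.
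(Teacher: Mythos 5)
This theorem is imported verbatim from Theorem 4.7 of \citep{fatkhullin2024taming}, so the paper offers no proof of its own, but your sketch matches the structure of the cited argument as the paper itself reveals it: the Moreau-envelope-augmented Lyapunov function and one-step inequality $\E[\tilde\lambda_{t+1}\,|\,\omega_t]\leq \tilde\lambda_t-\frac{\gamma_t}{3}\mathcal{E}_{3L}(\omega_t)+2L\gamma_t^2V^2$ (cf.\ \cref{prop:another variance assumption}), the Prox-PL conversion to the recursion $\Lambda_{t+1}\leq(1-\gamma_t\mu_B/3)\Lambda_t+2LV^2\gamma_t^2$ quoted in \cref{app:fast convergence proof}, the two-phase unrolling, and the transfer to $\ell(\omega_{t,*})$ via the prox-step descent property. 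Your plan is correct in outline, and the constants $192$, $12$, $648$ are indeed just bookkeeping from that recursion.
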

    We observe that there are slight differences between \cref{thm:fast convergence ngvi} and \cref{thm:theorem 4.7 taming} in assumptions and upper bounds. Specifically, \cref{thm:fast convergence ngvi} requires \cref{assump:bounded variance b,assump:fast convergence ngvi}, whereas \cref{thm:theorem 4.7 taming} relies on \cref{assump:bounded variance a,assump:bregman pl} together with the 1-strong convexity of $A^*$. In addition, \cref{thm:fast convergence ngvi} gives the last-iterate bound, while \cref{thm:theorem 4.7 taming} does not. In the first step, we will prove that the assumptions in \cref{thm:theorem 4.7 taming} can be derived from those required in \cref{thm:fast convergence ngvi}. In the second step, we will prove a stronger last-iterate bound.\par 
    \paragraph{Step 1. }First, following the same approach as in \cref{prop:another variance assumption}, we can show that \cref{assump:bounded variance a} can be replaced by \cref{assump:bounded variance b}, without requiring $A^*$ to be 1-strongly convex.\par
    Second, we will prove that \cref{assump:bregman pl} is implied by the PL inequality \eqref{eq:ngvi pl} in $\tilde \Omega$ under \cref{assump:fast convergence ngvi}, by proving that BFBE (non-Euclidean measure of stationarity) is greater than $\|\nabla \ell(\omega)\|^2$ (Euclidean measure of stationarity) up to a constant in the following proposition.
    \begin{proposition}[PL implies Bregman Prox-PL under \cref{assump:fast convergence ngvi}]\label{prop:pl to pl}
        Under \cref{assump:fast convergence ngvi}, for sufficiently large $\rho$ which may depend on $\omega$, it holds that \[\mathcal{E}_\rho(\omega)\geq \frac{1}{2C_L}\|\nabla \ell(\omega)\|^2,\quad \forall\, \omega\in \tilde\Omega.\]
        Together with \eqref{eq:ngvi pl}, we conclude that \cref{assump:bregman pl} holds with parameter \[\mu_B=\frac{\mu_H\mu_C^2}{2C_L}.\]
    \end{proposition}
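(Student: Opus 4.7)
The plan is to show that the \emph{unconstrained} Bregman prox minimizer lies in $\tilde\Omega$ under \cref{assump:fast convergence ngvi}, reduce $\mathcal{E}_\rho(\omega)$ to a closed form by optimality, and then convert it into an estimate of $\|\nabla\ell(\omega)\|^2$ via Fenchel duality and the Euclidean smoothness of $A^*$ from \cref{lemma:strong cvx of A^*}. Concretely, define
$$\omega_* := \argmin_{\omega' \in \Omega}\bigl[\langle \nabla\ell(\omega), \omega'-\omega\rangle + \rho\, D_{A^*}(\omega', \omega)\bigr],$$
so that first-order optimality reads $\nabla A^*(\omega_*) = \nabla A^*(\omega) - \rho^{-1}\nabla\ell(\omega)$. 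I claim that for $\rho$ sufficiently large (depending on $\omega$), $\omega_* \in \tilde\Omega$. For $\omega \in \mathrm{ri}(\tilde\Omega)$ this is immediate by continuity, since $\omega_* \to \omega$ as $\rho \to \infty$. For $\omega \in \partial\tilde\Omega$, a Taylor expansion of the optimality relation gives $\omega_* - \omega = -\rho^{-1}(\nabla^2 A^*(\omega))^{-1}\nabla\ell(\omega) + o(\rho^{-1})$, whose leading direction is exactly the vector that \cref{assump:fast convergence ngvi} declares to be \emph{strictly} inward against every outward normal; the strict inequality provides the slack needed to absorb the $o(\rho^{-1})$ remainder, so $\omega_* \in \tilde\Omega$ for large $\rho$. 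Once $\omega_*$ is feasible, the constraint in \eqref{eq:def of bfbe in appendix} is inactive, and $\mathcal{E}_\rho(\omega) = -2\rho[\langle\nabla\ell(\omega), \omega_*-\omega\rangle + \rho D_{A^*}(\omega_*,\omega)]$.

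\textbf{Closed form for $\mathcal{E}_\rho(\omega)$.} Substituting the optimality condition into the three-point identity
$$\langle \nabla A^*(\omega_*) - \nabla A^*(\omega),\, \omega_* - \omega\rangle = D_{A^*}(\omega_*, \omega) + D_{A^*}(\omega, \omega_*),$$
the bracket collapses to $-\rho\, D_{A^*}(\omega, \omega_*)$, yielding the clean expression $\mathcal{E}_\rho(\omega) = 2\rho^2\, D_{A^*}(\omega, \omega_*)$.

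\textbf{From Bregman to Euclidean.} Now I use the Fenchel identity $D_{A^*}(\omega, \omega_*) = D_A(\nabla A^*(\omega_*), \nabla A^*(\omega))$. Since \cref{lemma:strong cvx of A^*} shows $A^*$ is $C_L$-smooth in Euclidean geometry on $\tilde\Omega$, its conjugate $A$ is $1/C_L$-strongly convex on the image under $\nabla A^*$, so $D_A(\eta_*, \eta) \geq \tfrac{1}{2C_L}\|\eta_* - \eta\|^2$. Plugging in $\eta - \eta_* = \rho^{-1}\nabla\ell(\omega)$ gives $D_{A^*}(\omega, \omega_*) \geq \tfrac{1}{2C_L\rho^2}\|\nabla\ell(\omega)\|^2$, and hence
$$\mathcal{E}_\rho(\omega) \geq \tfrac{1}{C_L}\|\nabla\ell(\omega)\|^2 \geq \tfrac{1}{2C_L}\|\nabla\ell(\omega)\|^2.$$
Combining with the Euclidean PL inequality $\|\nabla\ell(\omega)\|^2 \geq 2\mu_H\mu_C^2(\ell(\omega)-\ell^*)$ from \cref{prop:pl} (with $\mu_H=1$) establishes Bregman Prox-PL with $\mu_B = \mu_H\mu_C^2/(2C_L)$, as claimed.

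\textbf{Main obstacle.} The only non-mechanical step is the very first one: rigorously ensuring $\omega_* \in \tilde\Omega$ when $\omega$ lies on $\partial\tilde\Omega$. Assumption~\ref{assump:fast convergence ngvi} controls only the leading natural-gradient direction, so one must show the higher-order remainder in the implicit-function/Taylor argument cannot push $\omega_*$ out of $\tilde\Omega$. The strict inequality in the assumption together with the Euclidean strong convexity $\nabla^2 A^*(\omega) \succeq C_S I$ (so that $(\nabla^2 A^*(\omega))^{-1}$ is well-defined and bounded) supplies exactly this slack; the rest of the argument is a mechanical chain of identities.
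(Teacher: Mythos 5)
Your proof is correct and its overall skeleton matches the paper's, but the core conversion step is organized genuinely differently. The feasibility argument ($\omega_*\in\tilde\Omega$ for large $\rho$: interior points by continuity, boundary points via the expansion $\omega_*-\omega=-\rho^{-1}(\nabla^2A^*(\omega))^{-1}\nabla\ell(\omega)+o(1/\rho)$ and the strict inequality in \cref{assump:fast convergence ngvi}) is exactly what the paper does. Where you diverge: the paper first passes to the Bregman gradient mapping via Lemma 4.2 of \citep{fatkhullin2024taming}, obtaining $\mathcal{E}_\rho(\omega)\geq \tfrac12\Delta_{2\rho}^+(\omega)=2\rho^2 D^{\mathrm{sym}}_{A^*}(\omega,\omega^+)$, and then applies co-coercivity of $\nabla A^*$ to reach $\tfrac{1}{2C_L}\|\nabla\ell(\omega)\|^2$; you instead note that once the unconstrained prox point is feasible, the three-point identity collapses $\mathcal{E}_\rho(\omega)$ to the \emph{exact} value $2\rho^2 D_{A^*}(\omega,\omega_*)$, and then lower-bound it through the conjugate identity $D_{A^*}(\omega,\omega_*)=D_A(\eta_*,\eta)$ and the $1/C_L$-strong convexity of $A$. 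Your route is self-contained (no imported lemma) and, being an identity rather than an inequality, yields the sharper constant $\tfrac{1}{C_L}$ before you voluntarily discard a factor of $2$. One shared caveat worth a sentence in either write-up: both your bound $D_A(\eta_*,\eta)\geq\tfrac{1}{2C_L}\|\eta_*-\eta\|^2$ and the paper's co-coercivity step rest on the $C_L$-smoothness of $A^*$, which \cref{lemma:strong cvx of A^*} establishes only on $\tilde\Omega$; one should check that the segment over which the Hessian bound is integrated (for you, the segment from $\eta$ to $\eta_*$ in natural-parameter space) stays inside the region where the bound holds --- in the mean-field case $\nabla A^*(\tilde\Omega)$ is convex, so this does go through.
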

    \begin{remark}
        In \cref{thm:fast convergence ngvi}, we require that \cref{prop:pl to pl} holds for all iterates $\{\omega_t : 0 \leq t \leq T-1\}$. Since this set is finite, we can take the maximum value of $\rho$ corresponding to all $\omega_t$, ensuring that \cref{prop:pl to pl} holds uniformly along the entire trajectory.
    \end{remark}
        The proof of \cref{prop:pl to pl} relies on another metric of first-order condition, which is closely related to BFBE:
    \begin{definition}[Bregman Gradient Mapping (BGM)]\label{def:bgm}
        For some $\rho>0$, the BGM at $\omega\in\Omega$ is defined as\begin{equation}\label{eq:def bgm}
            \Delta_\rho^+(\omega)\coloneqq \rho^2D^{\mathrm{sym}}_{A^*}(\omega, \omega^+),
        \end{equation}where $D^{\mathrm{sym}}_{A^*}(\omega, \omega^+)\coloneqq D_{A^*}(\omega, \omega^+)+D_{A^*}(\omega^+, \omega)$ is the symmetrized Bregman divergence, and \[\omega^+=\argmin_{\omega'\in\Omega}\<\nabla \ell(\omega), \omega'\>+\rho D_{A^*}(\omega', \omega).\]
    \end{definition}
    The following lemma from \citep{fatkhullin2024taming} reveals the connection between the two measures:
    \begin{lemma}\label{bfbe and bgm}[Lemma 4.2 in \citep{fatkhullin2024taming}]
        For any $\omega\in\Omega$ and any $\rho>0$, we have \[2\mathcal{E}_{\rho/2}(\omega)\geq \Delta_\rho^+(\omega).\]
    \end{lemma}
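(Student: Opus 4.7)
My plan is to expand both $\mathcal{E}_{\rho/2}(\omega)$ and $\Delta_\rho^+(\omega)$ using their definitions, then link them through a single test point. The natural choice of test point in the variational definition of $\mathcal{E}_{\rho/2}$ is precisely the Bregman prox point $\omega^+$, since that is the object appearing in $\Delta_\rho^+$. Plugging $\omega' = \omega^+$ into the minimum in $\mathcal{E}_{\rho/2}(\omega)$ immediately gives the lower bound
\[
2\mathcal{E}_{\rho/2}(\omega) \;\geq\; -2\rho\,\langle \nabla \ell(\omega), \omega^+ - \omega\rangle \;-\; \rho^2 D_{A^*}(\omega^+, \omega),
\]
so the whole proof reduces to producing a strong enough lower bound on $-\langle \nabla \ell(\omega), \omega^+ - \omega\rangle$.

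The key step is the standard three-point inequality for Bregman prox minimizers. Since $\omega^+$ minimizes $u \mapsto \langle \nabla \ell(\omega), u\rangle + \rho D_{A^*}(u, \omega)$, the first-order optimality condition combined with the Bregman three-point identity $D_{A^*}(u,\omega) - D_{A^*}(u,\omega^+) - D_{A^*}(\omega^+,\omega) = \langle \nabla A^*(\omega^+) - \nabla A^*(\omega), u - \omega^+\rangle$ yields, for every feasible $u$,
\[
\langle \nabla \ell(\omega), \omega^+\rangle + \rho D_{A^*}(\omega^+, \omega) + \rho D_{A^*}(u, \omega^+) \;\leq\; \langle \nabla \ell(\omega), u\rangle + \rho D_{A^*}(u, \omega).
\]
Setting $u = \omega$ and using $D_{A^*}(\omega,\omega) = 0$ gives the clean inequality $\langle \nabla \ell(\omega), \omega^+ - \omega\rangle \leq -\rho\,D^{\mathrm{sym}}_{A^*}(\omega, \omega^+)$.

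Combining the two bounds, I get
\[
2\mathcal{E}_{\rho/2}(\omega) \;\geq\; 2\rho^2 D^{\mathrm{sym}}_{A^*}(\omega,\omega^+) - \rho^2 D_{A^*}(\omega^+,\omega) = 2\rho^2 D_{A^*}(\omega,\omega^+) + \rho^2 D_{A^*}(\omega^+,\omega),
\]
and discarding the non-negative excess $\rho^2 D_{A^*}(\omega,\omega^+)$ leaves the symmetric quantity $\rho^2 D^{\mathrm{sym}}_{A^*}(\omega,\omega^+) = \Delta_\rho^+(\omega)$, as desired. The mismatch between the coefficient $\rho/2$ appearing inside $\mathcal{E}_{\rho/2}$ and the coefficient $\rho$ in the definition of $\omega^+$ is exactly what makes this work: the three-point inequality produces a factor $2\rho^2$ of symmetrized divergence, which more than compensates for the $-\rho^2 D_{A^*}(\omega^+,\omega)$ penalty incurred by using a non-optimal test point.

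There is no real obstacle here; the argument is a tight piece of Bregman-geometry bookkeeping rather than a hard estimate. The only subtlety worth noting is that the three-point inequality above is stated as an inequality because the argmin is constrained to the feasible set; on the relative interior it would be an equality, but the inequality direction is precisely what is needed, so no separate case analysis is required.
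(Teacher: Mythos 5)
Your proof is correct: the paper itself does not prove this lemma (it is imported verbatim as Lemma 4.2 of \citep{fatkhullin2024taming}), and your argument—testing the minimum defining $\mathcal{E}_{\rho/2}$ at the prox point $\omega^+$ and invoking the three-point inequality with $u=\omega$ to get $\<\nabla\ell(\omega),\omega^+-\omega\>\leq-\rho D^{\mathrm{sym}}_{A^*}(\omega,\omega^+)$—is exactly the standard derivation, using the same three-point lemma the paper already quotes in \cref{app:descent lemma}. The only point worth noting is that your substitution of $\omega^+$ into the minimum requires the two definitions (\cref{def:bfbe,def:bgm}) to use the same feasible set; the paper writes $\Omega$ in one and $\tilde\Omega$ in the other, which is evidently a typo, and your reading (both constrained to the same set) is the intended one.
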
    
    \begin{proof}[Proof of \cref{prop:pl to pl}]
        Recall in \cref{lemma:strong cvx of A^*}, we have shown that $A^*$ is $C_L$-smooth and $C_S$-strongly convex with respect to the Euclidean norm in the bounded set $\tilde\Omega$. By Lemma \ref{bfbe and bgm} and smoothness of $A^*$, we obtain the following lower bound of BFBE\begin{equation}\label{pl to pl 1}
        \begin{split}
            \mathcal{E}_{\rho}(\omega)\geq & \frac{1}{2}\Delta_{2\rho}^+(\omega)\\
            =&2\rho^2D_{A^*}^{\mathrm{sym}}(\omega, \omega^+)\\
            =&2\rho^2\<\nabla A^*(\omega)-\nabla A^*(\omega^+), \omega-\omega^+\>\\
            \geq & \frac{2\rho^2}{C_L}\|\nabla A^*(\omega)-\nabla A^*(\omega^+)\|^2,
        \end{split}
        \end{equation}
        where $\omega^+=\argmin_{\omega'\in\tilde\Omega}\<\nabla \ell(\omega), \omega'\>+2\rho D_{A^*}(\omega', \omega)$, following the definition in \cref{def:bgm}. Then, we can show that without domain constraint, $\omega_*\coloneqq \argmin_{\omega'\in\Omega}\<\nabla \ell(\omega), \omega'\>+2\rho D_{A^*}(\omega', \omega)$ satisfies \begin{equation}\label{pl to pl 2}
            \nabla A^*(\omega_*)=\nabla A^*(\omega)-\frac{1}{2\rho}\nabla \ell(\omega).
        \end{equation}
        By strong convexity of $A^*$, we have \[\|\omega_*-\omega\|\leq \frac{1}{C_S}\|\nabla A^*(\omega_*)-A^*(\omega)\|= \frac{1}{2\rho C_S}\|\nabla \ell(\omega)\|.\]
        If $\omega^+=\omega_*$ holds, we can plug \eqref{pl to pl 2} into the lower bound of BFBE \eqref{pl to pl 1} and we conclude that\[\mathcal{E}_\rho(\omega)\geq \frac{2\rho^2}{C_L}\|\nabla A^*(\omega)-\nabla A^*(\omega^+)\|^2=\frac{1}{2C_L}\|\nabla \ell(\omega)\|^2.\]This completes the proof of \cref{prop:pl to pl}. Therefore, it remains to prove $\omega^+=\omega_*$.\par
        For $\omega$ lying in the interior of $\tilde \Omega$, the condition holds for sufficient large $\rho$ as $\|\nabla \ell (\omega)\|$ is bounded on $\tilde\Omega$ and $2\rho D_{A^*}(\omega', \omega)$ dominates in the optimization problem. The strong convexity of $D_{A^*}$ then ensures that $\|\omega_* - \omega\|$ is very small.\par 
        For $\omega\in\partial\tilde\Omega$, we will need \cref{assump:fast convergence ngvi}. Since \(\nabla A(\eta)\) and \(\nabla A^*(\omega)\) are inverse operators of one another (see \cref{app:facts gaussian}), \eqref{pl to pl 2} implies \begin{align*}
    \omega_*=&\,(\nabla A^*)^{-1}\left(\nabla A^*(\omega) - \frac{1}{2\rho}\nabla \ell(\omega)\right)\\
    =&\, \omega - \frac{1}{2\rho} \nabla^2 A(\eta)\nabla \ell(\omega)+o(1/\rho)\\
    =&\, \omega-\frac{1}{2\rho}(\nabla^2 A^*(\omega))^{-1}\nabla \ell(\omega)+o(1/\rho)
\end{align*}as \(\rho\to\infty\).\par 
    Then under \cref{assump:fast convergence ngvi}, for any $\omega\in\partial\tilde\Omega$ and any outward normal direction $\mathbf{n}_\omega$, there exists some \(\varepsilon>0\) such that \[\left<-(\nabla^2 A^*(\omega))^{-1}\nabla \ell(\omega), \mathbf{n}_\omega\right><-\varepsilon.\]Then we have \[\left<\mathbf{n}_\omega, \omega_*-\omega\right><-\frac{\varepsilon}{2\rho}+o(1/\rho)<0\]for some sufficiently large \(\rho\). Hence \(\omega_*\in\tilde\Omega\) and $\omega^+=\omega_*$.
    \end{proof}

    \paragraph{Step 2. }Now we aim to strengthen the bound in \cref{thm:theorem 4.7 taming} to a last-iterate bound. In the proof of \cref{thm:theorem 4.7 taming} (which can be found in Appendix D in \citep{fatkhullin2024taming}), we can derive a recursion of form \begin{align}\label{eq: recursion in proof}
    \begin{split}
        \Lambda_{t+1}\leq&\, \Lambda_t-\frac{\gamma_t \mu_B}{3}\Lambda_t^{2/2}+2LV^2\gamma_t^2\\
        =&\,\left(1-\frac{\gamma_t\mu_B}{3}\right)\Lambda_t+2LV^2\gamma_t^2.
    \end{split}
    \end{align}
    Therefore, we do not need to assume that $\Lambda_\tau\geq \varepsilon$ for all $\tau=0,\cdots,t$ to obtain the same recursion. As a result, the same approach directly gives the upper bound of the last iterate.
    \subsection{Proof of \cref{ex:ex02}}\label{app:assumption example}

    In this section, we prove that \cref{assump:fast convergence ngvi} can be satisfied for a univariate Bayesian linear regression model, if the parameters $U$ and $D$ satisfy \eqref{eq:1d linear regression}. 

\textit{Proof.} Using the explicit form of \(\nabla^2 A^*(\omega)\) in \cref{app:facts gaussian}, we can compute \[(\nabla^2 A^*(\xi, \Xi))^{-1}=\begin{pmatrix}
    \sigma^2 & 2\mu\sigma^2 \\ 2\mu\sigma^2 & 4\mu^2\sigma^2+2\sigma^4
\end{pmatrix}.\]Since \begin{align*}
    \ell(\xi, \Xi)=&\, -\mathbb{E}_q[\log p(y\,|\, x, z)]+D_{\mathrm{KL}}(q \, \|\, p)\\
    =&\, \frac{1}{2}\mathbb{E}_q[(y-xz)^2]+\frac{1}{2}(-\log \sigma^2+\sigma^2+\mu^2-1)\\
    =&\, \frac{1}{2}[(y-x\mu)^2+x^2\sigma^2-\log \sigma^2+\sigma^2+\mu^2-1]\\
    =&\, \frac{1}{2}[(y-x\xi)^2+(x^2+1)(\Xi-\xi^2)-\log(\Xi-\xi^2)+\xi^2-1].
\end{align*}
Then we have \[\nabla \ell(\xi,\Xi)=\begin{pmatrix}
    -xy+\frac{\xi}{\Xi-\xi^2} \\ \frac{x^2+1}{2}-\frac{1}{2(\Xi-\xi^2)}
\end{pmatrix}=\begin{pmatrix}
    -xy+\frac{\mu}{\sigma^2} \\ \frac{x^2+1}{2}-\frac{1}{2\sigma^2}
\end{pmatrix},\]and \begin{align*}
    (\nabla^2 A^*(\xi, \Xi))^{-1}\nabla \ell(\xi,\Xi)=&\, \sigma^2\begin{pmatrix}
    \mu(x^2+1)-xy \\ (2\mu^2+\sigma^2)(x^2+1)-1-2\mu xy
\end{pmatrix}\\
=&\,\sigma^2\begin{pmatrix}
    \xi(x^2+1)-xy\\(\Xi+\xi^2)(x^2+1)-1-2\xi xy
\end{pmatrix}.
\end{align*}
Next, we consider the 4 constraints which define the boundary of \(\tilde\Omega\).

1. \(\xi=U\). If this constraint is active, the unique outward normal direction is \(\mathbf{n}_\omega=(1,0)\), and \cref{assump:fast convergence ngvi} requires \(U(x^2+1)>xy\).

2. \(\Xi-\xi^2=D\). If this constraint is active, the unique unnormalized outward normal direction is \(\mathbf{n}_\omega=(-2\xi,1)\), and \cref{assump:fast convergence ngvi} requires \(D(x^2+1)-1>0\).

3. \(\xi=-U\). If this constraint is active, the unique outward normal direction is \(\mathbf{n}_\omega=(-1,0)\), hence \cref{assump:fast convergence ngvi} requires \(-U(x^2+1)<xy\).

4. \(\Xi-\xi^2=D^{-1}\). If this constraint is active, the unique unnormalized outward normal direction is \(\mathbf{n}_\omega=(2\xi,-1)\), and \cref{assump:fast convergence ngvi} requires \(-D^{-1}(x^2+1)+1>0\).

For $\omega$ at corners (multiple constraints are active), 
$\mathbf{n}_\omega$ can be any normalized linear combination of the outward normal directions shown above. Then \eqref{eq:1d linear regression} remains sufficient to ensure \cref{assump:fast convergence ngvi}.
    \newpage
    
    \section{Variance of the Gradient Estimator in Logistic Regression}\label{app:variance of logistic regression}
    In this section, we aim to show that \cref{assump:bounded variance b} is satisfied for logistic regression in mean-field setting. We assume that the dataset $\D=\{(x_i, y_i):x_i\in\R,y_i\in\{-1,1\}\}_{i=1}^n$. We begin with \cref{lemma:kl gradient} to bound the gradient of the KL divergence term. In \cref{logistic bounded lemma} we will bound the gradient associated with the log-likelihood term.
    \begin{lemma}\label{lemma:kl gradient}
        For any $\omega\in\tilde\Omega$, \[\nabla_\omega \KL{q(z;\omega)}{p(z)}\leq \frac{3\sqrt{d}UD}{2}.\]
    \end{lemma}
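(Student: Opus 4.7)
\textit{Proof proposal.} Since $p(z) = \mathcal{N}(0, I)$ and under mean-field parameterization $q(z;\omega) = \mathcal{N}(\mu, \Sigma)$ has diagonal covariance, the KL divergence decomposes coordinate-wise. Using the standard closed form $\KL{q}{p} = \frac12[-\log\det\Sigma + \mathrm{Tr}(\Sigma) + \|\mu\|^2 - d]$ and substituting the expectation parameters $\mu_i = \xi_i$, $\Sigma_{ii} = \Xi_{ii} - \xi_i^2$, the $\|\mu\|^2$ cancels part of $\mathrm{Tr}(\Sigma)$ to give the clean representation
\[
\KL{q(z;\omega)}{p(z)} = \frac{1}{2}\sum_{i=1}^d \left[-\log(\Xi_{ii} - \xi_i^2) + \Xi_{ii} - 1\right].
\]

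Next, I would differentiate this expression directly with respect to the expectation parameters. A one-line calculation yields the partials
\[
\frac{\partial\, \KL}{\partial \xi_i} = \frac{\xi_i}{\Xi_{ii} - \xi_i^2} = \frac{\xi_i}{\Sigma_{ii}}, \qquad \frac{\partial\, \KL}{\partial \Xi_{ii}} = \frac{1}{2}\left(1 - \frac{1}{\Sigma_{ii}}\right),
\]
and all other entries of $\nabla_\omega \KL$ vanish by the mean-field assumption (off-diagonal entries of $\Xi$ are not free parameters here).

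I would then bound each partial using the defining inequalities of $\tilde\Omega$, namely $|\xi_i| \leq U$ and $D^{-1} \leq \Sigma_{ii} \leq D$. These immediately give $|\partial_{\xi_i} \KL| \leq UD$ and, using $D > 1$, $|\partial_{\Xi_{ii}} \KL| \leq \tfrac{1}{2}(D - 1) \leq \tfrac{D}{2}$. Summing the $2d$ squared entries, the Euclidean norm satisfies
\[
\|\nabla_\omega \KL\|^2 \leq d (UD)^2 + d \left(\tfrac{D}{2}\right)^2 = dD^2\left(U^2 + \tfrac{1}{4}\right).
\]
Finally, invoking the standing assumption $U \geq 1$ gives $U^2 + \tfrac14 \leq \tfrac{5}{4}U^2$, hence $\|\nabla_\omega \KL\| \leq \tfrac{\sqrt{5}}{2}\sqrt{d}\,UD \leq \tfrac{3}{2}\sqrt{d}\,UD$ as claimed.

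There is no substantive obstacle in this argument; it is a direct computation. The only point worth being careful about is making sure to express the KL purely in expectation parameters before differentiating (so that $\partial_{\xi_i}$ picks up the contribution from the log-determinant through the constraint $\Sigma_{ii} = \Xi_{ii} - \xi_i^2$), and using $U \geq 1$ to absorb the $\tfrac14$ term into the $U^2$ factor to match the stated constant $\tfrac32$.
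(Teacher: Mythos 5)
Your proposal is correct and follows essentially the same route as the paper: compute the two families of partial derivatives of the closed-form Gaussian KL in expectation parameters, bound them by $UD$ and $D/2$ respectively using the constraints defining $\tilde\Omega$, and combine (the paper uses a triangle inequality on the $\xi$- and $\Xi$-blocks where you sum squares, which is a cosmetic difference and in fact yields the slightly sharper constant $\sqrt{5}/2$).
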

    \begin{proof}
        Using the closed-form solution of KL divergence between two multivariate Gaussian distributions and the chain rule, we have \begin{align*}
            \nabla_{\xi_i}\KL{q(z;\omega)}{p(z)}=&\frac{\xi_i}{\Xi_{ii}-\xi_i^2},\\
            \nabla_{\Xi_{ii}}\KL{q(z;\omega)}{p(z)}=&\frac{1}{2}\left(1-\frac{1}{\Xi_{ii}-\xi_i^2}\right).
        \end{align*}
        Therefore, we have \begin{align*}
            \|\nabla_{\omega}\KL{q(z;&\omega)}{p(z)}\|\\
            \leq \,& \|\nabla_{\xi}\KL{q(z;\omega)}{p(z)}\| + \|\nabla_{\Xi}\KL{q(z;\omega)}{p(z)}\|\\
            \leq \,& \sqrt{d}\left(\sup_{1\leq i\leq d}\|\nabla_{\xi_i}\KL{q(z;\omega)}{p(z)}\|+\sup_{1\leq i\leq d}\|\nabla_{\Xi_{ii}}\KL{q(z;\omega)}{p(z)}\|\right)\\
            \leq \,& \sqrt{d}\left(UD+\frac{1}{2}\max\{|1-D|,|1-D^{-1}|\}\right)\\
            \leq \,& \sqrt{d}\left(UD+\frac{D}{2}\right)\\
            \leq \,& \frac{3\sqrt{d}UD}{2},
        \end{align*}
        where we write $\nabla_{\xi}\KL{q(z;\omega)}{p(z)}$ as a vector in $\R^d$ and use the fact that $U, D\geq 1$.
    \end{proof}
    \begin{lemma}\label{logistic bounded lemma}In $\D$, denote $s_1=\max_i \|x_i\|$ and $s_2=\max_i \|x_i\odot x_i\|$. Then for all $1\leq i\leq n$ and $z\in\R^d$ we have
        \[\|-\nabla_z \log p(y_{i}\,|\,x_{i}, z)+\nabla_z^2\log p(y_{i}\,|\,x_{i}, z)\odot \xi\|\leq s_1+\frac{Us_2}{4},\quad \|-\nabla_z^2 \log p(y_{i}\,|\,x_{i}, z)\|\leq \frac{s_2}{4}.\]
    Here $\nabla_z^2 \log p(y_{i}\,|\,x_{i}, z)$ is defined as a $d$-dimensional vector where $j$-th entry equals $\nabla_{z_jz_j}^2 \log p(y_{i}\,|\,x_{i}, z)$.
    \end{lemma}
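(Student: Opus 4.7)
\begin{proofof}{Lemma \ref{logistic bounded lemma} (proposal)}
The plan is a direct computation: write out the first two derivatives of the logistic log-likelihood, observe that each derivative carries a uniformly bounded sigmoid factor, and then conclude using the definition of $\tilde\Omega$ (which gives $\|\xi\|_\infty \leq U$) together with the definitions of $s_1,s_2$.

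First I would set $s_i = \sigma(-y_i x_i^\top z)\in(0,1)$ and differentiate $\log p(y_i\,|\,x_i,z) = -\log(1+e^{-y_i x_i^\top z})$. A short chain-rule calculation gives
\[
\nabla_z \log p(y_i\,|\,x_i,z) = y_i\, s_i\, x_i, \qquad \nabla_z^2 \log p(y_i\,|\,x_i,z) = -s_i(1-s_i)\, x_i x_i^\top,
\]
where I use $y_i^2 = 1$. Reading off the diagonal, the vector $\nabla_z^2 \log p(y_i\,|\,x_i,z) \in \R^d$ (in the sense defined in the lemma statement) equals $-s_i(1-s_i)(x_i\odot x_i)$.

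For the second inequality I would use the elementary bound $s_i(1-s_i)\le \tfrac14$, which immediately gives
\[
\|-\nabla_z^2 \log p(y_i\,|\,x_i,z)\| = s_i(1-s_i)\,\|x_i\odot x_i\|\le \tfrac14 \|x_i\odot x_i\| \le \tfrac{s_2}{4}.
\]
For the first inequality, I would apply the triangle inequality together with the Hadamard-product bound $\|a\odot b\|\le \|a\|_\infty\|b\|$:
\[
\|-\nabla_z \log p(y_i\,|\,x_i,z) + \nabla_z^2 \log p(y_i\,|\,x_i,z)\odot \xi\| \le s_i \|x_i\| + s_i(1-s_i)\|x_i\odot x_i\|_\infty \cdot \|\xi\|.
\]
A cleaner route is to combine $\|a\odot b\| \le \|b\|_\infty \|a\|$ with $\|\xi\|_\infty \le U$ (from the definition of $\tilde\Omega$) to obtain $\|\nabla_z^2\log p\odot \xi\| \le U\cdot \tfrac14 \|x_i\odot x_i\| \le Us_2/4$, and $\|-\nabla_z\log p\|\le s_1$ from $s_i\le 1$ and $\|x_i\|\le s_1$. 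Adding these yields $s_1 + Us_2/4$.

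There is no substantive obstacle: the only mild care needed is to keep the two norm conventions consistent (Euclidean on $\R^d$ for the vectors, $\ell_\infty$ on $\xi$ coming from the box constraint $|\xi_i|\le U$), and to use $y_i^2 = 1$ to eliminate the sign dependence in the Hessian before bounding.
\end{proofof}
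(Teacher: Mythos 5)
Your proposal is correct and follows essentially the same route as the paper: compute the gradient and Hessian of the logistic log-likelihood explicitly, use $\sigma(\cdot)\le 1$ and $\sigma(\cdot)(1-\sigma(\cdot))\le \tfrac14$ together with the box constraint $|\xi_j|\le U$, and assemble via the triangle inequality (the paper bounds entrywise and then takes the norm of the bound vector, you bound at the vector level via $\|a\odot b\|\le\|a\|_\infty\|b\|$ — a cosmetic difference). One small point: avoid reusing $s_i$ for the sigmoid value, since $s_1,s_2$ are already reserved in the lemma statement for $\max_i\|x_i\|$ and $\max_i\|x_i\odot x_i\|$.
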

    \begin{proof}
    For the $j$-th entry,
        \begin{align*}
            |-\nabla_{z_j} & \log p(y_{i}|x_{i}, z)+\xi_j\nabla_{z_jz_j}^2\log p(y_{i}\,|\,x_{i}, z)|\\
            =\,&|\nabla_{z_j} \log(1+e^{-y_ix_i^\top z})-\xi_j \nabla^2_{z_j z_j} \log(1+e^{-y_ix_i^\top z})|\\
            =\,&|-\sigma(-y_ix_i^\top z)y_ix_{ij}-\xi_j \sigma(-y_ix_i^\top z)(1-\sigma(-y_ix_i^\top z))x_{ij}^2|\\
            \leq \,& |-\sigma(-y_ix_i^\top z)y_ix_{ij}|+|\xi_j \sigma(-y_ix_i^\top z)(1-\sigma(-y_ix_i^\top z))x_{ij}^2|\\
            \leq \,& |x_{ij}|+\frac{U}{4}x_{ij}^2,
            \end{align*}
            \begin{align*}
            |-\nabla_{z_j z_j}^2 \log p(y_{i}|x_{i}, z)|=\,&|\nabla^2_{z_j z_j} \log(1+e^{-y_ix_i^\top z})|\\
            =\,&|\sigma(-y_ix_i^\top z)(1-\sigma(-y_ix_i^\top z))x_{ij}^2|\\
            \leq \,& \frac{1}{4}x_{ij}^2,\\
        \end{align*}
    where $\sigma(\cdot)$ is the sigmoid function. Then we have \begin{align*}
        \|-\nabla_z \log  p(y_{i}\,|\,x_{i}, z) & +\nabla_z^2\log p(y_{i}\,|\,x_{i}, z)\odot \xi\|\\
        =\, &\|(|-\nabla_{z_j} \log p(y_{i}\,|\,x_{i}, z)+\xi_j\nabla_{z_jz_j}^2\log p(y_{i}\,|\,x_{i}, z)|)_j\|\\
        \leq\,  &\|(|x_{ij}|+\frac{U}{4}x_{ij}^2)_j\|\\
        \leq \, &\|x_i\|+\frac{U}{4}\|x_i\odot x_i\|\\
        \leq \, & s_1+\frac{Us_2}{4},
        \end{align*}
    
        \begin{align*}
        \|-\nabla_z^2 \log p(y_{i}\,|\,x_{i}, z)\|=\|(-\nabla_{z_j z_j}^2 \log p(y_{i}\,|\,x_{i}, z))_j\| 
        \leq  \frac{1}{4}\|x_i\odot x_i\|
        \leq  \frac{s_2}{4}.
    \end{align*}
    \end{proof}
    With the same approach as in Lemma \ref{logistic bounded lemma}, we have the following bounds on the log-likelihood term and the gradient $\nabla \ell(\omega)$.
    \begin{gather*}
        \|\nabla_{\xi} \E[-\log p(y_i\,|\,x_i, z)]\|=\|\E[-\nabla_z\log p(y_i\,|\,x_i,z)+\nabla^2_z \log p(y_i\,|\,x_i,z)\odot \xi]\|\leq s_1+\frac{Us_2}{4},\\
        \|\nabla_{\Xi} \E[-\log p(y_i\,|\,x_i, z)]\|=\frac{1}{2}\|\E[-\nabla_z^2\log p(y_i\,|\,x_i,z)]\|\leq \frac{s_2}{8},\\
    \end{gather*}
    \begin{equation}\label{eq:bound on gradient}
        \begin{split}
            \|\nabla \ell(\omega)\|
        \leq &\sum_{i=1}^n \left[\|\nabla_{\xi} \E[-\log p(y_i\,|\,x_i, z)]\| + \|\nabla_{\Xi} \E[-\log p(y_i\,|\,x_i, z)]\|\right] \\
        & \qquad + \|\nabla_\omega \KL{q(z;\omega)}{p(z)}\|\\
        \leq & \, n\left(s_1+\frac{Us_2}{4}+\frac{s_2}{8}\right)+\frac{3\sqrt{d}UD}{2} . 
        \end{split}
    \end{equation}   
        Next, we define our gradient estimator. Consider mini-batch gradient estimator \begin{align*}
            \hat\nabla_{\xi, MB} \ell(\omega)=\,&\frac{n}{m}\sum_{k=1}^m \nabla_\xi \E[-\log p(y_{i_k}\,|\,x_{i_k}, z)]+\nabla_\xi\KL{q(z)}{p(z)}\\
            =\,&\frac{n}{m}\sum_{i=1}^k\E[-\nabla_z \log p(y_{i_k}\,|\,x_{i_k}, z)+\nabla_z^2\log p(y_{i_k}\,|\,x_{i_k}, z)\odot \xi]\\
            \,&+\nabla_\xi \KL{q(z)}{p(z)},\\
            \hat\nabla_{\Xi, MB} \ell(\omega)=\,&\frac{n}{m}\sum_{k=1}^m \nabla_\Xi \E[-\log p(y_{i_k}\,|\,x_{i_k}, z)]+\nabla_\Xi\KL{q(z)}{p(z)}\\
            =\,&\frac{n}{2m}\sum_{k=1}^k \E[-\nabla_z^2 \log p(y_{i_k}\,|\,x_{i_k}, z)] + \nabla_\Xi\KL{q(z)}{p(z)},
        \end{align*}where each $i_k$ is sampled uniformly from $\{1,\cdots, n\}$, and $m$ is the batch size. There is usually no closed-form solution of the expectation of the log-likelihood, thus we approximate the expectation with $N$ samples, i.e., \begin{equation}\label{eq:stochastic gradient formula}
            \begin{split}
                \hat\nabla_\xi \ell(\omega)=\,&\frac{n}{mN}\sum_{k=1}^m\sum_{l=1}^N [-\nabla_z \log p(y_{i_k}\,|\,x_{i_k}, z_l)+\nabla_z^2\log p(y_{i_k}\,|\,x_{i_k}, z_l)\odot\xi]\\
                \,&+\nabla_\xi\KL{q(z)}{p(z)},\\
            \hat\nabla_\Xi \ell(\omega)=\,&\frac{n}{2mN}\sum_{k=1}^m\sum_{l=1}^N [-\nabla_z^2 \log p(y_{i_k}\,|\,x_{i_k}, z_l)]+\nabla_\Xi\KL{q(z)}{p(z)},
            \end{split}
        \end{equation}
        where $z_1, \cdots, z_N$ are iid samples of the variational distribution $q$. It is obvious that $\E[\hat\nabla_{\mathrm{MC}} \ell(\omega)]=\hat\nabla \ell(\omega)$, so the gradient estimator is unbiased.\par 
        Similar to \eqref{eq:bound on gradient}, we can obtain the following bound on the stochastic gradient $\hat \nabla \ell(\omega)$.
        \begin{equation}\label{eq:bound on stochastic gradient}
            \begin{split}
                \|\hat\nabla \ell(\omega)\|
        \leq \,&\frac{n}{mN}\sum_{k=1}^m\sum_{l=1}^N \left[\|-\nabla_{\xi} \log p(y_{i_k}\,|\,x_{i_k}, z_l)\|+ \|-\nabla_{\Xi} \log p(y_{i_k}\,|\,x_{i_k}, z_l)\|\right]\\
        \,&+ \|\nabla_\omega \KL{q(z;\omega)}{p(z)}\|\\
        \leq \,& n\left(s_1+\frac{Us_2}{4}+\frac{s_2}{8}\right)+\frac{3\sqrt{d}UD}{2}.
            \end{split}
        \end{equation}
        Next we prove that this estimator satisfies bounded variance assumption (\cref{assump:bounded variance a}), which will also be useful for the proof of \cref{assump:bounded variance b}. 
    \begin{lemma}
        The Monte Carlo gradient estimator satisfies \cref{assump:bounded variance a}, i.e., \[\E[\|\hat\nabla \ell(\omega)-\nabla \ell(\omega)\|^2]\leq 16n^2\left(\left(s_1+\frac{Us_2}{4}\right)^2+\frac{s_2^2}{64}\right).\]
    \end{lemma}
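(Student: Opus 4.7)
The key observation is that the KL divergence term $\nabla_\omega \KL{q(z;\omega)}{p(z)}$ appears deterministically in both $\hat\nabla\ell(\omega)$ and $\nabla\ell(\omega)$, so it cancels in the difference. Hence it suffices to control the variance of the log-likelihood part of the gradient. Writing $\hat\nabla\ell(\omega)-\nabla\ell(\omega)=\hat W(\omega)-V(\omega)$, where $\hat W(\omega)$ is the stochastic contribution coming from the double sum in \eqref{eq:stochastic gradient formula} and $V(\omega)=\E[\hat W(\omega)]=\nabla_\omega \E_q[-\log p(\D\,|\,z)]$, I will bound the $\xi$- and $\Xi$-components separately using the orthogonality of these parameter blocks:
\[
\E\|\hat\nabla\ell(\omega)-\nabla\ell(\omega)\|^2
=\E\|\hat W_\xi-V_\xi\|^2+\E\|\hat W_\Xi-V_\Xi\|^2.
\]

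For the $\xi$-block, I apply the triangle inequality over the $mN$ terms of the Monte-Carlo/mini-batch double sum and use \cref{logistic bounded lemma} pointwise:
\[
\|\hat W_\xi(\omega)\|\leq \frac{n}{mN}\sum_{k=1}^m\sum_{l=1}^N\bigl\|-\nabla_z\log p(y_{i_k}\,|\,x_{i_k},z_l)+\nabla_z^2\log p(y_{i_k}\,|\,x_{i_k},z_l)\odot\xi\bigr\|\leq n\bigl(s_1+\tfrac{Us_2}{4}\bigr).
\]
The same uniform bound holds for $\|V_\xi(\omega)\|$ by Jensen's inequality, since the expectation with respect to both the index $i$ and the Gaussian sample $z$ preserves the pointwise bound. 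For the $\Xi$-block, the factor $\tfrac12$ in \eqref{eq:stochastic gradient formula} combined with the bound $\|-\nabla_z^2\log p(y_i\,|\,x_i,z)\|\leq s_2/4$ from \cref{logistic bounded lemma} gives $\|\hat W_\Xi(\omega)\|,\|V_\Xi(\omega)\|\leq n s_2/8$ by the same argument.

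Combining via Young's inequality $\|\hat W_\cdot-V_\cdot\|^2\leq 2\|\hat W_\cdot\|^2+2\|V_\cdot\|^2$ and taking expectations yields
\[
\E\|\hat W_\xi-V_\xi\|^2\leq 4n^2\bigl(s_1+\tfrac{Us_2}{4}\bigr)^2,\qquad \E\|\hat W_\Xi-V_\Xi\|^2\leq 4n^2\cdot\tfrac{s_2^2}{64}\cdot 4,
\]
and summing the two blocks gives the claim (with room to spare relative to the stated constant $16$).

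\textbf{Main obstacle.} The only subtle point is ensuring that the uniform almost-sure bound on $\hat W(\omega)$ derived by the triangle inequality is not weakened by the fact that the summands $G_{kl}$ in the double sum are not jointly independent (they share $i_k$ across $l$ and $z_l$ across $k$). This independence would give a tighter $\cO(1/(mN))$-type bound, but it is unnecessary here; the worst-case pointwise bounds from \cref{logistic bounded lemma} already suffice because they are cancellation-free under the triangle inequality. Thus the argument reduces to bookkeeping of the constants coming from $s_1$, $s_2$, and $U$.
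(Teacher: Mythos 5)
Your proof is correct and follows essentially the same route as the paper's: both arguments discard the deterministic KL term, reduce everything to the uniform pointwise bounds of \cref{logistic bounded lemma} via the triangle inequality (separately for the $\xi$- and $\Xi$-blocks), and finish with Young's inequality, making no use of independence across the mini-batch or Monte-Carlo samples. The only cosmetic difference is that the paper inserts an intermediate decomposition through the exact-expectation mini-batch gradient $\hat\nabla_{MB}\ell$ before applying the same worst-case bounds, whereas you bound $\|\hat W-\E\hat W\|^2\leq 2\|\hat W\|^2+2\|\E\hat W\|^2$ directly, which in fact yields a slightly tighter constant that still implies the stated inequality.
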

    \begin{proof}
    \begingroup
    \allowdisplaybreaks
        \begin{align*}
            \,& \E[\|\hat\nabla \ell(\omega)-\nabla \ell(\omega)\|^2]\\
            \leq \,& 2\E[\|\hat\nabla \ell(\omega)-\hat\nabla_{MB} \ell(\omega)\|^2] + 2\E[\|\hat\nabla_{MB} \ell(\omega)-\nabla \ell(\omega)\|^2]\\
            \leq \,& 2\E\Biggl[\Biggl\Vert \frac{1}{N}\sum_{l=1}^n\Bigl[\frac{n}{m}\sum_{k=1}^m[-\nabla_z \log p(y_{i_k}\,|\,x_{i_k}, z_l)+\xi \nabla_z^2\log p(y_{i_k}\,|\,x_{i_k}, z_l)]\\
            \,&-\frac{n}{m}\sum_{i=1}^k\E[-\nabla_z \log p(y_{i_k}\,|\,x_{i_k}, z)+\xi \nabla_z^2\log p(y_{i_k}\,|\,x_{i_k}, z)]\Bigl]\Biggl\Vert^2 \Biggl]\\
            \,&+2\E\left[\left\Vert\frac{1}{N}\sum_{l=1}^N\left[\frac{n}{2m}\sum_{k=1}^m[-\nabla_z^2 \log p(y_{i_k}\,|\,x_{i_k}, z_l)]-\frac{n}{2m}\sum_{k=1}^k \E[-\nabla_z^2 \log p(y_{i_k}\,|\,x_{i_k}, z)]\right]\right\Vert^2\right]\\
            \,&+2\E\left[\left\Vert\frac{n}{m}\sum_{k=1}^m \nabla \E[-\log p(y_{i_k}\,|\,x_{i_k}, z)]-\sum_{i=1}^n\nabla \E[-\log p(y_{i_k}\,|\,x_{i_k}, z)]\right\Vert^2\right]\\
            \leq \,& 4\max_{i,z}\left\Vert\frac{n}{m}\sum_{k=1}^m[-\nabla_z \log p(y_{i}\,|\,x_{i}, z)+\nabla_z^2\log p(y_{i}\,|\,x_{i}, z)\odot\xi]\right\Vert^2\\
            \,&+4n^2\max_i\left\Vert\nabla_{\xi} \E[-\log p(y_i\,|\,x_i, z)]\right\Vert^2\\
            \,&+4\max_{i,z}\left\Vert\frac{n}{2m}\sum_{k=1}^m[-\nabla_z^2 \log p(y_{i_k}\,|\,x_{i_k}, z)]\right\Vert^2 +4n^2\max_i\left\Vert\nabla_{\Xi} \E[-\log p(y_i\,|\,x_i, z)]\right\Vert^2\\
            \,&+8n^2(\max_i\|\nabla_\xi \E[-\log p(y_i\,|\,x_i, z)]\|^2+\max_i\|\nabla_\Xi \E[-\log p(y_i\,|\,x_i, z)]\|^2)\\
            \leq \,&4n^2\left(\max_{i,z}\|-\nabla_z \log p(y_{i}\,|\,x_{i}, z)+\nabla_z^2\log p(y_{i}\,|\,x_{i}, z)\odot \xi\|^2+\max_{i,z} \left\Vert-\frac{1}{2}\nabla_z^2\log p(y_i\,|\,x_i,z)\right\Vert^2\right)\\
            \,&+12n^2(\max_i\|\nabla_\xi \E[-\log p(y_i\,|\,x_i, z)]\|^2+\max_i\|\nabla_\Xi \E[-\log p(y_i\,|\,x_i, z)]\|^2)\\
            \leq \,& 16n^2\left(\left(s_1+\frac{Us_2}{4}\right)^2+\frac{s_2^2}{64}\right).
        \end{align*}  
    \endgroup
    \end{proof}
    Finally, we are ready to prove that \cref{assump:bounded variance b} holds for logistic regression.
    \begin{theorem}
        Consider stochastic gradient estimator \eqref{eq:stochastic gradient formula}. Then with $s_1=\max_i \|x_i\|$ and $s_2=\max_i \|x_i\odot x_i\|$, \cref{assump:bounded variance b} is satisfied with some $V^2>0$, which is at most polynomial in $n, d, U, D, s_1$ and $s_2$.
    \end{theorem}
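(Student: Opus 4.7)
\noindent\textit{Proof plan.} The plan is to bound the non-Euclidean variance quantity in \cref{assump:bounded variance b} by the standard Euclidean variance $\E[\|\hat\nabla \ell(\omega) - \nabla \ell(\omega)\|^2]$ (already controlled by the preceding lemma) times a constant depending only on the Euclidean strong convexity modulus of $A^*$ on $\tilde\Omega$. Concretely, by Cauchy--Schwarz,
\[
\gamma_t^{-1}\E\!\left[\<\hat\nabla\ell(\omega_t)-\nabla\ell(\omega_t),\,\omega_{t+1}^+-\omega_{t+1}\>\,\big|\,\omega_t\right]
\leq \gamma_t^{-1}\E\!\left[\|\hat\nabla\ell(\omega_t)-\nabla\ell(\omega_t)\|\cdot\|\omega_{t+1}^+-\omega_{t+1}\|\,\big|\,\omega_t\right],
\]
so the whole task reduces to showing the displacement bound
\[
\|\omega_{t+1}^+-\omega_{t+1}\|\leq \frac{\gamma_t}{C_S}\,\|\hat\nabla\ell(\omega_t)-\nabla\ell(\omega_t)\|,
\]
where $C_S=(D(4U^2+2D+1))^{-1}$ is the Euclidean strong-convexity constant of $A^*$ on $\tilde\Omega$ established in \cref{lemma:strong cvx of A^*}.

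\noindent\textit{Key steps.} First, I would recall the two optimization problems defining the exact and stochastic projected NGD iterates,
\[
\omega_{t+1}^+=\argmin_{\omega\in\tilde\Omega} f_1(\omega),\qquad \omega_{t+1}=\argmin_{\omega\in\tilde\Omega} f_2(\omega),
\]
where $f_1(\omega)=\gamma_t\<\nabla\ell(\omega_t),\omega\>+D_{A^*}(\omega,\omega_t)$ and $f_2(\omega)$ replaces $\nabla\ell(\omega_t)$ by $\hat\nabla\ell(\omega_t)$. Since $A^*$ is $C_S$-strongly convex in Euclidean norm on $\tilde\Omega$ (\cref{lemma:strong cvx of A^*}), both $f_1,f_2$ are $C_S$-strongly convex in Euclidean norm, and they differ only by the linear term $\gamma_t\<\hat\nabla\ell(\omega_t)-\nabla\ell(\omega_t),\,\cdot\>$. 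A standard first-order optimality argument on the convex set $\tilde\Omega$ (add the two variational inequalities $\<\nabla f_1(\omega_{t+1}^+),\omega_{t+1}-\omega_{t+1}^+\>\geq 0$ and $\<\nabla f_2(\omega_{t+1}),\omega_{t+1}^+-\omega_{t+1}\>\geq 0$) together with strong monotonicity of $\nabla D_{A^*}(\cdot,\omega_t)$ gives the displacement bound above.

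\noindent\textit{Conclusion.} Substituting the displacement bound into Cauchy--Schwarz yields
\[
\gamma_t^{-1}\E\!\left[\<\hat\nabla\ell(\omega_t)-\nabla\ell(\omega_t),\,\omega_{t+1}^+-\omega_{t+1}\>\,\big|\,\omega_t\right]
\leq \frac{1}{C_S}\,\E\!\left[\|\hat\nabla\ell(\omega_t)-\nabla\ell(\omega_t)\|^2\,\big|\,\omega_t\right].
\]
The previous lemma in this appendix already bounds the right-hand side by $\frac{16n^2}{C_S}\big((s_1+Us_2/4)^2+s_2^2/64\big)$, so I can take
\[
V^2=D(4U^2+2D+1)\cdot 16n^2\left(\left(s_1+\tfrac{Us_2}{4}\right)^2+\tfrac{s_2^2}{64}\right),
\]
which is polynomial in $n,d,U,D,s_1,s_2$ as required.

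\noindent\textit{Main obstacle.} The only delicate point is the Euclidean displacement bound, because the iterates live on the constrained set $\tilde\Omega$ and the Bregman divergence $D_{A^*}$ is only strongly convex in the Euclidean sense within $\tilde\Omega$ (not globally on $\Omega$). This is precisely what makes \cref{lemma:strong cvx of A^*} indispensable: without the compactness imposed by $\tilde\Omega$, the eigenvalues of $\nabla^2 A^*(\omega)$ can degenerate and the constant $C_S$ would blow up; here the projection onto $\tilde\Omega$ guarantees a uniform lower bound and hence a polynomial final constant.
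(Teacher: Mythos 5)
Your route is genuinely different from the paper's and, where it works, it is cleaner. The paper does \emph{not} bound $\|\omega_{t+1}^+-\omega_{t+1}\|$ directly: it first splits the inner product through the common anchor $\omega_t$, writing $\<\hat\nabla\ell-\nabla\ell,\,\omega_{t+1}^+-\omega_t\>+\<\hat\nabla\ell-\nabla\ell,\,\omega_t-\omega_{t+1}\>$, and then bounds each displacement $\|\omega_{t+1}-\omega_t\|$, $\|\omega_{t+1}^+-\omega_t\|$ by $\frac{\sqrt2+1}{C_S}\gamma_t$ times the corresponding (full) gradient norm, using only the defining property of the Bregman projection ($D_{A^*}(\omega_{t+1},\omega_{t+1,*})\leq D_{A^*}(\omega_t,\omega_{t+1,*})$ because $\omega_t\in\tilde\Omega$) plus strong convexity of $A^*$. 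This forces the paper to additionally bound $\sup\|\nabla\ell\|$ and $\sup\|\hat\nabla\ell\|$ on $\tilde\Omega$ (via \cref{lemma:kl gradient} and \cref{logistic bounded lemma}), and yields $V^2 \lesssim C_S^{-1}\sqrt{\E\|\hat\nabla\ell-\nabla\ell\|^2}\,\sup(\|\nabla\ell\|+\|\hat\nabla\ell\|)$. Your prox-perturbation argument short-circuits all of that: the two subproblems differ only in the linear term, so the displacement is controlled by the gradient \emph{error} alone, giving the tighter and more transparent bound $V^2=C_S^{-1}\E\|\hat\nabla\ell-\nabla\ell\|^2$, with no need for the gradient-norm lemmas. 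Both constants are polynomial in $n,d,U,D,s_1,s_2$, so either suffices for the stated theorem.

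The one step you should not wave through is the pair of variational inequalities. Adding $\<\nabla f_1(\omega_{t+1}^+),\omega_{t+1}-\omega_{t+1}^+\>\geq 0$ and $\<\nabla f_2(\omega_{t+1}),\omega_{t+1}^+-\omega_{t+1}\>\geq 0$ requires these first-order conditions to hold with the \emph{other} minimizer as test point, which is guaranteed only when the feasible set is convex. In the expectation coordinates $(\xi,\Xi)$ the set $\tilde\Omega$ is \emph{not} convex: the upper constraint $\Xi_{ii}-\xi_i^2\leq D$ is a superlevel set of a convex function (e.g.\ the midpoint of $(0,D)$ and $(U,D+U^2)$ has $\Xi-\xi^2=D+U^2/4>D$). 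The same caveat applies to invoking strong monotonicity of $\nabla A^*$ along the segment joining the two minimizers, which may leave $\tilde\Omega$. This is repairable — the constrained subproblem decouples coordinatewise and is a convex problem over a box in the standard parameters $(\mu,\Sigma)$ (\cref{lemma:equivalent update}), so you can run the contraction argument there and pay an extra bi-Lipschitz factor $\cO(\sqrt{U^2+D})$ for the change of coordinates, or note that the Hessian lower bound of \cref{lemma:strong cvx of A^*} degrades only polynomially on the slightly enlarged set containing the segment — but as written the displacement bound is asserted, not proved, at exactly the point where the paper's anchoring trick deliberately avoids needing convexity of $\tilde\Omega$. Make that repair explicit and your proof is complete.
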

    \begin{proof}
        We first summarize our notations.
        \newline
        \begin{tikzpicture}
        \hspace{3em}
\node (U1) at (-5,-1) {$\omega_t$};
  \node (V1) at (2,0) {$\omega_{t+1,*}$};
  \node (W1) at (5,0) {$\omega_{t+1}$};

  \draw[->] (U1) -- node[midway, sloped, above]{SMD with $\hat\nabla\ell(\omega_t)$ (stochastic)} (V1);
  \draw[->] (V1) -- node[above]{Proj.} (W1);

  \node (V2) at (2,-2) {$\omega_{t+1,*}^+$};
  \node (W2) at (5,-2) {$\omega_{t+1}^+$};

  \draw[->] (U1) -- node[midway, sloped, below]{MD with $\nabla\ell(\omega_t)$ (exact)} (V2);
  \draw[->] (V2) -- node[above]{Proj.} (W2);

\end{tikzpicture}
\newline
Here $\omega_{t+1,*}^+$ (and $\omega_{t+1,*}$) are obtained by doing an (S)MD update from $\omega_t$ with step size $\gamma_t$, $\omega_{t+1}=\argmin_{\omega\in\tilde\Omega}D_{A^*}(\omega, \omega_{t+1,*})$ and $\omega_{t+1}^+=\argmin_{\omega\in\tilde\Omega}D_{A^*}(\omega, \omega_{t+1,*}^+)$.\par
We decompose \eqref{bounded variance} into the sum of two terms.\begin{equation}\label{eq:decomposition of variance}
\begin{split}
    \,&\frac{1}{\gamma_t}\E[\<\hat\nabla \ell(\omega_t)-\nabla \ell(\omega_t), \omega_{t+1}^+-\omega_{t+1}\>\,|\, \omega_t]\\
    = \,& \frac{1}{\gamma_t}\E[\<\hat\nabla \ell(\omega_t)-\nabla \ell(\omega_t), \omega_{t+1}^+-\omega_t\>\,|\, \omega_t] + \frac{1}{\gamma_t}\E[\<\hat\nabla \ell(\omega_t)-\nabla \ell(\omega_t), \omega_t-\omega_{t+1}\>\,|\, \omega_t]\\
    \leq \,& \frac{1}{\gamma_t}\E[\|\hat\nabla \ell(\omega_t)-\nabla \ell(\omega_t)\|\|\omega_{t+1}^+-\omega_t\|\,|\,\omega_t] + \frac{1}{\gamma_t}\E[\|\hat\nabla \ell(\omega_t)-\nabla \ell(\omega_t)\|\|\omega_{t+1}-\omega_t\|\,|\,\omega_t].
\end{split}
\end{equation}
Next, we aim to bound $\|\omega_{t+1}-\omega_t\|$. By $C_S$-strong convexity of $A^*$ and definition of $\omega_{t+1}^*$, we have \begin{align*}
\|\omega_{t+1,*}-\omega_{t+1}\|\leq\,& \sqrt{\frac{2}{C_S}D_{A^*}(\omega_{t+1},\omega_{t+1,*})}\\
\leq\,& \sqrt{\frac{2}{C_S}D_{A^*}(\omega_t, \omega_{t+1,*})}\\
    \leq \,&\sqrt{\frac{2}{C_S}\left(D_{A^*}(\omega_t,\omega_{t+1,*})+D_{A^*}(\omega_{t+1,*},\omega_t)\right)}\\
    =\,&\sqrt{\frac{2}{C_S}\<\nabla A^*(\omega_{t+1,*})-\nabla A^*(\omega_t),\omega_{t+1,*}-\omega_t\>}\\
    \leq \,&\frac{\sqrt{2}}{C_S}\|\nabla A^*(\omega_{t+1,*})-\nabla A^*(\omega_t)\|.
\end{align*}
Moreover, by strong convexity we have \[\|\omega_{t+1,*}-\omega_t\|\leq \frac{1}{C_S}\|\nabla A^*(\omega_{t+1,*})-\nabla A^*(\omega_t)\|.\]
Then by triangle inequality we get \[\|\omega_{t+1}-\omega_t\|\leq \|\omega_{t+1}-\omega_{t+1,*}\|+\|\omega_{t+1,*}-\omega_t\|\leq \frac{\sqrt{2}+1}{C_S}\|\nabla A^*(\omega_{t+1,*})-\nabla A^*(\omega_t)\|.\]
Then we use the definition of SMD step in \eqref{eq:update rule} to get \begin{equation}\label{eq:bound distance 1}
    \|\omega_{t+1}-\omega_t\|\leq \frac{\sqrt{2}+1}{C_S}\|\nabla A^*(\omega_{t+1,*})-\nabla A^*(\omega_t)\|\leq \frac{\sqrt{2}+1}{C_S}\gamma_t\|\hat \nabla \ell(\omega_t)\|.
\end{equation}
Similarly, we have \begin{equation}\label{eq:bound distance 2}
    \|\omega_{t+1}^+-\omega_t\|\leq \frac{\sqrt{2}+1}{C_S}\gamma_t\|\nabla \ell(\omega_t)\|.
\end{equation}
Plug \eqref{eq:bound distance 1} and \eqref{eq:bound distance 2} into \eqref{eq:decomposition of variance} and we get\begin{align*}
    &\frac{1}{\gamma_t}\E[\<\hat\nabla \ell(\omega_t)-\nabla \ell(\omega_t), \omega_{t+1}^+-\omega_{t+1}\>\,|\, \omega_t]\\
    \leq & \frac{\sqrt{2}+1}{C_S}\E[\|\hat\nabla \ell(\omega_t)-\nabla \ell(\omega_t)\|\|\nabla \ell(\omega_t)\|\,|\,\omega_t] + \frac{\sqrt{2}+1}{C_S}\E[\|\hat\nabla \ell(\omega_t)-\nabla \ell(\omega_t)\|\|\hat \nabla \ell(\omega_t)\|\,|\,\omega_t]\\
    \leq & \frac{\sqrt{2}+1}{C_S}\sqrt{\E[\|\hat\nabla \ell(\omega_t)-\nabla \ell(\omega_t)\|^2]}\sup_{\omega_t\in\tilde\Omega}(\|\nabla \ell(\omega_t)\| + \|\hat \nabla \ell(\omega_t)\|)\\
    \leq & \frac{\sqrt{2}+1}{C_S}\times 4n\sqrt{\left(s_1+\frac{Us_2}{4}\right)^2+\frac{s_2^2}{64}}\times \left(2n\left(s_1+\frac{Us_2}{4}+\frac{s_2}{8}\right)+3\sqrt{d}UD\right)\\
    \leq & \frac{\sqrt{2}+1}{C_S}[(8s_1^2+2Us_2+s_2)n^2+12\sqrt{d}UDn]\sqrt{\left(s_1+\frac{Us_2}{4}\right)^2+\frac{s_2^2}{64}}.
\end{align*}
    \end{proof}

    \newpage
    
    \section{Experiment Details and Additional Results}\label{app:additional experiments}
    \subsection{Pseudocode of the Algorithms}
    We first present the pseudocode of \algname{Proj-SNGD} algorithm proposed in \cref{sec:algorithm}.\par
    \begin{algorithm}[]
    \caption{\algname{Proj-SNGD}}
        \begin{algorithmic}[1]
    \Require Initialization $\omega_0 \in \tilde \Omega$, number of iterations $T$, step sizes $\{\gamma_t\}_{0\leq t\leq T-1}$
    \For{$t = 0,1,\dots, T-1$}
        \State Compute stochastic gradient $\hat \nabla \ell(\omega_t)$
        \State Compute $\omega_{t+1, *}\in \Omega$ such that 
        \begin{equation}\label{one step update original}
            \nabla A^*(\omega_{t+1,*})=\nabla A^*(\omega_{t+1})-\gamma_t \hat\nabla \ell(\omega_t)
        \end{equation}
        \State Set \begin{equation}\label{project to omega}
            \omega_{t+1}=\mathrm{Proj}_{\tilde \Omega} (\omega_{t+1, *})
        \end{equation}
    \EndFor
    \State Sample $\bar \omega_T$ from $\{\omega_t\}_{0\leq t\leq T-1}$ with probability $p_t=\gamma_t/\sum_{i=0}^{T-1} \gamma_i$
    \State \textbf{Return} $\bar \omega_T$
    \end{algorithmic}
    \label{alg:pngvi}
    \end{algorithm}
    Next, we provide the pseudocode of \algname{Prox-SGD} and \algname{Proj-SGD} from \citep{domke2020provable, domke2023provable}. We write \begin{equation*}
        \tilde L(\theta)=\tilde L(\mu, C)\coloneqq \E_{q(z;\theta)}[-\log p(y\,|\,z)-\log p(z)].
    \end{equation*}Therefore, using the definition of $L(\theta)$ in \eqref{eq:hidden decomposition}, we have \begin{equation*}
        L(\theta)=\tilde L(\theta ) + \E_{q(z;\theta)}[\log q(z;\theta)].
    \end{equation*}
     \begin{algorithm}[H]
    \caption{\algname{Prox-SGD}}
        \begin{algorithmic}[1]
    \Require Initialization $\mu_0\in\R^d,C_0\in\mathcal{S}_+^d$ and diagonal, iterations $T$, step sizes $\{\gamma_t\}_{0\leq t\leq T-1}$, 
    \For{$t = 0,1,\dots, T-1$}
        \State Compute stochastic gradient $\hat \nabla_{\mu} \tilde L(\mu_t,C_t)$, $\hat \nabla_{C} \tilde L(\mu_t,C_t)$
        \State Set $\mu_{t+1}=\mu_t-\gamma_t \hat \nabla_{\mu} \tilde L(\mu_t,C_t),\quad C_{t+1,*}=C_t-\gamma_t\hat \nabla_{C} \tilde L(\mu_t,C_t)$
        \State For each $1\leq i\leq d$, set
        \begin{equation}
        \label{eq:prox step}
            (C_{t+1})_{ii}=\frac{1}{2}\left((C_t)_{ii}+\sqrt{(C_t)_{ii}^2+4\gamma_t}\right)
        \end{equation}
    \EndFor
    \State \textbf{Return} $(\mu_T,C_T)$
    \end{algorithmic}
    \label{alg:prox sgd}
    \end{algorithm}
    In \cref{alg:proj sgd}, $M$ is defined as the smoothness parameter of the joint log-likelihood $\log p(z,\D)$ with respect to $z$. Recall that the clipping function in \eqref{eq:projection step} is defined as \(    \clip_{[a, b]}(x)=\min\{\max\{a, x\}, b\}.\)
    \begin{algorithm}[H]
    \caption{\algname{Proj-SGD}}
        \begin{algorithmic}[1]
    \Require Initialization $\mu_0\in\R^d,C_0\in\mathcal{S}_+^d$ and diagonal, iterations $T$, step sizes $\{\gamma_t\}_{0\leq t\leq T-1}$, smoothness parameter $M$
    \For{$t = 0,1,\dots, T-1$}
        \State Compute stochastic gradient $\hat \nabla_{\mu} L(\mu_t,C_t)$, $\hat \nabla_{C} L(\mu_t,C_t)$
        \State Set $\mu_{t+1}=\mu_t-\gamma_t \hat \nabla_{\mu} L(\mu_t,C_t),\quad C_{t+1,*}=C_t-\gamma_t\hat \nabla_{C} L(\mu_t,C_t)$
        \State For each $1\leq i\leq d$, set
        \begin{equation}
        \label{eq:projection step}
            (C_{t+1})_{ii}=\clip_{[1/\sqrt{M},+\infty)}((C_{t+1,*})_{ii})
        \end{equation}
    \EndFor
    \State \textbf{Return} $(\mu_T,C_T)$
    \end{algorithmic}
    \label{alg:proj sgd}
    \end{algorithm}

    \subsection{Experimental Results on Additional Datasets}\label{app:additional datasets}

    \begin{figure}
        \centering
    \subfloat{
        \includegraphics[width=0.36\textwidth]{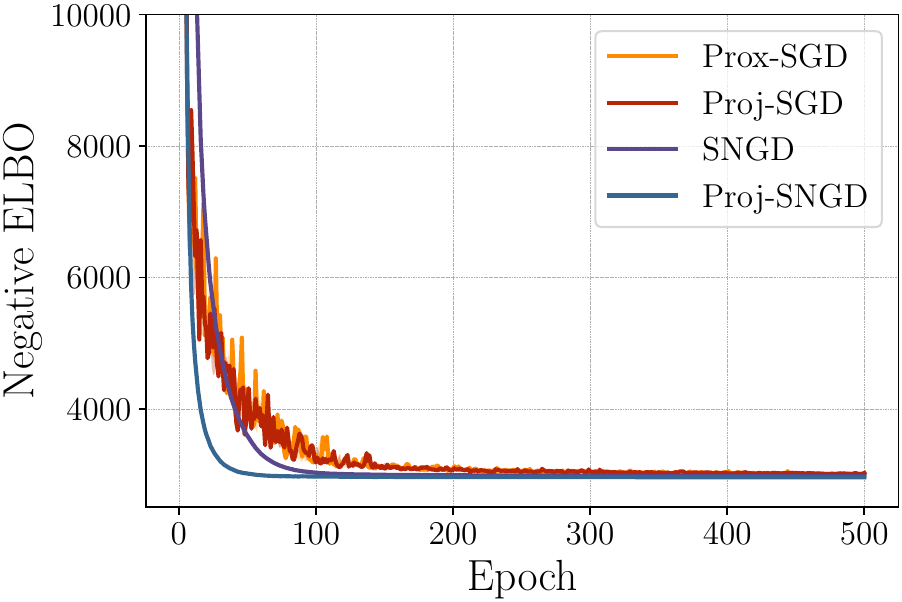}
    }\hspace{2em}
    \subfloat{
        \includegraphics[width=0.36\textwidth]{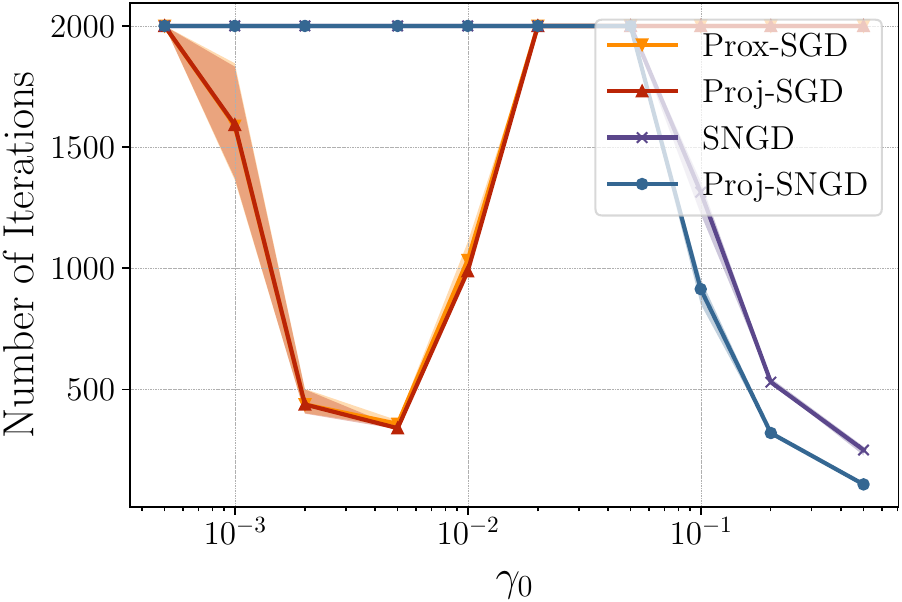}
    }
    \caption{Euclidean and non-Euclidean algorithms on Madelon dataset. Left: Objective during optimization with tuned step size.
    Right: Number of iterations before the objective falls below $\ell(\omega) \leq 3000$ for different initial step sizes $\gamma_0$. Non-Euclidean algorithms show consistently better performance, tolerate larger step sizes and are more robust to step size tuning.}
    \label{fig:madelon}
    \end{figure}
    
    \begin{figure}
        \centering
    \subfloat{
        \includegraphics[width=0.36\textwidth]{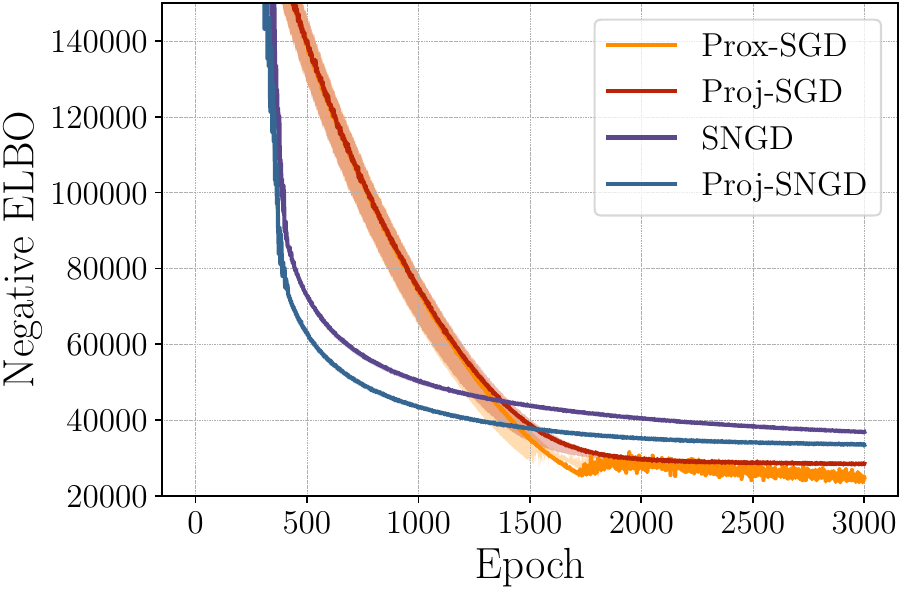}
    }\hspace{2em}
    \subfloat{
        \includegraphics[width=0.36\textwidth]{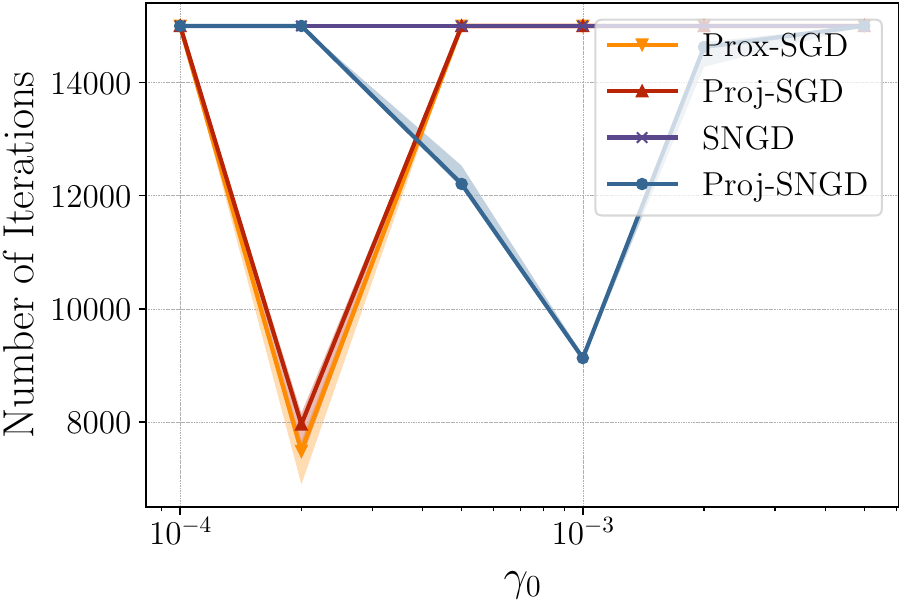}
    }
    \caption{Euclidean and non-Euclidean algorithms on CIFAR-10 dataset. Left: Objective during optimization with tuned step size.
    Right: Number of iterations before the objective falls below $\ell(\omega) \leq 35000$ for different initial step sizes $\gamma_0$. Euclidean algorithms are more robust to step size tuning and achieve faster convergence in the initial phase of optimization, but non-Euclidean algorithms exhibit faster convergence after 3000 epochs.}
    \label{fig:cifar}
    \end{figure}
    In this section, we compare the performance of Euclidean and non-Euclidean algorithms on Madelon and CIFAR-10 dataset. 
    \paragraph{Experiment on Madelon Dataset.} In this experiment, we compare non-Euclidean algorithms (\algname{Proj-SNGD} and \algname{SNGD}) with Euclidean algorithms on Madelon dataset. Details of implementation can be found in \cref{app:implementation details}.\par 
    We consider logistic regression on Madelon dataset ($n=2600$, $d=500$). We use mini-batches of size 2000 and set the step size $\gamma_t=\gamma_0/\sqrt{t}$, where $\gamma_0$ is a hyperparameter to be tuned. We run the algorithm for 1000 epochs (2000 iterations). The results of 5 independent runs are shown in Figure \ref{fig:madelon}. Similar to the results of the MNIST experiment, we observe that \algname{Proj-SNGD} slightly outperforms \algname{SNGD}, and both non-Euclidean algorithms achieve faster convergence than Euclidean counterparts. Moreover, non-Euclidean algorithms, especially \algname{Proj-SNGD}, are more robust to step size and admit larger step sizes.
    \paragraph{Experiment on CIFAR-10 Dataset.} In this experiment, we consider logistic regression on a subset of CIFAR-10 dataset with pictures of cats and dogs ($n=10000$, $d=3072$). We use mini-batches of size 2000 and set the step size $\gamma_t=\gamma_0/\sqrt{t}$, where $\gamma_0$ is a hyperparameter to be tuned. We run the algorithm for 3000 epochs (15000 iterations). The results of 5 independent runs are shown in Figure \cref{fig:cifar}. In the left panel of \cref{fig:cifar}, we observe that non-Euclidean algorithms converge faster during the initial phase (before 1500 epochs). However, Euclidean algorithms reach the optimum faster overall. In the right panel of \cref{fig:cifar}. Notably, non-Euclidean algorithms can accommodate larger step sizes and are the most robust to step size tuning.
    
    \subsection{Choice of $U$ and $D$ in \algname{Proj-SNGD}}
    In this section, we examine the effect of projection in the \algname{Proj-SNGD} algorithm.
    \paragraph{MNIST Dataset.} We compare four different settings: no projection ($U=\infty,D=\infty$), projection with $U=4,D=20$, $U=3,D=15$ and $U=2, D=10$. Smaller values of $U$ and $D$ yield a smaller smoothness coefficient and a larger hidden convexity coefficient, leading to stronger theoretical guarantees.
    \paragraph{Madelon Dataset.} For this dataset, we consider the following four different settings: no projection, projection with $U=3,D=400$, $U=2, D=300$ and $U=1.5, D=200$.
    \paragraph{CIFAR-10 Dataset.} For CIFAR-10 dataset, we adopt the same settings as those used for the MNIST dataset: no projection, projection with $U=4,D=20$, $U=3,D=15$ and $U=2, D=10$.\par 
    \begin{figure}
        \centering
    \subfloat{
        \includegraphics[width=0.36\textwidth]{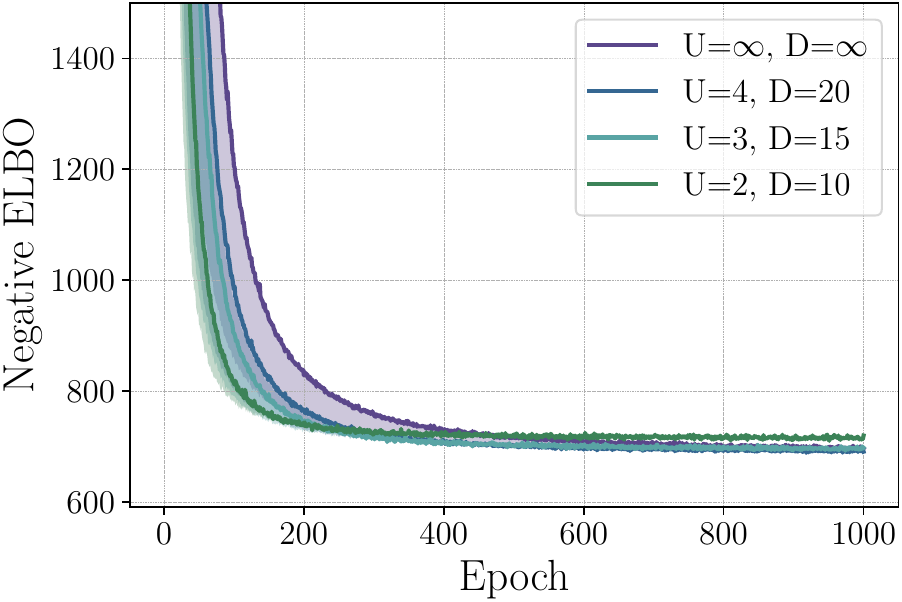}
    }\hspace{2em}
    \subfloat{
        \includegraphics[width=0.36\textwidth]{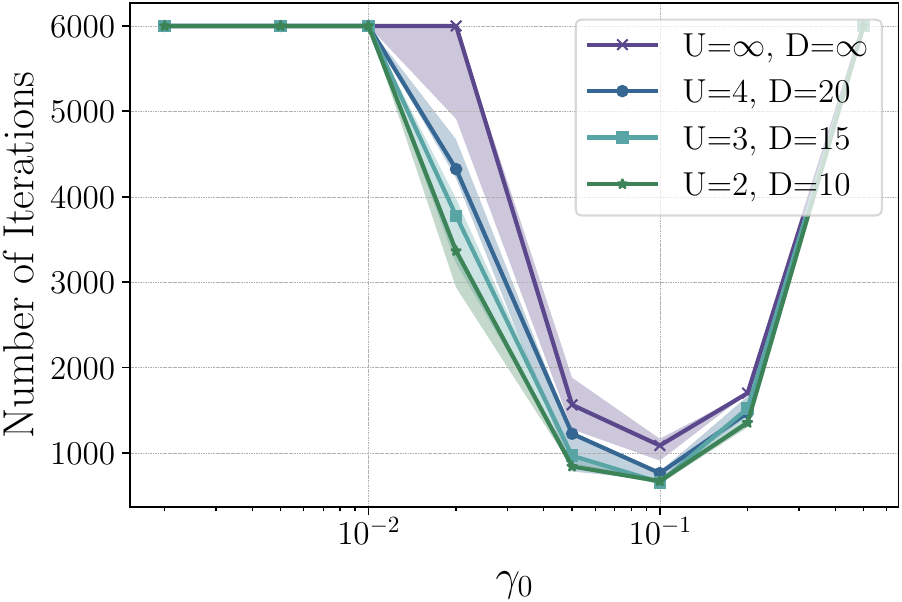}
    }
    \caption{Convergence of non-Euclidean algorithms on MNIST dataset. Left: Objective during optimization with $\gamma_0=0.05$. Right: Number of iterations before the objective falls below $\ell(\omega) \leq 700$ for different initial step sizes $\gamma_0$.}
    \label{fig:mnist 2}
    \end{figure}
    \begin{figure}
        \centering
    \subfloat{
        \includegraphics[width=0.36\textwidth]{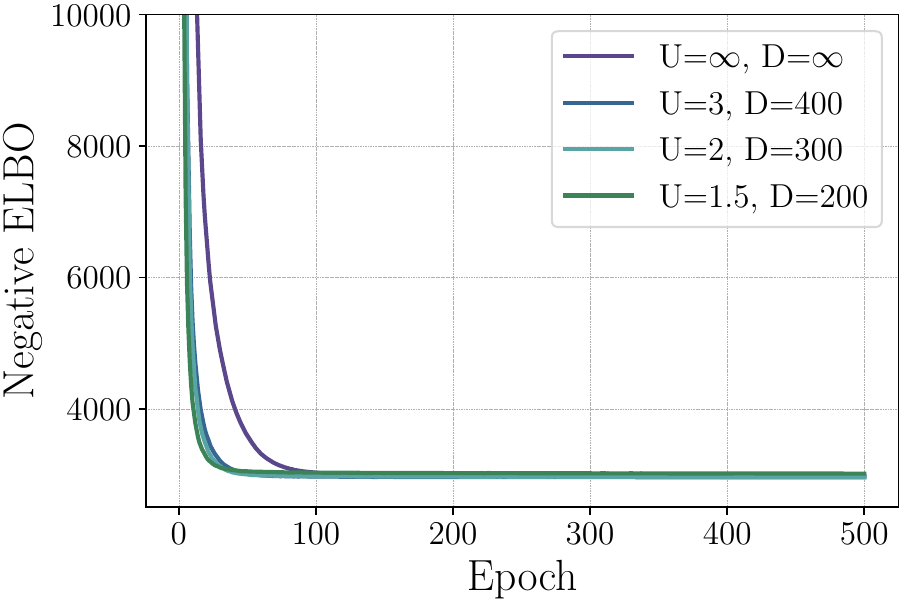}
    }\hspace{2em}
    \subfloat{
        \includegraphics[width=0.36\textwidth]{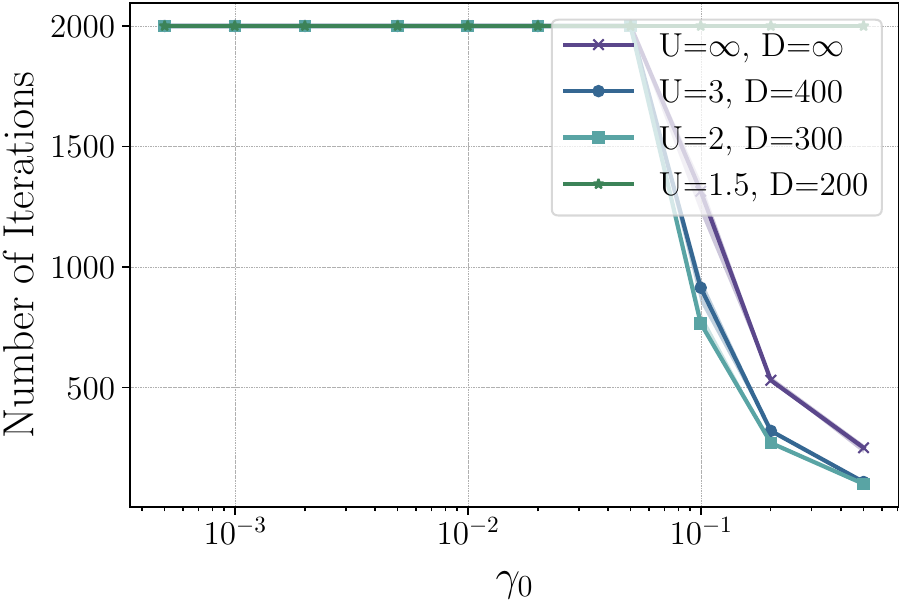}
    }
    \caption{Convergence of non-Euclidean algorithms on Madelon dataset. Left: Objective during optimization with $\gamma_0=0.5$. Right: Number of iterations before the objective falls below $\ell(\omega) \leq 3000$ for different initial step sizes $\gamma_0$.}
    \label{fig:madelon 2}
    \end{figure}
    \begin{figure}
        \centering
    \subfloat{
        \includegraphics[width=0.36\textwidth]{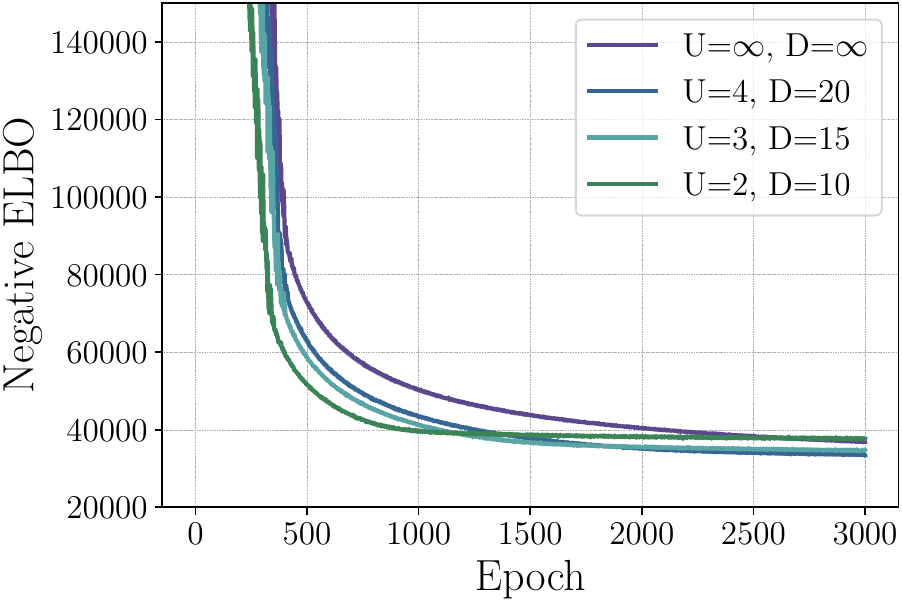}
    }\hspace{2em}
    \subfloat{
        \includegraphics[width=0.36\textwidth]{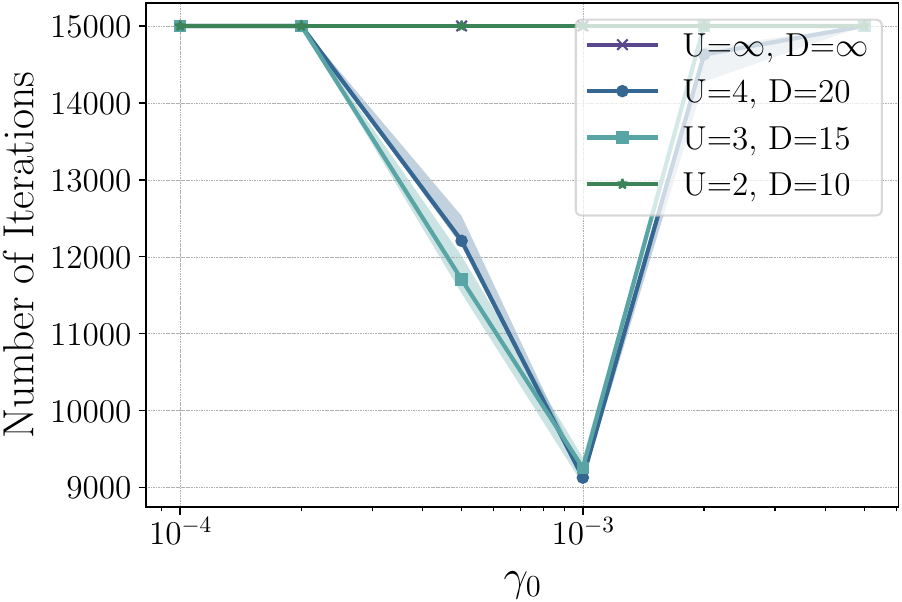}
    }
    \caption{Convergence of non-Euclidean algorithms on CIFAR-10 dataset. Left: Objective during optimization with $\gamma_0=10^{-3}$. Right: Number of iterations before the objective falls below $\ell(\omega) \leq 35000$ for different initial step sizes $\gamma_0$.}
    \label{fig:cifar 2}
    \end{figure}
    As shown in the left panel of \cref{fig:mnist 2}, we note that projection can accelerate convergence. However, when $U$ and $D$ are too small (e.g., $U=2$ and $D=10$), \algname{Proj-SNGD} may converge to a sub-optimal solution. The right panel of \cref{fig:mnist 2} confirms that smaller $U$ and $D$ lead to faster convergence. Therefore, a moderate choice of $U$ and $D$ provides both strong theoretical guarantees and favorable empirical performance. Similar phenomena can also be observed in \cref{fig:madelon 2,fig:cifar 2}, where projection with suitable choices of $U$ and $D$ can accelerate and stabilize the training process. If $U$ and $D$ are too small, the optimal solution may fall outside the search space (see right panel of \cref{fig:madelon 2,fig:cifar 2}), and \algname{Proj-SNGD} will converge to a suboptimal solution. If they are too large, the algorithm will not benefit from projection (see \algname{SNGD} without projection in \cref{fig:mnist 2,fig:madelon 2,fig:cifar 2}).\par 
    The slow convergence of \algname{SNGD} without projection is due to the poor tolerance for large step sizes, which causes divergence in the initial phase. In contrast, projection stabilizes this in the initial phase and accelerates convergence (see the Poisson regression example in \cref{fig:poisson}). This effect is consistently observed across 3 different datasets. Moreover, this effect is consistent with our theoretical upper bounds for the relative smoothness parameter. This is because our upper bound on $\ell$ depends on the diameter of the compact set, indicating that relative smoothness may not hold globally, which leads to divergence behavior in the initial phase.

    \subsection{Implementation Details}\label{app:implementation details}
    In all experiments involving \algname{Proj-SGD}, we set $M=D$ (see \cref{alg:proj sgd} for the definition of $M$) for a fair comparison with \algname{Proj-SNGD}. We use $N=2000$ samples from variational distribution $q$ to estimate the expected log-likelihood (see the definition of stochastic gradient estimator \eqref{eq:stochastic gradient formula}).\par
    In the left panel of \cref{fig:mnist} (MNIST dataset), we set $\gamma_0=0.05$ for \algname{Proj-SNGD} and \algname{SNGD}, and $\gamma_0=0.01$ for \algname{Prox-SGD} and \algname{Proj-SGD}. In the left panel of \cref{fig:madelon} (Madelon dataset), we set $\gamma_0=0.5$ for non-Euclidean algorithms, and $\gamma_0=0.01$ for Euclidean algorithms. In the left panel of \cref{fig:cifar} (CIFAR-10 dataset), we set $\gamma_0=10^{-3}$ for non-Euclidean algorithms, and $\gamma_0=2\times 10^{-4}$ for Euclidean algorithms.\par
    All experiments, except for CIFAR-10, are conducted on an Apple M3 Pro CPU. On the MNIST dataset, training for 1000 epochs takes approximately 5 minutes, and on the Madelon dataset it takes about 1 minute. The CIFAR-10 experiment is run on an NVIDIA GeForce RTX 3090 GPU, requiring roughly 8 minutes for 3000 epochs.

\end{document}